\long\def\ignorethis#1{}
\newcommand{\whitetxt}[1]{{\color{white}#1}\normalfont}
\newbox\jsavebox
\newcommand{\jsubfig}[2]{%
	\sbox\jsavebox{#1}%
	\parbox[t]{\wd\jsavebox}{\centering\usebox\jsavebox\\#2}%
	}
\newcommand{\score}{g_\theta}
\newcommand{\pdata}{P_S}
\newcommand{\qdata}{Q_{\sigma,S}}
\newcommand{\qnoise}{q_{\sigma}}
\newcommand{\etal}{\textit{et al.}}
\begin{document}
\pagestyle{headings}
\mainmatter
\def\ECCVSubNumber{462}  

\title{Learning Gradient Fields for Shape Generation} 

\titlerunning{Learning Gradient Fields for Shape Generation}
\makeatletter
\newcommand{\printfnsymbol}[1]{%
  \textsuperscript{\@fnsymbol{#1}}%
}
\makeatother
\author{
Ruojin Cai\thanks{Equal contribution.} \and
Guandao Yang\printfnsymbol{1}\and 
Hadar Averbuch-Elor \and
Zekun Hao \and \\
Serge Belongie \and
Noah Snavely \and
Bharath Hariharan
}

\authorrunning{Cai et al.}

\institute{
Cornell University
}
\maketitle
\begin{figure}
    \begin{center}
        \includegraphics[width=\linewidth, trim={3cm 2.5cm 3cm 1cm}, clip]{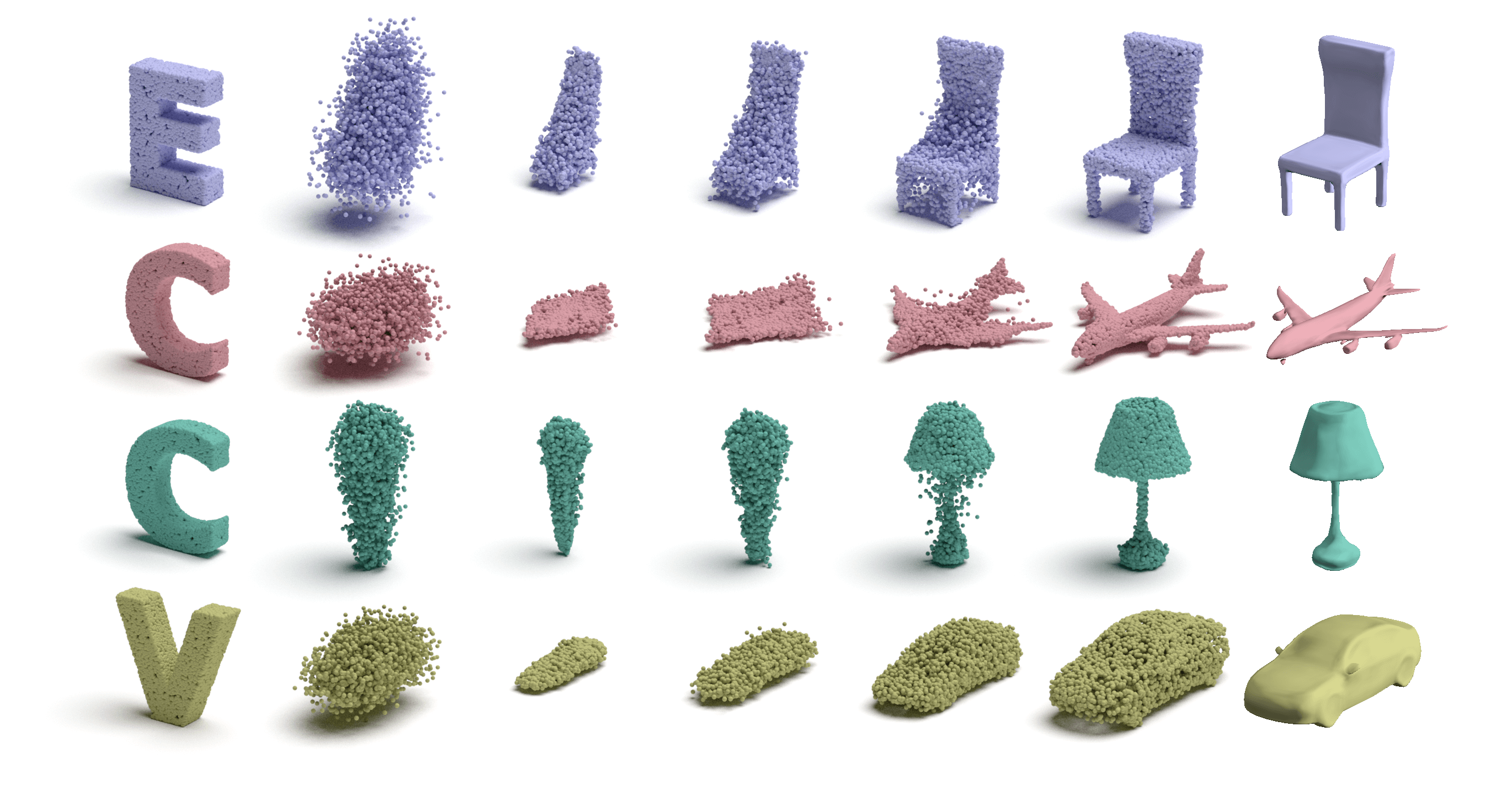}
    \end{center}
    \caption{To generate shapes, we sample points from an arbitrary prior (depicting the letters ``E", ``C", ``C", ``V" in the examples above) and move them stochastically along a learned gradient field, ultimately reaching the shape's surface. Our learned fields also enable extracting the surface of the shape, as demonstrated on the right. 
    }
    \label{fig:teaser}
\end{figure}
\vspace{-2em}
\begin{abstract}
In this work, we propose a novel technique to generate shapes from point cloud data.
A point cloud can be viewed as samples from a distribution of 3D points whose density is concentrated near the surface of the shape.
Point cloud generation thus amounts to moving randomly sampled points to high-density areas.
We generate point clouds by performing stochastic gradient ascent on an unnormalized probability density, thereby moving sampled points toward the high-likelihood regions.
Our model directly predicts the gradient of the log density field and can be trained with a simple objective adapted from score-based generative models.
We show that our method can reach state-of-the-art performance for point cloud auto-encoding and generation, while also allowing for extraction of a high-quality implicit surface.
Code is available at \url{https://github.com/RuojinCai/ShapeGF}.
\keywords{3D generation, generative models}
\end{abstract}

\section{Introduction}\label{sec:intro}

Point clouds are becoming increasingly popular for modeling shapes, as many modern 3D scanning devices process and output point clouds.
As such, an increasing number of applications rely on the recognition, manipulation, and synthesis of point clouds.
For example, an autonomous vehicle might need to detect cars in sparse LiDAR point clouds.
An augmented reality application might need to scan in the environment.
Artists may want to further manipulate scanned objects to create new objects and designs.
A \emph{prior} for point clouds would be useful for these applications as it can densify LiDAR clouds, create additional training data for recognition, complete scanned objects or synthesize new ones.
Such a prior requires a powerful generative model for point clouds.

In this work, we are interested in learning a generative model that can sample shapes represented as point clouds.
A key challenge here is that point clouds are sets of arbitrary size.
Prior work often generates a fixed number of points instead~\cite{achlioptas_L3DP,gadelha2018multiresolution,3D-AAE,shu20193d,pointsetgen}.
This number, however, may be insufficient for some applications and shapes, or too computationally expensive for others.
Instead, following recent works~\cite{li2018point,yang2019pointflow,sun2018pointgrow}, we consider a point cloud as a set of \emph{samples} from an underlying distribution of 3D points.
This new perspective not only allows one to generate an arbitrary number of points from a shape, but also makes it possible to model shapes with varying topologies.
However, it is not clear how to best parameterize such a distribution of points, and how to learn it using \emph{only} a limited number of sampled points.

Prior research has explored modeling the distribution of points that represent the shape using generative adversarial networks (GANs)~\cite{li2018point}, flow-based models~\cite{yang2019pointflow}, and autoregressive models~\cite{sun2018pointgrow}. 
While substantial progress has been made, these methods have some inherent limitations for modeling the distribution representing a 3D shape. 
The training procedure can be unstable for GANs or prohibitively slow for invertible models, while autoregressive models assume an ordering, restricting their flexibility for point cloud generation. 
Implicit representations such as DeepSDF~\cite{park2019deepsdf} and OccupancyNet~\cite{mescheder2019occupancy} can be viewed as modeling this probability density of the 3D points directly, but these models require ground truth signed distance fields or occupancy fields, which are difficult to obtain from point cloud data alone without corresponding meshes.

In this paper, we take a different approach and focus on the end goal -- being able to draw an arbitrary number of samples from the distribution of points.
Working backward from this goal, we observe that the sampling procedure can be viewed as moving points from a generic prior distribution towards high likelihood regions of the shape (i.e., the surface of the shape).
One way to achieve that is to move points gradually, following the gradient direction, which indicates where the density grows the most~\cite{welling2011bayesian}.
To perform such sampling, one only needs to model the gradient of log-density (known as the \textit{Stein score function}~\cite{liu2016kernelized}).
In this paper, we propose to model a shape by learning the gradient field of its log-density.
To learn such a gradient field from a set of sampled points from the shape, we build upon a \textit{denoising score matching} framework~\cite{hyvarinen2005estimation,song2019generative}.
Once we learn a model that outputs the gradient field, the sampling procedure can be done using a variant of stochastic gradient ascent (i.e. \emph{Langevin dynamics}~\cite{welling2011bayesian,song2019generative}).

Our method offers several advantages.
First, our model is trained using a simple $L_2$ loss between the predicted and a ``ground-truth'' gradient field estimated from the input point cloud.
This objective is much simpler to optimize than adversarial losses used in GAN-based techniques.
Second, because it models the gradient directly and does not need to produce a normalized distribution, it imposes minimal restrictions on the model architecture in comparison to flow-based or autoregressive models.
This allows us to leverage more expressive networks to model complicated distributions.
Because the partition function need not be estimated, our model is also much \emph{faster} to train.
Finally, our model is able to furnish an implicit surface of the shape, as shown in Figure~\ref{fig:teaser}, without requiring ground truth surfaces during training.
We demonstrate that our technique can achieve state-of-the-art performance in both point cloud auto-encoding and generation. 
Moreover, our method can retain the same performance when trained with much sparser point clouds.

Our key contributions can be summarized as follows:

\begin{itemize}
    \item We propose a novel point cloud generation method by extending score-based generative models to learn conditional distributions. 
    \item We propose a novel algorithm to extract high-quality implicit surfaces from the learned model without the supervision from ground truth meshes.
    \item We show that our model can achieve state-of-the-art performance for point cloud auto-encoding and generation.
\end{itemize}

\section{Related work}
\paragraph{Point cloud generative modeling.}
Point clouds are widely used for representing and generating 3D shapes due to their simplicity and direct relation to common data acquisition techniques (LiDARs, depth cameras, etc.). 
Earlier generative models either treat point clouds as a fixed-dimensional matrix (i.e. $N\times3$ where $N$ is predefined)~\cite{achlioptas_L3DP,gadelha2018multiresolution,zamorski2018adversarial,sun2018pointgrow,3D-AAE,pointsetgen,shu20193d,valsesia2018learning}, or relies on heuristic set distance functions such as Chamfer distance and Earth Mover Distance~\cite{atlasnet,foldingnet,mrt18,deprelle2019learning,ben2018multi}.
As pointed out in Yang~\etal~\cite{yang2019pointflow} and Section~\ref{sec:intro}, both of these approaches lead to several drawbacks.
Alternatively, we can model the point cloud as samples from a distribution of 3D points.
Toward this end, Sun~\etal~\cite{sun2018pointgrow} applies an autoregressive model to model the distribution of points, but it requires assuming an ordering while generating points.
Li~\etal~\cite{li2018point} applies a GAN~\cite{WGAN,iWGAN} on both this distribution of 3D points as well as the distribution of shapes.
PointFlow~\cite{yang2019pointflow} applies normalizing flow~\cite{papamakarios2019normalizing} to model such distribution, so sampling points amounts to moving them to the surface according to a learned vector field.
In addition to modeling the movement of points, PointFlow also tracks the change of volume in order to normalize the learned distribution, which is computationally expensive~\cite{neuralODE}.
While our work applies a GAN to learn the distribution of latent code similar to Li~\etal~and Achilioptas~\etal, we take a different approach to model the distribution of 3D points.
Specifically, we predict the gradient of log-density field to model the non-normalized probability density, thus circumventing the need to compute the partition function and achieves faster training time with a simple L2 loss.

\paragraph{Generating other 3D representations.}
Common representations emerged for deep generative 3D modeling include voxel-based \cite{girdhar2016learning,wu2016learning}, mesh-based~\cite{anguelov2005scape,pons2015dyna,gao2019sdm,hanocka2019meshcnn,litany2018deformable,tan2018variational}, and assembly-based techniques~\cite{li2017grass,mo2019structurenet}.
Recently, implicit representations are gaining increasing popularity, as they are capable of representing shapes with high level of detail~\cite{park2019deepsdf,mescheder2019occupancy,chen2019learning,michalkiewicz2019deep}. 
They also allow for learning a structured decomposition of shapes, representing local regions with Gaussian functions \cite{genova2019deep,genova2019learning} or other primitives \cite{tulsiani2017learning,smirnov2019deep,hao2020dualsdf}.   
In order to reconstruct the mesh surface from the learned implicit field, these methods require finding the zero iso-surface of the learned occupancy field (e.g. using the Marching Cubes algorithm \cite{lorensen1987marching}). 
Our learned gradient field also allows for high-quality surface reconstruction using similar methods. 
However, we do not require prior information on the shape (e.g., signed distance values) for training, which typically requires a watertight input mesh.
Recently, SAL~\cite{atzmon2020sal} learns a signed distance field using only point cloud as supervision.
Different from SAL, our model directly outputs the gradients of the log-density instead field of the signed distance, which allows our model to use arbitrary network architecture without any constraints.
As a result, our method can scale to more difficult settings such as train on larger dataset (e.g. ShapeNet~\cite{shapenet}) or train with sparse scanned point clouds.

\paragraph{Energy-based modeling.}
In contrast to flow-based models \cite{NormalizingFlow,Nice,GLOW,neuralODE,ffjord,yang2019pointflow} and auto-regressive models \cite{sun2018pointgrow,pixelCNN,PixelRNN,oord2016wavenet}, 
energy-based models learn a non-normalized probability distribution~\cite{lecun2006tutorial}, thus avoid computation to estimate the partition function. 
It has been successfully applied to tasks such as image segmentation \cite{fan2007mcmc,fan2008mcmc}, where a normalized probability density function is hard to define.
Score matching was first proposed for modeling energy-based models in \cite{hyvarinen2005estimation} and 
deals with ``matching" the model and the observed data log-density gradients, by minimizing the squared distance between them. 
To improve its performance and scalability, various extensions have been proposed, including denoising score matching \cite{vincent2011connection} and sliced score matching \cite{song2019sliced}. 
Most recently, Song and Ermon \cite{song2019generative} introduced data perturbation and annealed Langevin dynamics to the original denoising score matching method, providing an effective way to model data embedded on a low dimensional manifold. 
Their method was applied to the image generation task, achieving performance comparable to GANs. In this work, we extend this method to model conditional distributions and demonstrate its suitability to the task of point cloud generation, viewing point clouds as samples from the 2D manifold (shape surface) in 3D space.

\section{Method}\label{sec:method}

In this work, we are interested in learning a generative model that can sample shapes represented as point clouds.
Therefore, we need to model two distributions.
First, we need to model the distribution of shapes, which encode how shapes vary across an entire collection of shapes.
Once we can sample a particular shape of interest, then we need a mechanism to sample a point clouds from its surface.
As previously discussed, a point cloud is best viewed as samples from a distribution of 3D (or 2D) points, which encode a particular shape.
To sample point clouds of arbitrary size for this shape, we also need to model this distribution of points.

Specifically, we assume a set of shapes $\mathcal{X} = \{X^{(i)}\}^N_{i=1}$ are provided as input.
Each shape in $\mathcal{X}$ is represented as a point cloud sampled from its surface, defined by $X^{(i)} = \{x_j^{i}\}_{j=1}^{M_i}$.
Our goal is to learn both the distribution of shapes and the distribution of points, conditioned on a particular shape from the data.
To achieve that, we first propose a model to learn the distribution of points encoding a shape from a set of points $X^{(i)}$
(Section~\ref{sec:single-shape-prob} - \ref{sec:implicit-surface}).
Then we describe how to model the distribution of shapes from the set of point clouds (i.e. $\mathcal{X}$) in Section~\ref{sec:multiple-shapes}.

\subsection{Shapes as a distribution of 3D points}\label{sec:single-shape-prob}

We would like to define a distribution of 3D points $P(x)$ such that sampling from this distribution will provide us with a surface point cloud of the object.
Thus, the probability density encoding the shape should concentrate on the shape surface.
Let $S$ be the set of points on the surface and $\pdata(x)$ be the uniform distribution over the surface.
Sampling from $\pdata(x)$ will create a point cloud uniformly sampled from the surface of interest.
However, this distribution is hard to work with: for all points that are not in the surface $x\notin S$, $\pdata(x)=0$. As a result, $\pdata(x)$ is discontinuous and has usually zero support over its ambient space (i.e. $\mathbb{R}^3$), which impose challenges in learning and modeling.
Instead, we approximate $\pdata(x)$ by smoothing the distribution with a Gaussian kernel:
\begin{equation}
    \qdata(x) = \int_{s \in \mathbb{R}^3} \pdata(s)\mathcal{N}(x; s, \sigma^2I)ds .\label{eq:shape_density}
\end{equation}
As long as the standard deviation $\sigma$ is small enough, $\qdata(x)$ will approximate the true data distribution $\pdata(x)$ whose density concentrates near the surface.
Therefore, sampling from $\qdata(x)$ will yield points near the surface $S$.

As discussed in Section~\ref{sec:intro}, instead of modeling $\qdata$ directly, we will model the gradient of the logarithmic density (i.e. $\nabla_x \log{\qdata(x)}$).
Sampling can then be performed by starting from a prior distribution and performing gradient ascent on this field, thus moving points to high probability regions.

In particular, we will model the gradient of the log-density using a neural network $\score(x, \sigma)$, where $x$ is a location in 3D (or 2D) space.
We will first analyze several properties of this gradient field $\nabla_x \log{\qdata(x)}$.
Then we describe how we train this neural network and how we sample points using the trained network.

\subsection{Analyzing the gradient field}\label{sec:interepting}

In this section we provide an interpretation of how $\nabla_x \log{\qdata(x)}$ behaves with different $\sigma$'s.
Computing a Monte Carlo approximation of $\qdata(x)$ using a set of observations $\{x_i\}_{i=1}^m$, we obtain a mixture of Gaussians with modes centered at $x_i$ and radially-symmetric kernels:
\begin{align*}
    \qdata(x) &= \mathbb{E}_{s \sim \pdata}\left[\mathcal{N}(x; s, \sigma^2I)\right] 
    \approx \frac{1}{m}\sum_{i=1}^m \mathcal{N}(x; x_i, \sigma^2I) \triangleq A_\sigma(x, \{x_i\}_{i=1}^m).
\end{align*}
The gradient field can thus be approximated by the gradient of the logarithmic of this Gaussian mixture:
\begin{equation}
    \nabla_x \log{A_\sigma(x, \{x_i\}_{i=1}^m)} = \frac{1}{\sigma^2} \left(-x + \sum_{i=1}^m  x_i w_{i}(x, \sigma)\right),
\end{equation}
where the weight $w_{ij}(x, \sigma)$ is computed from a softmax with temperature $2\sigma^2$:
\begin{equation}
    w_{i}(x, \sigma) = \frac{\exp{\left(-\frac{1}{2\sigma^2}\|x - x_i\|^2\right)}}{\sum_{j=1}^m \exp{\left(-\frac{1}{2\sigma^2}\|x - x_j\|^2\right)}}. \label{eq:weight_ij}
\end{equation}

Since $\sum_i w_i(x, \sigma) = 1$, $\sum_i x_i w_i(x, \sigma)$ falls within the convex hull created by the sampled surface points $\{x_i\}_{i=1}^m$.
Therefore, the direction of this gradient of the logarithmic density field points from the sampled location towards a point inside the convex hull of the shape.
When the temperature is high (i.e. $\sigma$ is large), then the weights $w_{i}(x, \sigma)$ will be roughly the same and $\sum_i x_i w_i(x,\sigma)$ behaves like averaging all the $x_i$'s.
Therefore, the gradient field will point to a coarse shape that resembles an average of the surface points.
When the temperature is low (i.e. $\sigma$ is small), then $w_i(x,\sigma)$ will be close to $0$ except when $x_i$ is the closest to $x$.
As a result, $\sum_i x_i w_i(x,\sigma)$ will behave like an $\operatorname{argmin}_{x_i} \|x_i - x\|$.
The gradient direction will thus point to the nearest point on the surface.
In this case, the norm of the gradient field approximates a distance field of the surface up to a constant $\sigma^{-2}$.
This allows the gradient field to encode fine details of the shape and move points to the shape surface more precisely.
Figure~\ref{fig:inference_mnist_samples} shows a visualization of the field in the 2D case for a series of different $\sigma$'s.

\subsection{Training objective}\label{sec:loss}

As mentioned in Section~\ref{sec:single-shape-prob}, we would like to train a deep neural network $\score(x, \sigma)$ to model the gradient of log-density: $\nabla_x \log{\qdata(x)}$.
One simple objective achieving this is minimizing the L2 loss between them~\cite{hyvarinen2005estimation}:
\begin{equation}
    \ell_\mathsf{direct}(\sigma, S) =  
    \mathbb{E}_{x \sim \qdata(x)}
    \left[\frac{1}{2}\left\|
    \score(x,\sigma) - \nabla_x \log{\qdata(x)}
    \right\|_2^2
    \right].\label{eq:loss-direct}
\end{equation}
However, optimizing such an objective is difficult as it is generally hard to compute $\nabla_x \log \qdata(x)$ from a finite set of observations.

Inspired by \textit{denoising score matching} methods~\cite{vincent2011connection,song2019generative}, we can write $\qdata(x)$ as a perturbation of the data distribution $\pdata(x)$, produced with a Gaussian noise with standard deviation $\sigma$.
Specifically, $\qdata (x) = \int \pdata(s) \qnoise(\tilde{x}|s) dx$, where $\qnoise(\tilde{x}|s)=\mathcal{N}(\tilde{x}|s,\sigma^2I)$.
As such, optimizing the objective in Equation~\ref{eq:loss-direct} can be shown to be equivalent to optimizing the following~\cite{vincent2011connection}:
\begin{align}
    \ell_\mathsf{denoising}(\sigma,S) 
    = \mathbb{E}_{s\sim \pdata,\tilde{x}\sim \qnoise(\tilde{x}|s)}
    \left[\frac{1}{2} \|
    \score(\tilde{x}, \sigma) - \nabla_{\tilde{x}} \log{\qnoise(\tilde{x}|s)}
    \|_2^2\right]. \label{eq:loss}
\end{align}
Since $\nabla_{\tilde{x}} \log{\qnoise(\tilde{x}|s)} = \frac{s - \tilde{x}}{\sigma^2}$, this loss can be easily computed using the observed point cloud $X=\{x_j\}_{j=1}^m$ as following:
\begin{equation}
    \ell(\sigma, X) = 
    \frac{1}{|X|}\sum_{x_i \in X} 
    \|\score(\tilde{x_i}, \sigma) - \frac{x_i-\tilde{x}_i}{\sigma^2}\|_2^2,
    \quad \tilde{x_i} \sim \mathcal{N}(x_i, \sigma^2I).
\label{eq:lossfun}
\end{equation}

\noindent\textbf{Multiple noise levels.}
One problem with the abovementioned objective is that most $\tilde{x_i}$ will concentrate near the surface if $\sigma$ is small.
Thus, points far away from the surface will not be supervised. 
This can adversely affect the sampling quality, especially when the prior distribution puts points to be far away from the surface.
To alleviate this issue, we follow Song and Ermon~\cite{song2019generative} and train $\score$ for multiple $\sigma$'s, with $\sigma_1 \geq \dots \geq \sigma_k$.
Our final model is trained by jointly optimizing $\ell(\sigma_i, X)$ for all $\sigma_i$.
The final objective is computed empirically as:
\begin{equation}
    \mathcal{L}(\{\sigma_i\}_{i=1}^k, X) = \sum_{i=1}^k \lambda(\sigma_i) \ell(\sigma_i, X),
\end{equation}
where $\lambda(\sigma_i)$ are parameters weighing the losses $\ell(\sigma_i, X)$.
$\lambda(\sigma_i)$ is chosen so that the weighted losses roughly have the same magnitude during training.

\subsection{Point cloud sampling}\label{sec:sample-point-clouds}
Sampling a point cloud from the distribution is equivalent to moving points from a prior distribution to the surface (i.e. the high-density region).
Therefore, we can perform stochastic gradient ascent on the logarithmic density field.
Since $\score(x, \sigma)$ approximates the gradient of the log-density field (i.e. $\nabla_x \log{\qdata(x)}$), we could thus use $\score(x, \sigma)$ to update the point location $x$.
In order for the points to reach all the local maxima, we also need to inject random noise into this process.
This amounts to using Langevin dynamics to perform sampling~\cite{welling2011bayesian}.

Specifically, we first sample a point $x_0$ from a prior distribution $\pi$.
The prior is usually chosen to be simple distribution such as a uniform or a Gaussian distribution.
We empirically demonstrate that the sampling performance won't be affected as long as the prior points are sampled from places where the perturbed points would reach during training.
We then perform the following recursive update with step size $\alpha > 0$:
\begin{equation}
    x_{t+1} = x_t + \frac{\alpha}{2} \score(x_t, \sigma) + \sqrt{\alpha} \epsilon_t,\quad \epsilon_t \sim \mathcal{N}(0, I).\label{eq:langevin-dynamic}
\end{equation}
Under mild conditions, $p(x_T)$ converges to the data distribution $\qdata(x)$ as $T \to \infty$ and $\epsilon \to 0$~\cite{welling2011bayesian}.
Even when such conditions fail to hold, the error in Equation~\ref{eq:langevin-dynamic} is usually negligible when $\alpha$ is small and $T$ is large~\cite{song2019generative,chen2014stochastic,du2019implicit,nijkamp2019anatomy}.

\begin{figure}[t]
\begin{center}
\includegraphics[width=\textwidth]{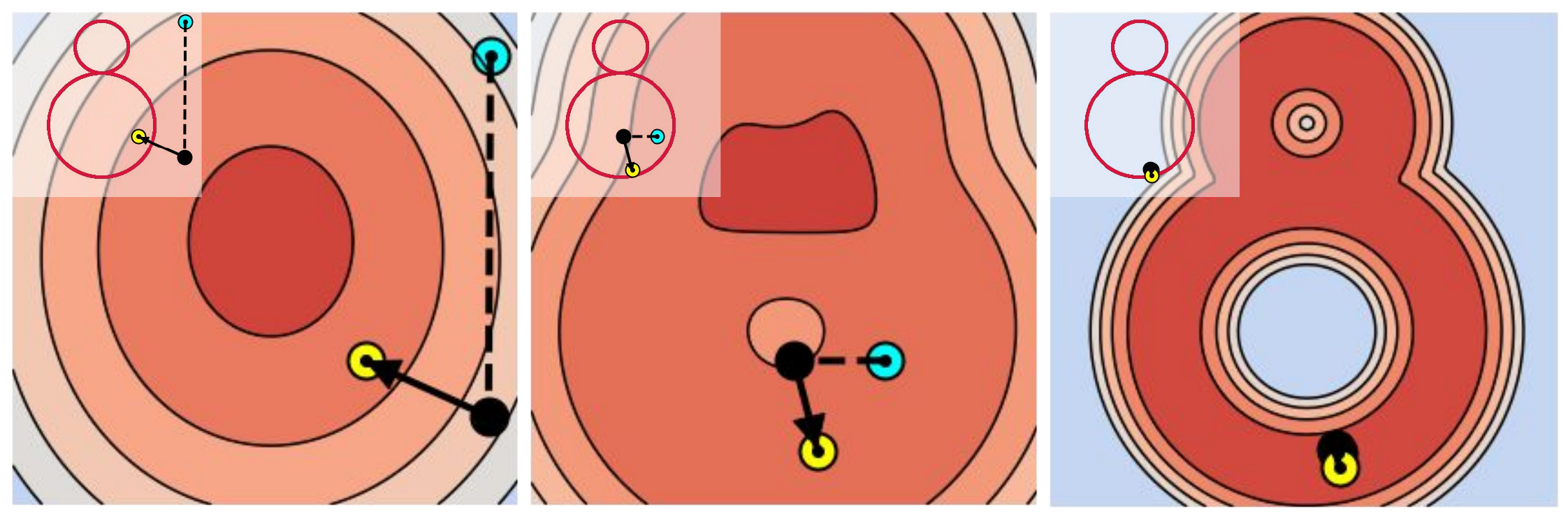}
\end{center}
    \caption{
    Log density field with different $\sigma$ (biggest to smallest) and a Langevin Dynamic point update step with that $\sigma$.
    Deeper color indicates higher density. 
    The ground truth shape is shown in the upper left corner. 
    Dotted line indicated Gaussian noise and solid arrows indicates gradient step.
    As sigma decreases, the log-density field changes from coarse to fine, and points are moved closer to the surface.
    }
    \label{fig:inference_mnist_samples}
\end{figure}

Prior works have observed that a main challenge for using Langevin dynamics is its slow mixing time~\cite{song2019generative,wenliang2018learning}.
To alleviate this issue, Song and Ermon~\cite{song2019generative} propose an annealed version of Langevin dynamics, which gradually anneals the noise for the score function.
Specifically, we first define a list of $\sigma_i$ with $\sigma_1 \geq \dots \geq \sigma_k$, then train one single denoising score matching model that could approximate $q_{\sigma_i}$ for all $i$.
Then, annealed Langevin dynamics will recursively compute the $x_t$ while gradually decreasing $\sigma_i$:
\begin{align}
    x_{t+1}' &= x_t + \frac{\sqrt{\alpha}\sigma_i \epsilon_t}{\sigma_k}, \quad \epsilon_t \sim \mathcal{N}(0, I), \\
    x_{t+1} &= x_{t+1}' + \frac{\alpha\sigma_i^2}{2\sigma_k^2} \score(x_{t+1}', \sigma_i). ~\label{eq:annealed-langevin-dynamic}
\end{align}
Figure~\ref{fig:inference_mnist_samples} demonstrates the sampling across the annealing process in a 2D point cloud.
As discussed in Section~\ref{sec:loss}, larger $\sigma$'s correspond to coarse shapes while smaller $\sigma$'s correspond to fine shape. Thus, this annealed Langevin dynamics can be thought of as a coarse-to-fine refinement of the shape.
Note that we make the noise perturnbation step before the gradient update step, which leads to cleaner point clouds.
The supplementary material contains detailed hyperparameters.

\subsection{Implicit surface extraction} \label{sec:implicit-surface}
Next we show that our learned gradient field (e.g. $\score$) also allows for obtaining an implicit surface.  
The key insight here is that the surface is defined as the set of points that reach the maximum density in the data distribution $\pdata(x)$, and thus these points have \textit{zero} gradient.
Another interpretation is that when $\sigma$ is small enough (i.e. $\qdata(x)$ approximates the true data distribution $p(x)$), the gradient for points near the surface will point to its nearest point on the surface, as described in Section~\ref{sec:interepting}:
\begin{equation}
    \score(x, \sigma) \approx \frac{1}{\sigma^2}\left(-x + \operatorname{argmin}_{s\in S}\|x-s\|\right).
    \label{eq:approx}
\end{equation}
Thus, for a point near the surface, its norm equals \textit{zero} if and only if $x \in S$ (provided the $\arg\min$ is unique). 
Therefore, the shape can be approximated by the zero iso-surface of the gradient norm:
\begin{equation}
    S \approx \{x~|~\|\score(x, \sigma)\| = \delta\},
\end{equation}
for some $\delta > 0$ that is sufficiently small. One caveat is that points for which the $\arg\min$ in Equation~\ref{eq:approx} is not unique may also have a zero gradient.
These correspond to \emph{local minimas} of the likelihood. In practice, this is seldom a problem for surface extraction, and it is possible to discard these regions by conducting the second partial derivative test.

Also as mentioned in Section \ref{sec:interepting}, when the $\sigma$ is small, the norm of the gradient field approximates a distance field of the surface, scaled by a constant $\sigma^{-2}$. This allows us to retrieval the surface $S$ efficiently using an off-the-shelf ray-casting technique~\cite{roth1982ray}
(see Figures \ref{fig:teaser},\ref{fig:recon},\ref{fig:gen}).

\subsection{Generating multiple shapes} \label{sec:multiple-shapes}
\begin{figure}[t]
\begin{center}
	\includegraphics[width=\textwidth]{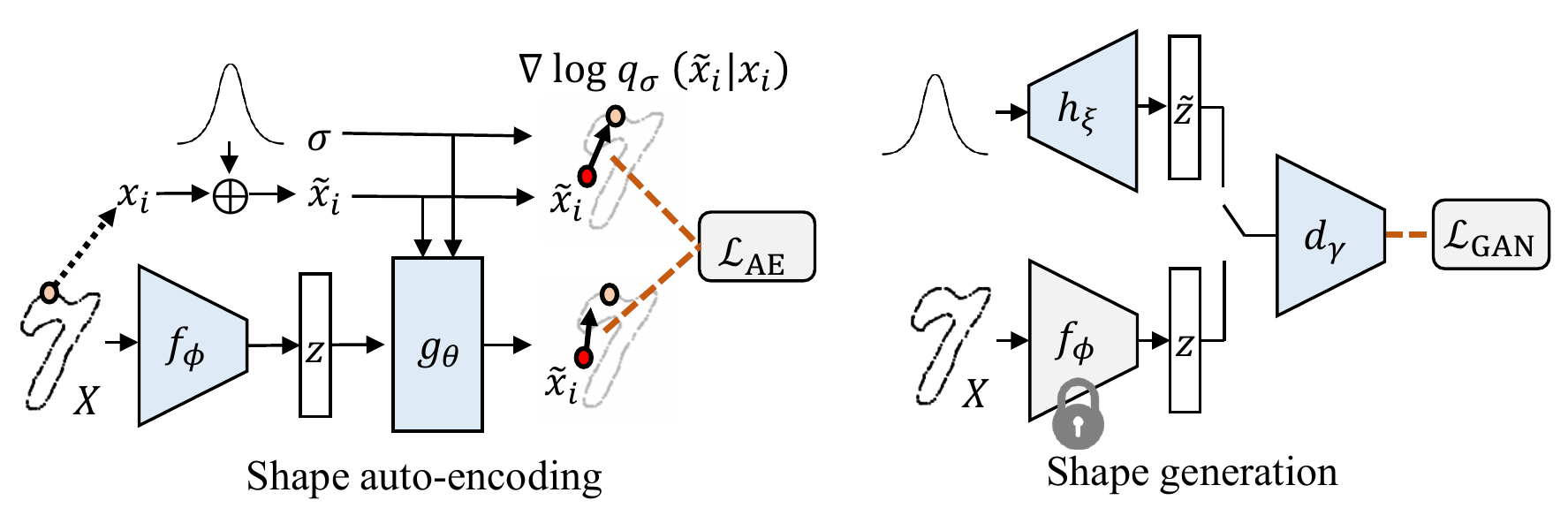}
\end{center}
\caption{
Illustration of training pipe for shape auto-encoding and generation.
 }
\label{fig:overview}
\end{figure}
In the previous sections, we focused on learning the distribution of points that represent a single shape. Our next goal is to model the distribution of shapes. 
We, therefore, introduce a latent code $z$ to encode which specific shape we want to sample point clouds from.
Furthermore, we adapt our gradient decoder to be conditional on the latent code $z$ (in addition to $\sigma$ and the sampled point).

As illustrated in Figure~\ref{fig:overview}, the training is conducted in two stages. 
We first train an auto-encoder with an encoder $f_\phi$ that takes a point cloud and outputs the latent code $z$.
The gradient decoder is provided with $z$ as input and produces a gradient field with noise level $\sigma$. 
The auto-encoding loss is thus:
\begin{equation}
    \mathcal{L}_{AE}(\mathcal{X}) = \mathop{\mathbb{E}}_{X\sim \mathcal{X}}
    \left[
    \frac{1}{2|X|}\sum_{x \in X, \sigma_i}\lambda(\sigma_i)
    \left\|\score(\tilde{x}, f_\phi(X), \sigma_i) - \frac{x - \tilde{x}}{\sigma_i^2}
    \right\|_2^2
    \right],
\end{equation}
where each $\tilde{x_j}$ is drawn from a $\mathcal{N}(x_j, \sigma_i^2I)$ for a corresponding $\sigma_i$.
This first stage provides us with a network that can model the distribution of points representing the shape encoded in the latent variable $z$.
Once the auto-encoder is fully trained, we apply a latent-GAN~\cite{achlioptas_L3DP} to learn the distribution of the latent code $p(z) = p(f_\phi(X))$, where $X$ is a point cloud sampled from the data distribution.
Doing so provides us with a generator $h_\xi$ that can sample a latent code from $p(z)$, allowing us control over which shape will be generated.
To sample a novel shape, we first sample a latent code $\tilde{z}$ using $h_\xi$.
We can then use the trained gradient decoder $\score$ to sample point clouds or extract an implicit surface from the shape represented as $z$.
For more details about hyperparameters and model architecture, please refer to the supplementary material.

\section{Experiments}

In this section, we will evaluate our model's performance in point cloud auto-encoding (Sec~\ref{sec:auto-encoding}), up-sampling (Sec~\ref{sec:robustness}), and generation (Sec~\ref{sec:shape-generaton}) tasks.
Finally, we present an ablation study examining our model design choices (Sec~\ref{sec:ablation}).
Implementation details will be shown in the supplementary materials.

\subsubsection{Datasets} 

Our experiments focus mainly on two datasets: MNIST-CP and ShapeNet. MNIST-CP was recently proposed by Yifan~\etal~\cite{yifan2019patch} and consists of 2D contour points extracted from the MNIST \cite{lecun2010mnist} dataset, which contains 50K and 10K training and testing images. 
Each point cloud contains 800 points.
The ShapeNet \cite{chang2015shapenet} dataset contains 35708 shapes in training set and 5158 shapes in test set, capturing 55 categories.
For our method, we normalize all point clouds by centering their bounding boxes to the origin and scaling them by a constant such that all points range within the cube $[-1, 1]^3$ (or the square in the 2D case).

\begin{figure}[t]
    \begin{center}
    \newcommand{\sizea}{0.1665\linewidth}
    \newcommand{\tal}{5.5cm}
    \newcommand{\tab}{4.5cm}
    \newcommand{\tar}{4.5cm}
    \newcommand{\tat}{4.8cm}
    \newcommand{\tcl}{2.0cm}
    \newcommand{\tcb}{2.2cm}
    \newcommand{\tcr}{3.0cm}
    \newcommand{\tct}{4cm}
    \newcommand{\thl}{2.0cm}
    \newcommand{\thb}{1cm}
    \newcommand{\thr}{2.0cm}
    \newcommand{\tht}{2cm}
    \setlength{\tabcolsep}{0pt}
    \renewcommand{\arraystretch}{0}
    \begin{tabular}{@{}ccc:ccc@{}}
        \includegraphics[width=\sizea, trim={\tal} {\tab} {\tar} {\tat},clip]{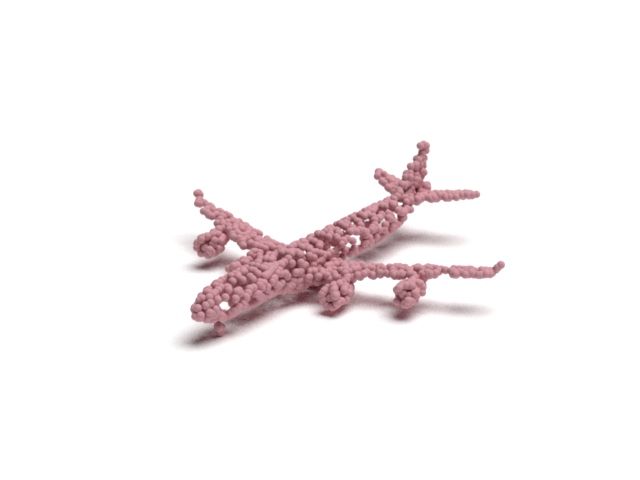} &
        \includegraphics[width=\sizea, trim={\tal} {\tab} {\tar} {\tat},clip]{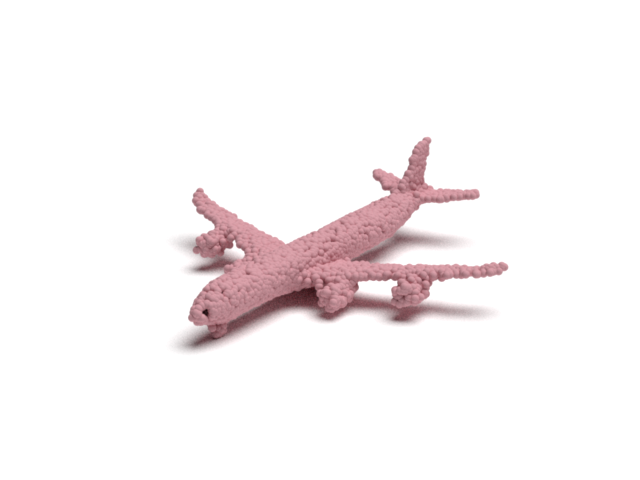} &
        \includegraphics[width=\sizea, trim={\tal} {\tab} {\tar} {\tat},clip]{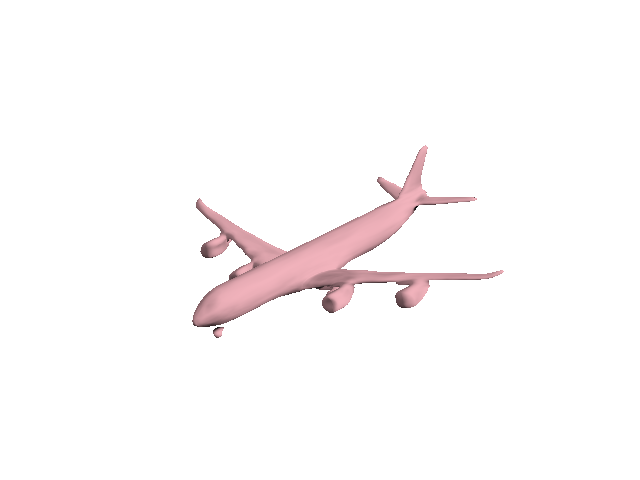} &
        \includegraphics[width=\sizea, trim={\tal} {\tab} {\tar} {\tat},clip]{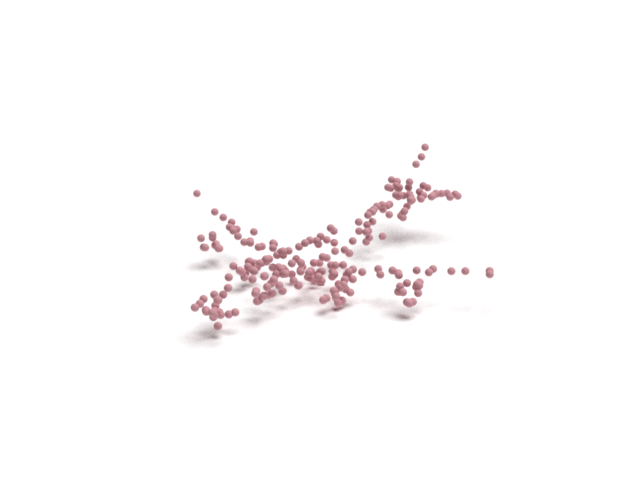} &
        \includegraphics[width=\sizea, trim={\tal} {\tab} {\tar} {\tat},clip]{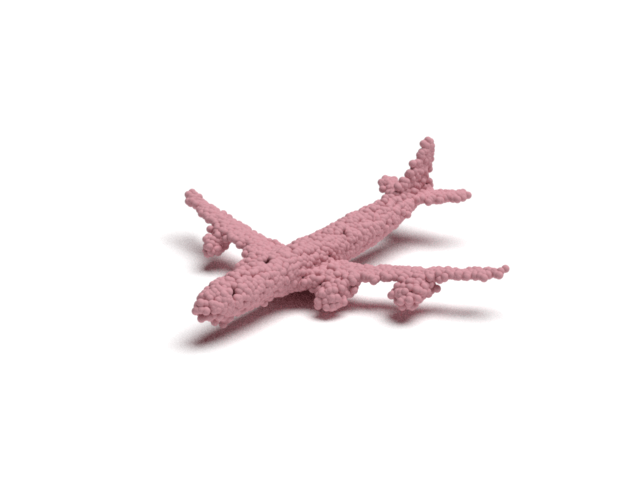} &
        \includegraphics[width=\sizea, trim={\tal} {\tab} {\tar} {\tat},clip]{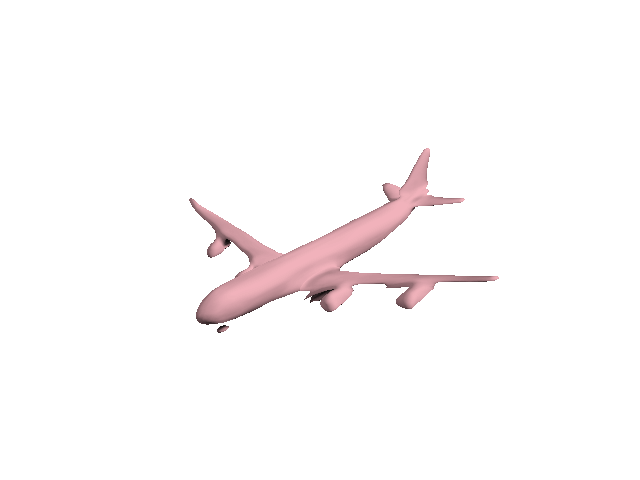} \\
        \includegraphics[width=\sizea, trim={\tal} {\tab} {\tar} {\tat},clip]{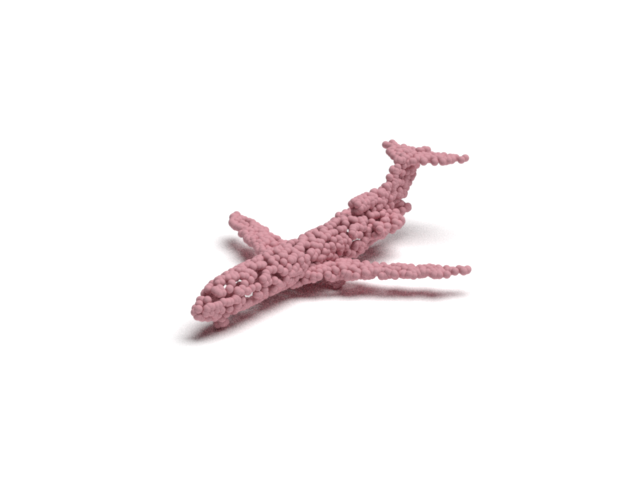} &
        \includegraphics[width=\sizea, trim={\tal} {\tab} {\tar} {\tat},clip]{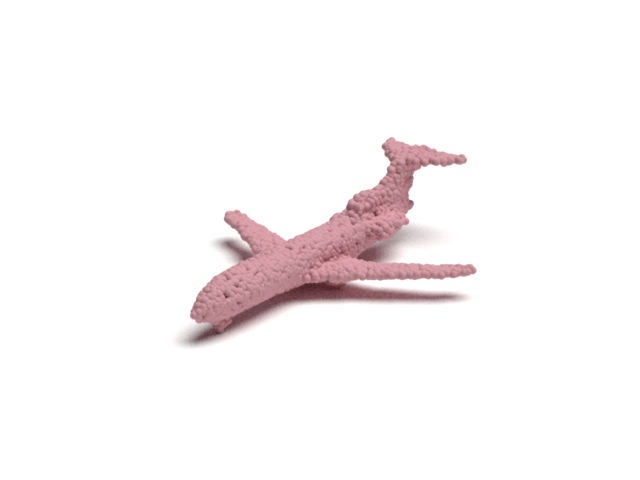} &
        \includegraphics[width=\sizea, trim={\tal} {\tab} {\tar} {\tat},clip]{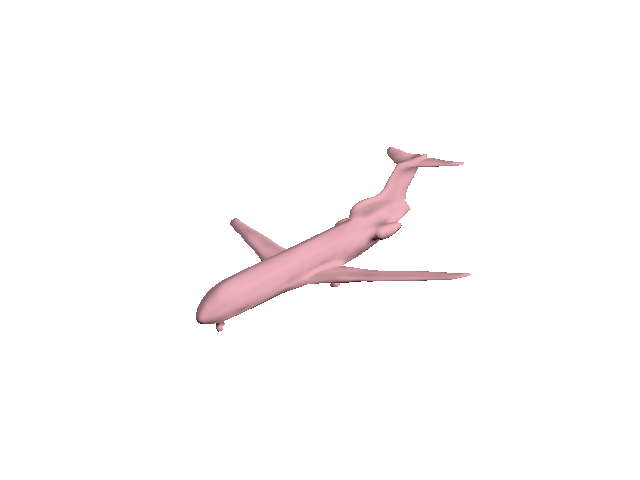} &
        \includegraphics[width=\sizea, trim={\tal} {\tab} {\tar} {\tat},clip]{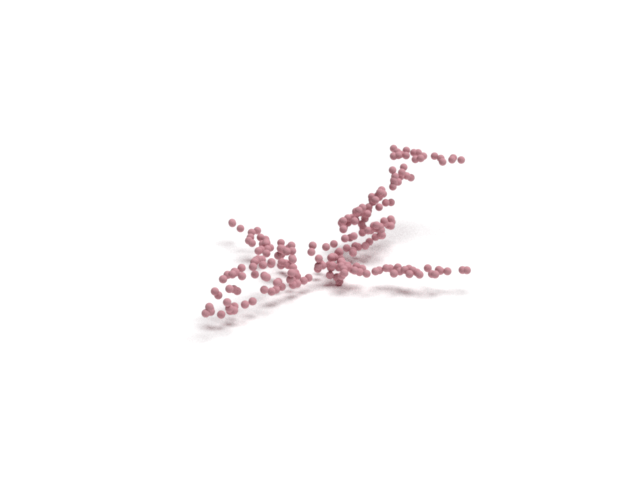} &
        \includegraphics[width=\sizea, trim={\tal} {\tab} {\tar} {\tat},clip]{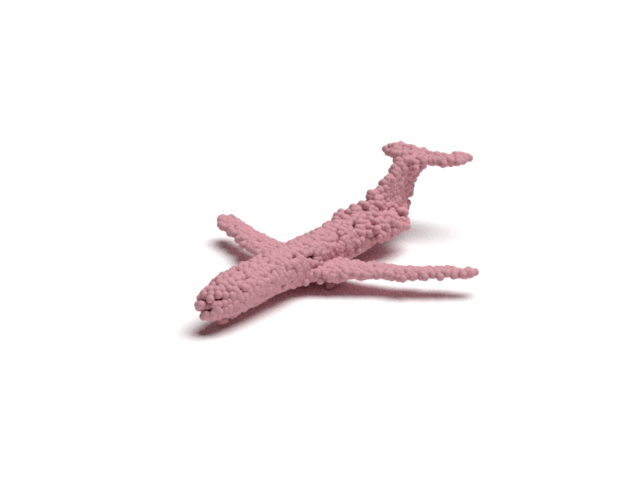} &
        \includegraphics[width=\sizea, trim={\tal} {\tab} {\tar} {\tat},clip]{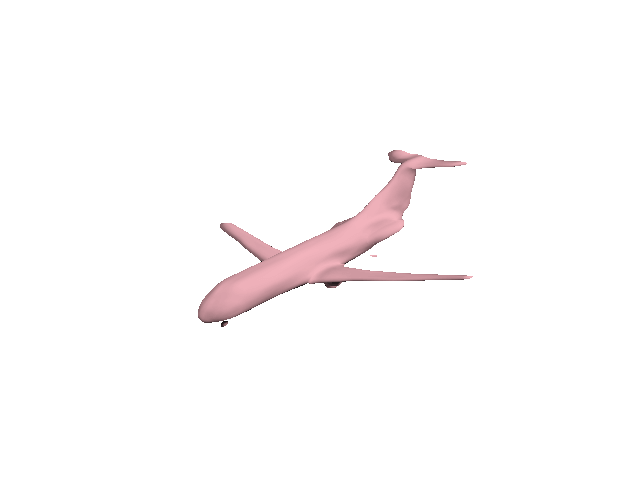} \\
        
        \includegraphics[width=\sizea, trim={\tcl} {\tcb} {\tcr} {\tct},clip]{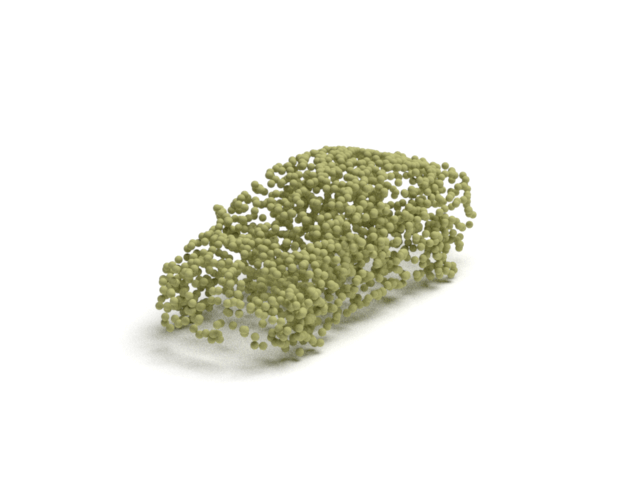} &
        \includegraphics[width=\sizea, trim={\tcl} {\tcb} {\tcr} {\tct},clip]{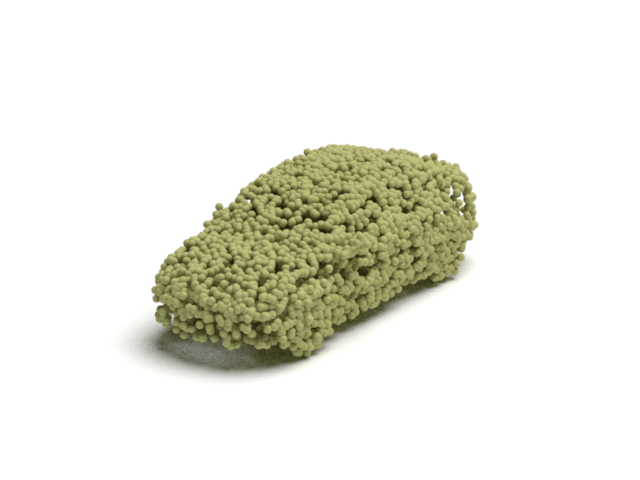} &
        \includegraphics[width=\sizea, trim={\tcl} {\tcb} {\tcr} {\tct},clip]{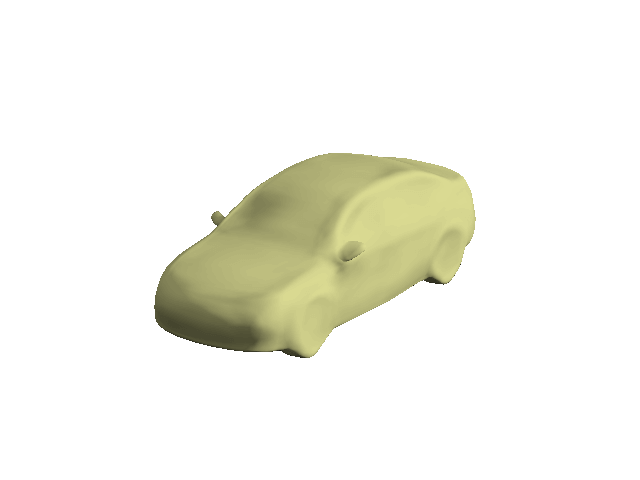} &
        \includegraphics[width=\sizea, trim={\tcl} {\tcb} {\tcr} {\tct},clip]{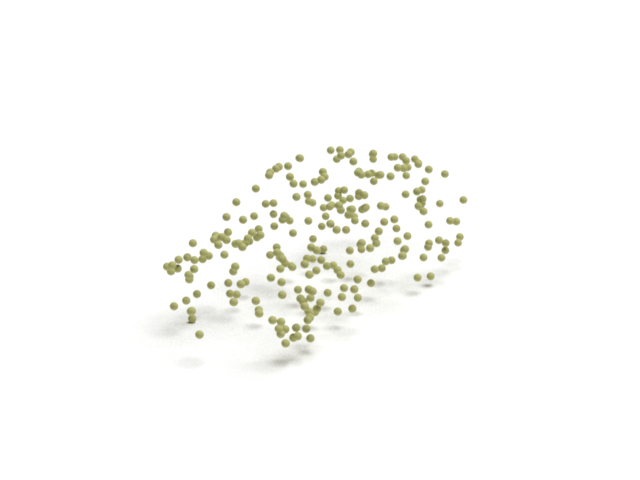} &
        \includegraphics[width=\sizea, trim={\tcl} {\tcb} {\tcr} {\tct},clip]{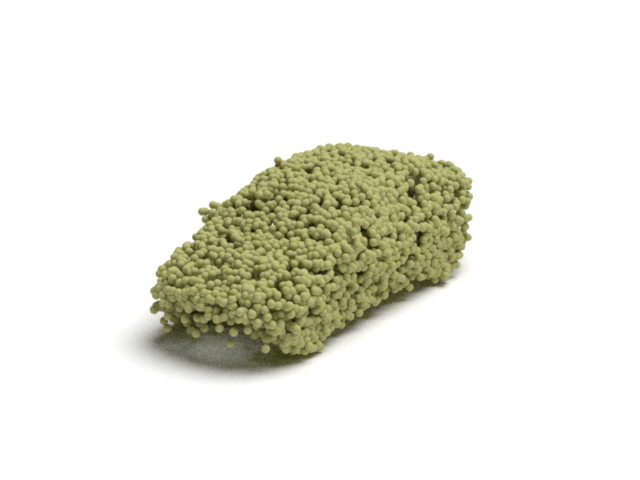} &
        \includegraphics[width=\sizea, trim={\tcl} {\tcb} {\tcr} {\tct},clip]{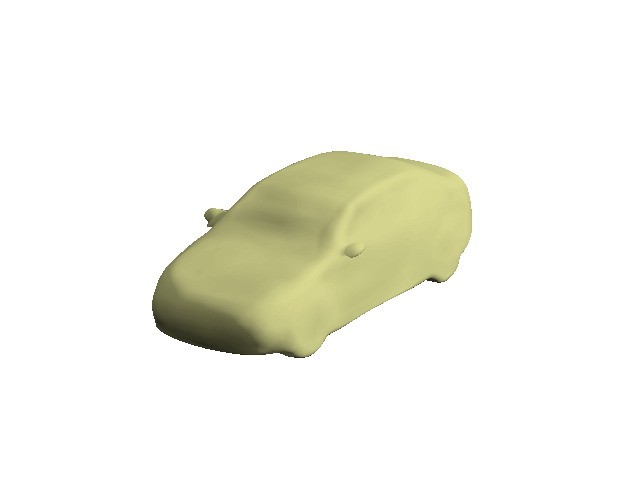} \\
        \includegraphics[width=\sizea, trim={\tcl} {\tcb} {\tcr} {\tct},clip]{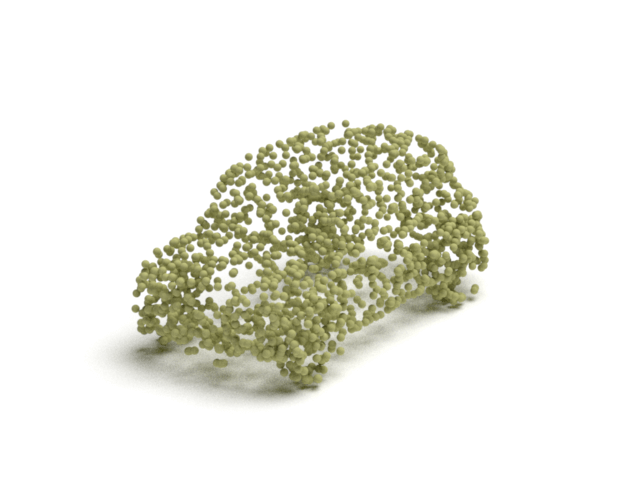} &
        \includegraphics[width=\sizea, trim={\tcl} {\tcb} {\tcr} {\tct},clip]{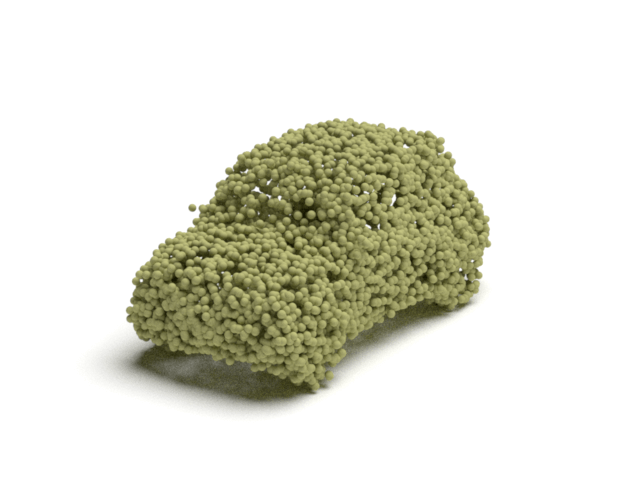} &
        \includegraphics[width=\sizea, trim={\tcl} {\tcb} {\tcr} {\tct},clip]{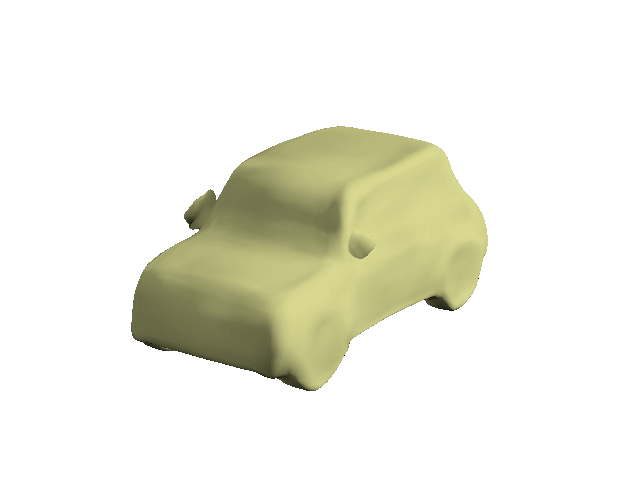} &
        \includegraphics[width=\sizea, trim={\tcl} {\tcb} {\tcr} {\tct},clip]{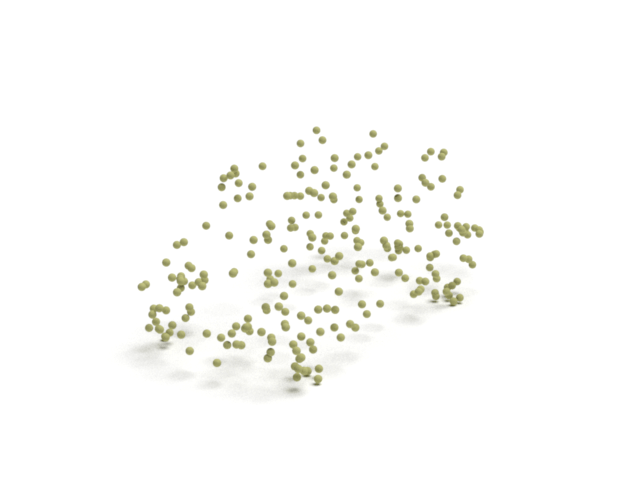} &
        \includegraphics[width=\sizea, trim={\tcl} {\tcb} {\tcr} {\tct},clip]{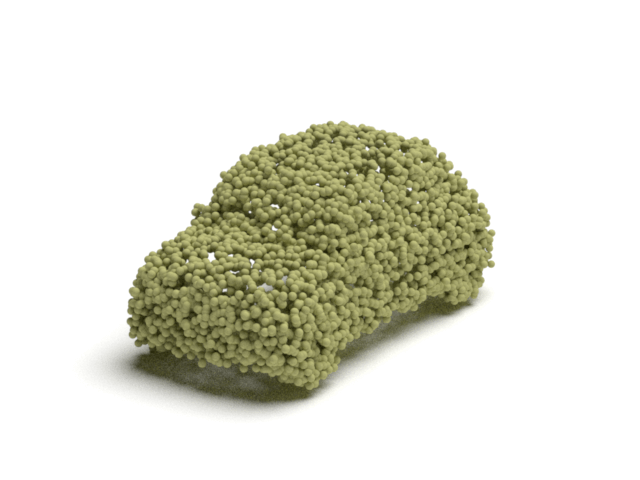} &
        \includegraphics[width=\sizea, trim={\tcl} {\tcb} {\tcr} {\tct},clip]{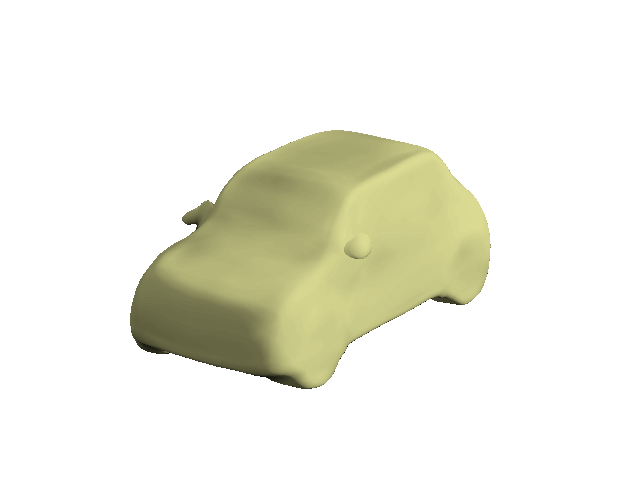} \\
        
        \includegraphics[width=\sizea, trim={\thl} {\thb} {\thr} {\tht},clip]{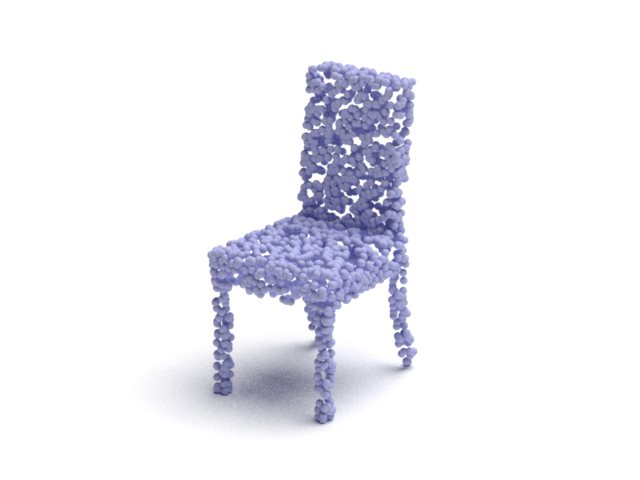} &
        \includegraphics[width=\sizea, trim={\thl} {\thb} {\thr} {\tht},clip]{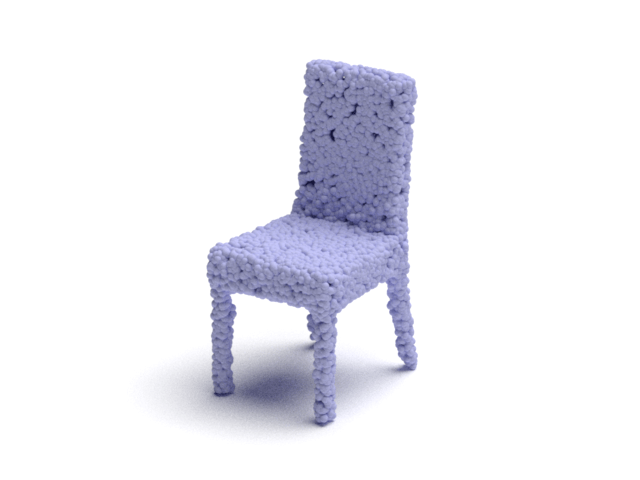} &
        \includegraphics[width=\sizea, trim={\thl} {\thb} {\thr} {\tht},clip]{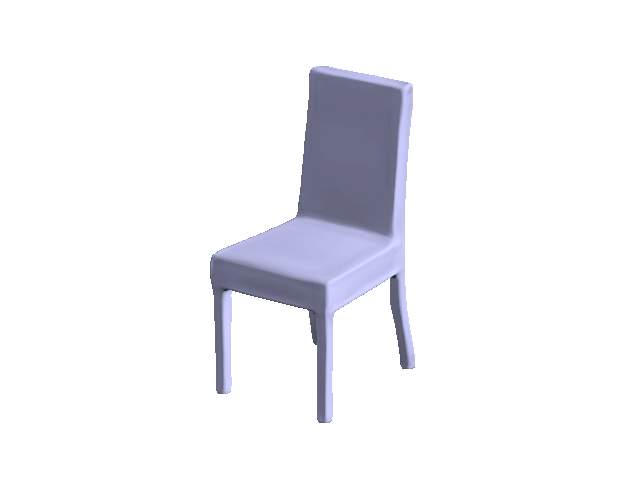} &
        \includegraphics[width=\sizea, trim={\thl} {\thb} {\thr} {\tht},clip]{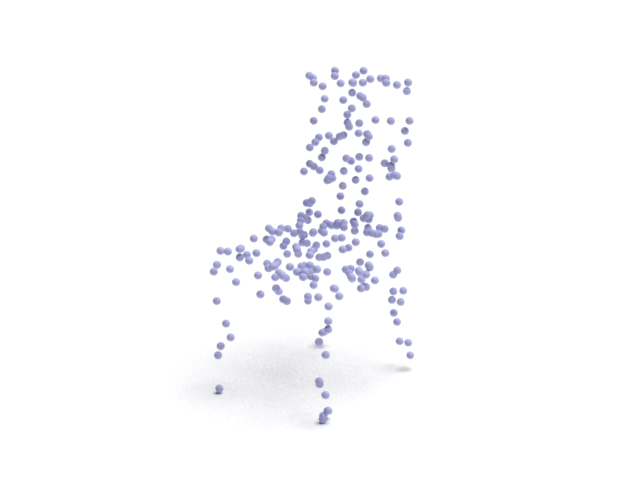} &
        \includegraphics[width=\sizea, trim={\thl} {\thb} {\thr} {\tht},clip]{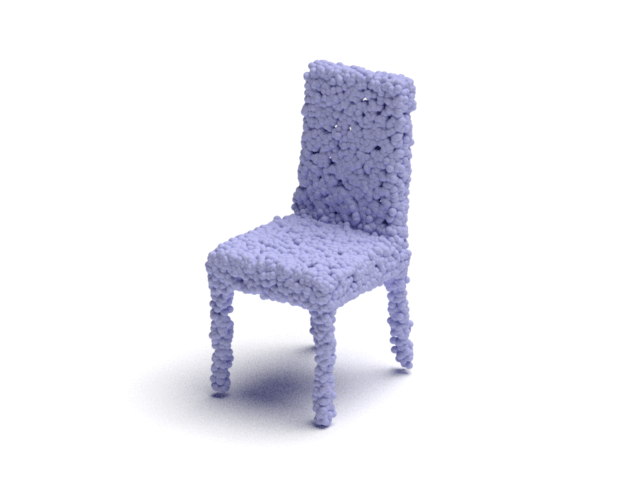} &
        \includegraphics[width=\sizea, trim={\thl} {\thb} {\thr} {\tht},clip]{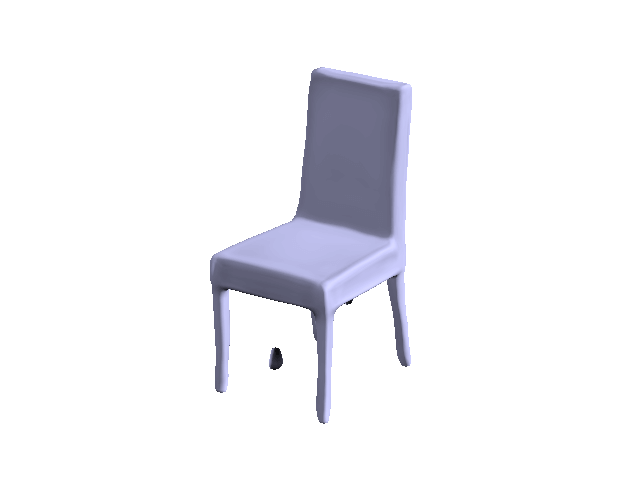} \\
        \includegraphics[width=\sizea, trim={\thl} {\thb} {\thr} {\tht},clip]{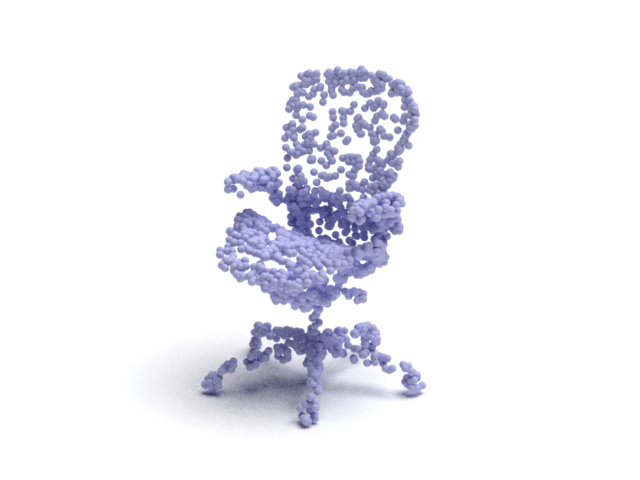} &
        \includegraphics[width=\sizea, trim={\thl} {\thb} {\thr} {\tht},clip]{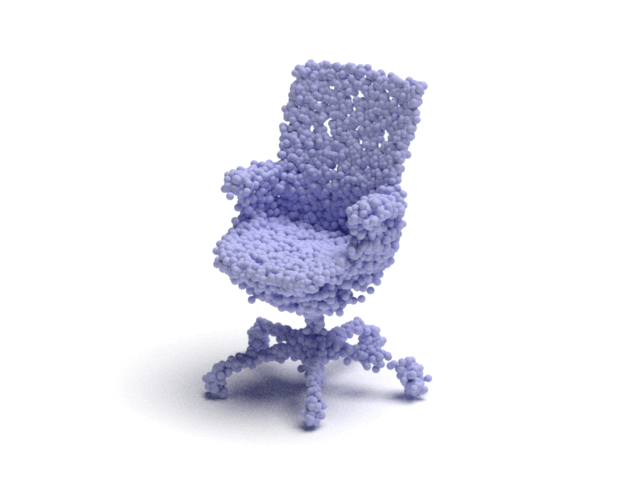} &
        \includegraphics[width=\sizea, trim={\thl} {\thb} {\thr} {\tht},clip]{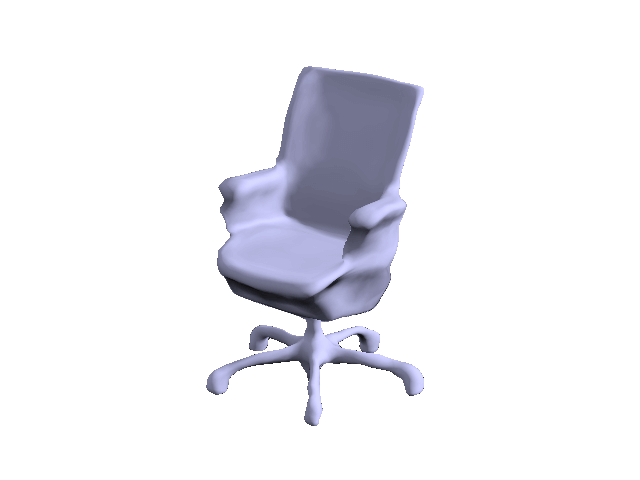} &
        \includegraphics[width=\sizea, trim={\thl} {\thb} {\thr} {\tht},clip]{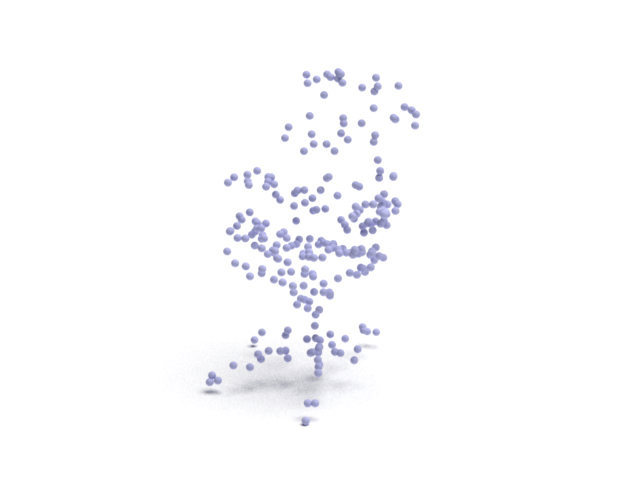} &
        \includegraphics[width=\sizea, trim={\thl} {\thb} {\thr} {\tht},clip]{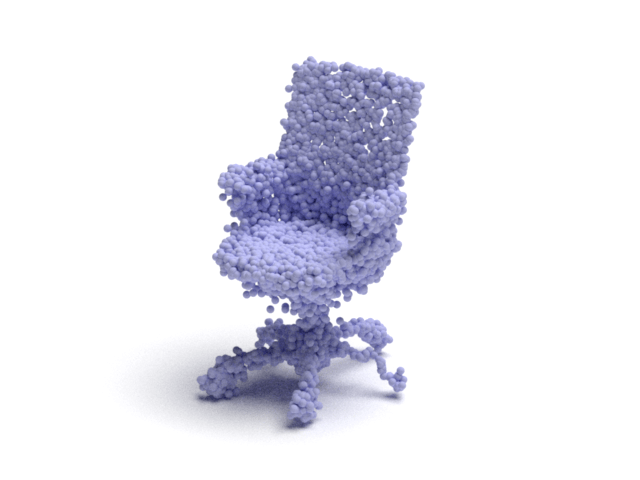} &
        \includegraphics[width=\sizea, trim={\thl} {\thb} {\thr} {\tht},clip]{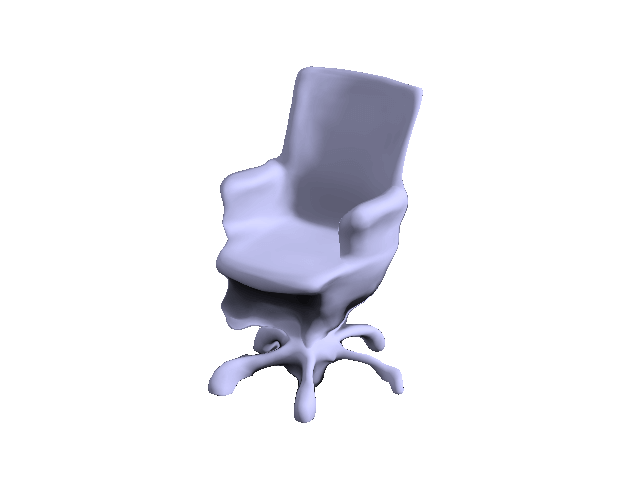} \\
    \end{tabular}

    \end{center}
    \caption{Shape auto-encoding test results. Our model can accurately reconstruct shapes given 2048 points (left) or only 256 points (right) describing the shape. Output point clouds are illustrated in the center and implicit surfaces on the left.
    }
    \label{fig:recon}
\end{figure}

\subsubsection{Evaluation metrics}
Following prior works~\cite{yang2019pointflow,atlasnet,achlioptas_L3DP}, we use the symmetric Chamfer Distance (CD) and the Earth Mover's Distance (EMD) to evaluate the reconstruction quality of the point clouds.
To evaluate the generation quality, we use metrics in Yang~\etal~\cite{yang2019pointflow} and Achlioptas~\etal~\cite{achlioptas_L3DP}.
Specifically, Achilioptas~\etal~\cite{achlioptas_L3DP} suggest using Minimum Matching Distance (MMD) to measure fidelity of the generated point cloud and Coverage (COV) to measure whether the set of generated samples cover all the modes of the data distribution.
Yang~\etal~\cite{yang2019pointflow} propose to use the accuracy of a $k$-NN classifier performing two-sample tests.
The idea is that if the sampled shapes seem to be drawn from the actual data distribution, then the classifier will perform like a random guess (i.e. results in $50\%$ accuracy).
To evaluate our results, we first conduct per-shape normalization to center the bounding box of the shape and scale its longest length to be 2, which allows the metric to focus on the geometry of the shape and not the scale.

\subsection{Shape auto-encoding}\label{sec:auto-encoding}
\newcolumntype{Y}{>{\centering\arraybackslash}X}
\newcolumntype{L}{>{\arraybackslash}X}
\begin{table*}[t]
\centering
\caption{Shape auto-encoding on the MNIST-CP and ShapeNet datasets. The best results are highlighted in bold. 
CD is multiplied by $10^4$ and EMD is multiplied by $10^2$.}
\label{tab:auto-encoding}
\begin{tabularx}{\textwidth}{l*{9}{Y}}
\toprule
                          &           & \multicolumn{2}{c}{l-GAN~\cite{achlioptas_L3DP}} & \multicolumn{2}{c}{AtlasNet~\cite{atlasnet}} & \multirow{2}{*}{PF~\cite{yang2019pointflow}} & \multirow{2}{*}{Ours} & \multirow{2}{*}{Oracle} \\ 
\cmidrule(lr){3-4} \cmidrule(lr){5-6}
Dataset                   & Metric    & CD          & EMD        & Sphere       & Patches       &                            &                       &                         \\ \midrule
\multirow{2}{*}{MNIST-CP} & CD      & 8.204     & -           & 7.274        & 4.926         & 17.894                     & \bf{2.669}                 & 1.012                   \\
                          & EMD     & 40.610    & -     & 19.920 & 15.970 & 8.705 & \textbf{7.341} & 4.875 \\
                         \cmidrule{1-9}
\multirow{2}{*}{Airplane} & CD      & 1.020     & 1.196     & 1.002     & 0.969     & 1.208     & \textbf{0.96}     & 0.837 \\
                          & EMD     & 4.089     & 2.577     & 2.672     & 2.612     & 2.757     & \textbf{2.562}    & 2.062 \\
                          \cmidrule{1-9}
\multirow{2}{*}{Chair}    & CD      & 9.279     & 11.21     & 6.564     & 6.693     & 10.120    & \textbf{5.599}    & 3.201 \\
                          & EMD     & 8.235     & 6.053     & 5.790     & 5.509     & 6.434     & \textbf{4.917}    & 3.297 \\
                          \cmidrule{1-9}
\multirow{2}{*}{Car}     & CD       & 5.802     & 6.486     & 5.392     & 5.441     & 6.531     & \textbf{5.328}    & 3.904 \\
                         & EMD      & 5.790     & 4.780     & 4.587     & 4.570     & 5.138     & \textbf{4.409}    & 3.251 \\
                          \cmidrule{1-9}
\multirow{2}{*}{ShapeNet}& CD       & 7.120     & 8.850     & 5.301     & \bf{5.121}& 7.551     & 5.154             & 3.031 \\
                         & EMD      & 7.950     & 5.260     & 5.553     & 5.493     & 5.176     & \textbf{4.603}    & 3.103
\\ \bottomrule
\end{tabularx}
\end{table*}
In this section, we evaluate how well our model can learn the underlying distribution of points by asking it to auto-encode a point cloud.
We conduct the auto-encoding task for five settings: all 2D point clouds in MNIST-CP, 3D point clouds on the whole ShapeNet, and three categories in ShapeNet (Airplane, Car, Chair).
In this experiment, our method is compared with the current state-of-the-art AtlasNet~\cite{atlasnet} with patches and with sphere.
Furthermore, we also compare against Achilioptas~\etal~\cite{achlioptas_L3DP} which predicts point clouds as a fixed-dimensional array, and PointFlow~\cite{yang2019pointflow} which uses a flow-based model to represent the distribution.
We follow the experiment set-up in PointFlow to report performance in both CD and EMD in Table~\ref{tab:auto-encoding}.
Since these two metrics depend on the scale of the point clouds, we also report the upper bound in the ``oracle" column.
The upper bound is produced by computing the error between two different point clouds with the same number of points sampled from the same underlying meshes.

Our method consistently outperforms all other methods on the EMD metric, which suggests that our point samples follow the distribution or they are more uniformly distributed among the surface.
Note that our method outperforms PointFlow in both CD and EMD for all datasets, but requires much less time to train.
Our training for the Airplane category can be completed in about less than 10 hours, yet reproducing the results for PointFlow's pretrained model takes at least two days.
Our method can even sometimes outperform Achilioptas~\etal and AtlasNet in CD, which is the loss they are directly optimizing at.
\begin{table*}[t]
\centering
\caption{Auto-encoding sparse point clouds. We randomly sample $N$ points from each shape (in the Airplane dataset). During training, the model is provided with $M$ points (the columns). CD is multiplied by $10^4$ and EMD is multiplied by $10^2$.
}
\label{tab:airplane-ablation}
\begin{tabularx}{\textwidth}{L*{5}{Y}l*{5}{Y}}
\toprule
    & \multicolumn{5}{c}{CD}                &  & \multicolumn{5}{c}{EMD}               \\ \cmidrule(lr){2-6} \cmidrule(l){8-12} 
N   & 2048  & 1024  & 512   & 256   & 128   &  & 2048  & 1024  & 512   & 256   & 128   \\ \midrule
$10K$ & 0.993 & 1.057 & 0.999 & 1.136 & 1.688 &  & 2.463 & 2.608 & 2.589 & 3.042 & 3.715 \\
$3K$  & 1.080 & 1.059 & 1.003 & 1.142 & 1.753 &  & 2.533 & 2.586 & 2.557 & 2.997 & 3.878 \\
$1K$  & -     & -     & 1.021 & 1.149 & 1.691 &  & -     & -     & 2.565 & 2.943 & 3.633 \\ \bottomrule
\end{tabularx}
\end{table*}

\subsubsection*{Point cloud upsampling}\label{sec:robustness}
We conduct a set of experiments on subsampled ShapeNet point clouds. 
These experiments are primarily focused on showing that (i) our model can learn from sparser datasets, and that (ii) we can infer a dense shape from a sparse input.
In the regular configuration (reported above), we learn from $N=10K$ points which are uniformly sampled from each shape mesh model. 
During training, we sample $M=2048$ points (from the $10K$ available in total) to be the input point cloud. 
To evaluate our model, we perform the Langevin dynamic procedure (described in Section \ref{sec:sample-point-clouds}) over $2048$ points sampled from the prior distribution and compare these to $2048$ points from the reference set. 

To evaluate whether our model can effectively upsample point clouds and learn from a sparse input, we train models with $N=[1K,3K,10K]$ and $M=[128,256,512,1024,2048]$ on the Airplane dataset. 
To allow for a fair comparison, we evaluate all models using the same number of output points (i.e. $2048$ points are sampled from the prior distribution in all cases).
As illustrated in Table~\ref{tab:airplane-ablation}, we obtain comparable auto-encoding performance while training with significantly sparser shapes. 
Interestingly, the number of points available from the model (i.e. $N$) does not seem to affect performance, suggesting that we can indeed learn from sparser datasets.
Several qualitative examples auto-encoding shapes from the regular and sparse configurations are shown in Figure~\ref{fig:recon}. 
We also demonstrate that our model can also provide a smooth iso-surface, even when only a sparse point cloud (i.e. 256 points) is provided as input.

\subsection{Shape generation}\label{sec:shape-generaton}
We quantitatively compare our method's performance on shape generation with r-GAN~\cite{achlioptas_L3DP}, GCN-GAN \cite{valsesia2018learning}, TreeGAN \cite{shu20193d}, and PointFlow~\cite{yang2019pointflow}.
We use the same experiment setup as PointFlow except for the data normalization before the evaluation.
The generation results are reported in Table~\ref{tab:generation}. 
Though our model requires a two-stage training, the training can be done within one day with a 1080 Ti GPU, while reproducing PointFlow's results requires training for at least two days on the same hardware.
Despite using much less training time, our model achieves comparable performance to PointFlow, the current state-of-the-art.
As demonstrated in Figure~\ref{fig:gen}, our generated shapes are also visually cleaner.

\begin{figure}[t]
    \begin{center}
    \newcommand{\sizea}{0.124\linewidth}
    \newcommand{\sizeb}{0.14\linewidth}
    \newcommand{\tare}{5cm}
    \newcommand{\tale}{4.5cm}
    \newcommand{\tal}{3.5cm}
    \newcommand{\tab}{3.5cm}
    \newcommand{\tar}{4cm}
    \newcommand{\tat}{4.8cm}
    \newcommand{\tcl}{3.0cm}
    \newcommand{\tcb}{3cm}
    \newcommand{\tcr}{4cm}
    \newcommand{\tct}{4.2cm}
    \newcommand{\thl}{3.0cm}
    \newcommand{\thb}{0.0cm}
    \newcommand{\thr}{3cm}
    \newcommand{\tht}{2cm}
    \setlength{\tabcolsep}{0pt}
    \renewcommand{\arraystretch}{0}
    \begin{tabular}{@{}cccc:cccc@{}}
        \includegraphics[width=\sizea, trim={\tale} {\tab} {\tare} {\tat},clip]{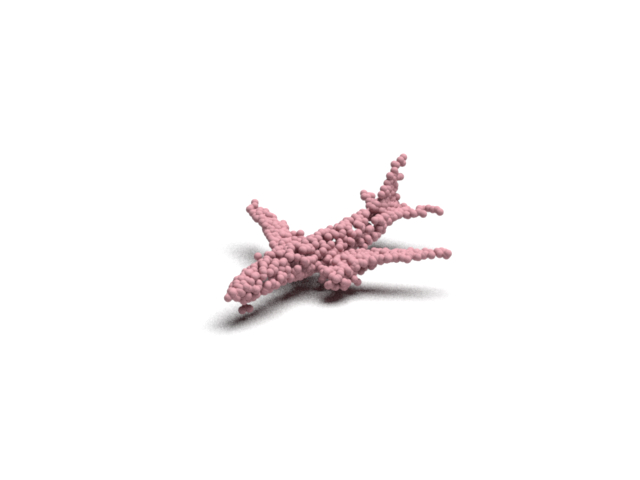}&
        \includegraphics[width=\sizea, trim={\tale} {\tab} {\tare} {\tat},clip]{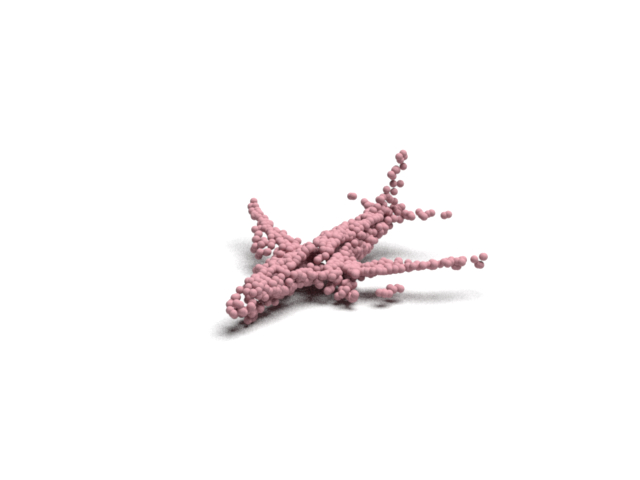}&
        \includegraphics[width=\sizea, trim={\tale} {\tab} {\tare} {\tat},clip]{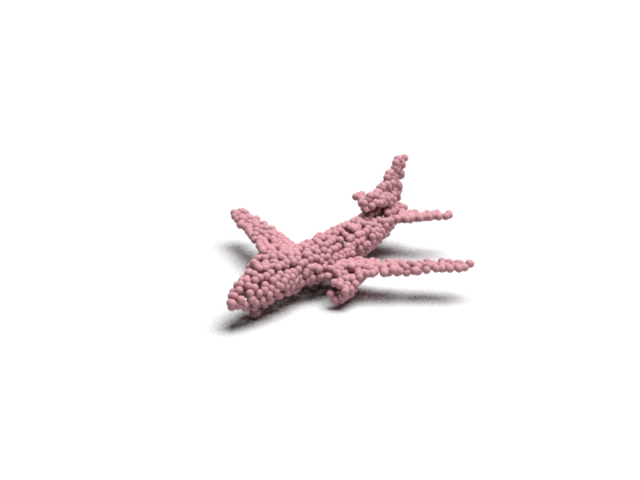}&
        \includegraphics[width=\sizea, trim={\tal} {\tab} {\tar} {\tat},clip]{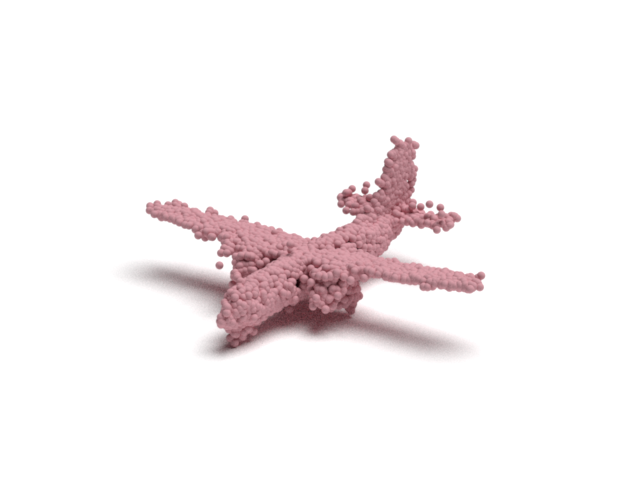}&
        \includegraphics[width=\sizea, trim={2cm} {\tab} {5cm} {\tat},clip]{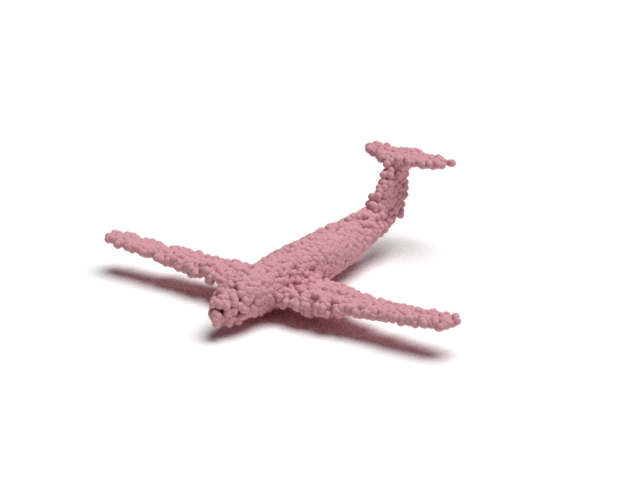}& 
        \includegraphics[width=\sizea, trim={2cm} {\tab} {5cm} {\tat},clip]{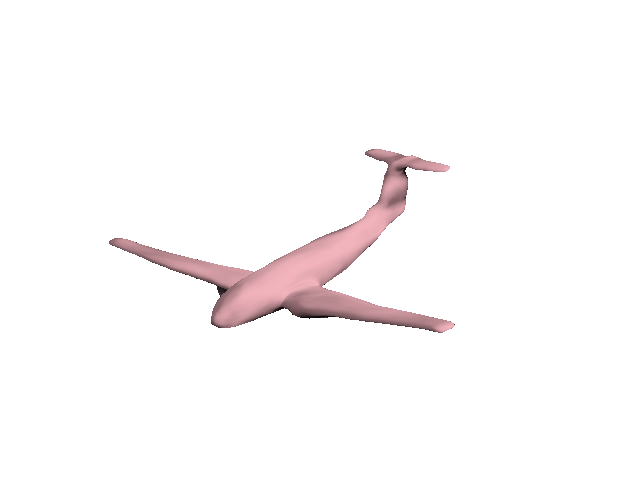}& 
        \includegraphics[width=\sizea, trim={\tal} {\tab} {\tar} {\tat},clip]{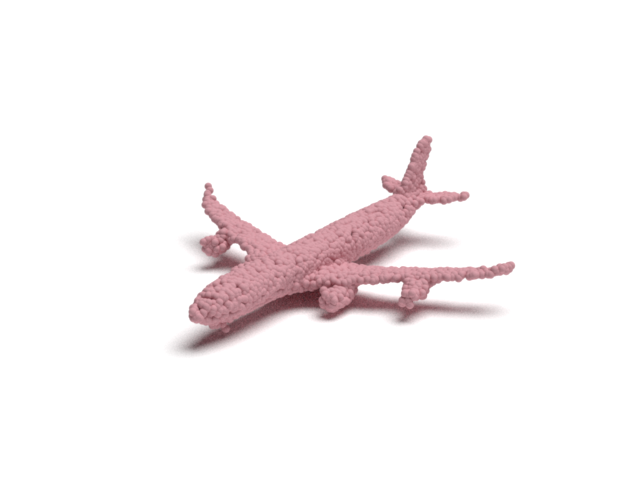}&
        \includegraphics[width=\sizea, trim={\tal} {\tab} {\tar} {\tat},clip]{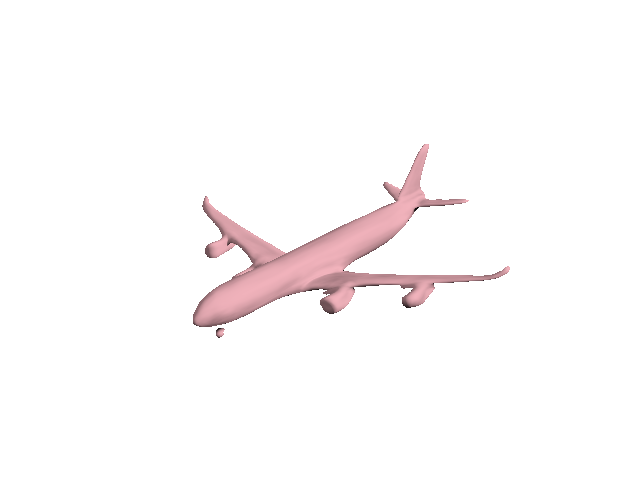}
        \\
        \includegraphics[width=\sizea, trim={\thl} {\thb} {\thr} {\tht},clip]{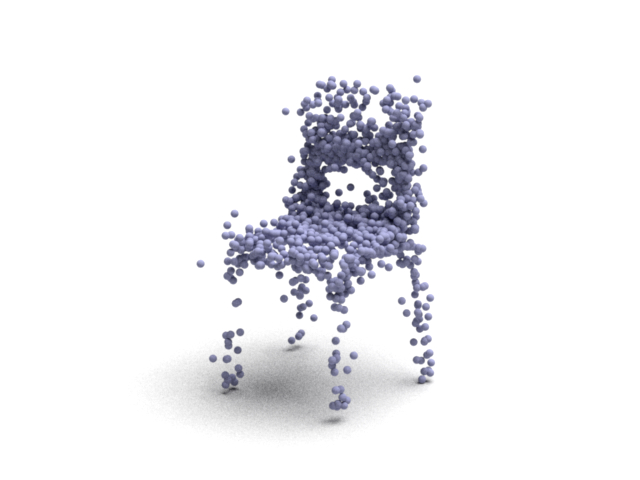}&
        \includegraphics[width=\sizea, trim={\thl} {\thb} {\thr} {\tht},clip]{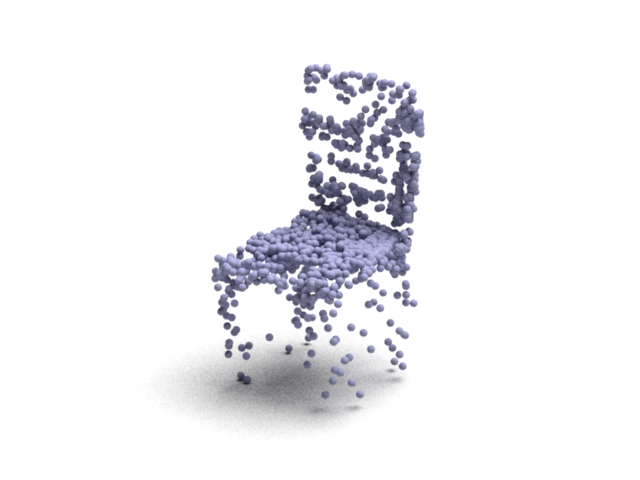}&
        \includegraphics[width=\sizea, trim={\thl} {\thb} {\thr} {\tht},clip]{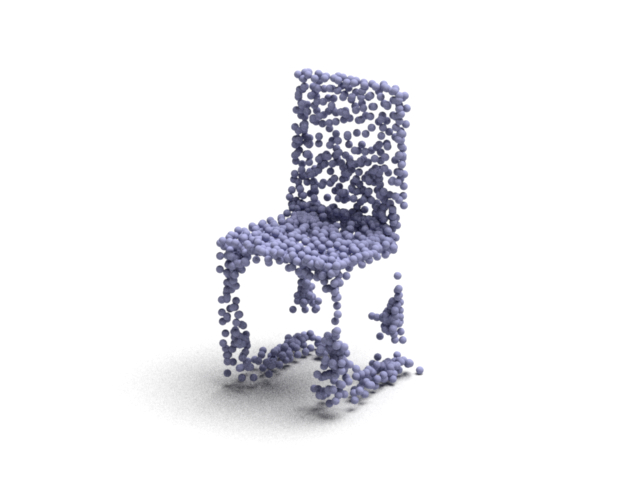}&
        \includegraphics[width=\sizea, trim={\thl} {\thb} {\thr} {\tht},clip]{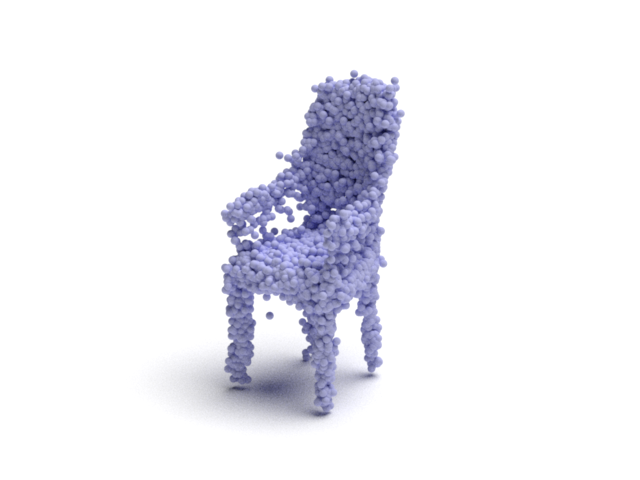}&
        \includegraphics[width=\sizea, trim={2.7cm} {\thb} {2.7cm} {\tht},clip]{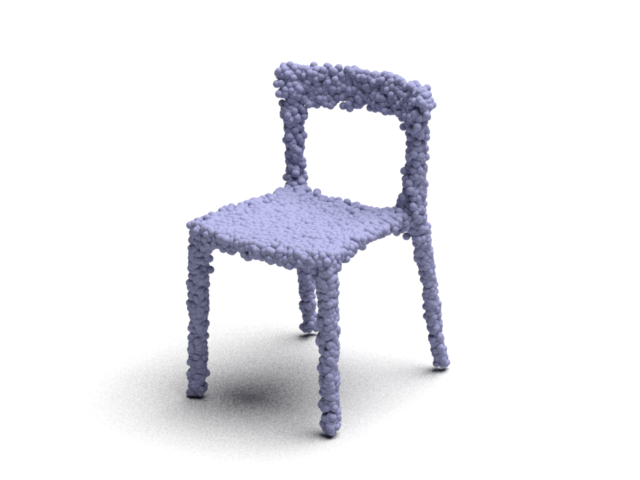} &
        \includegraphics[width=\sizea, trim={4cm} {\thb} {4cm} {\tht},clip]{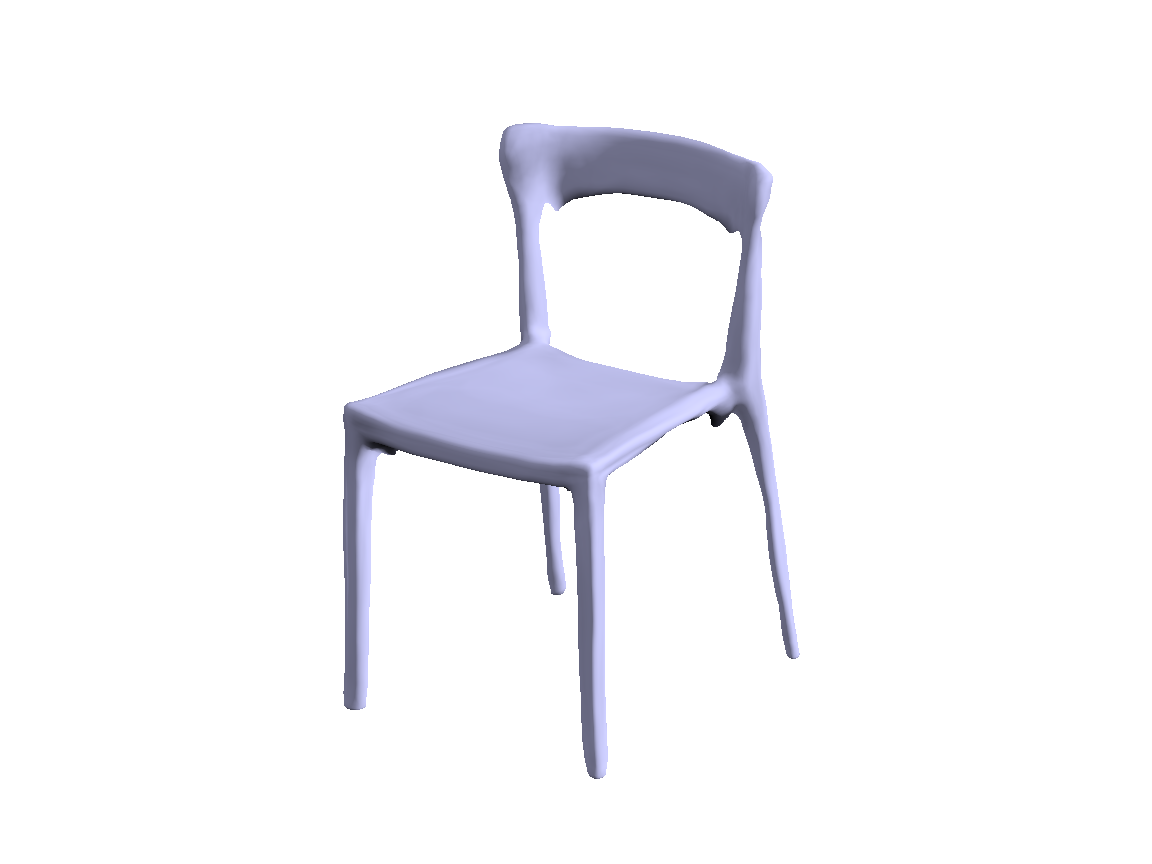} &
        \includegraphics[width=\sizea, trim={\thl} {\thb} {\thr} {\tht},clip]{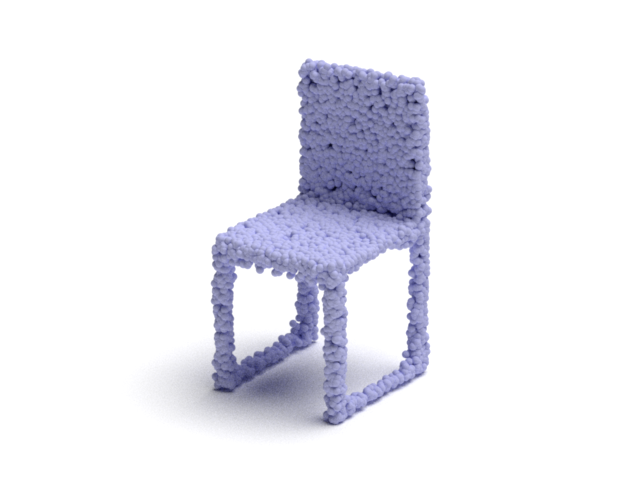} &
        \includegraphics[width=\sizea, trim={\thl} {\thb} {\thr} {\tht},clip]{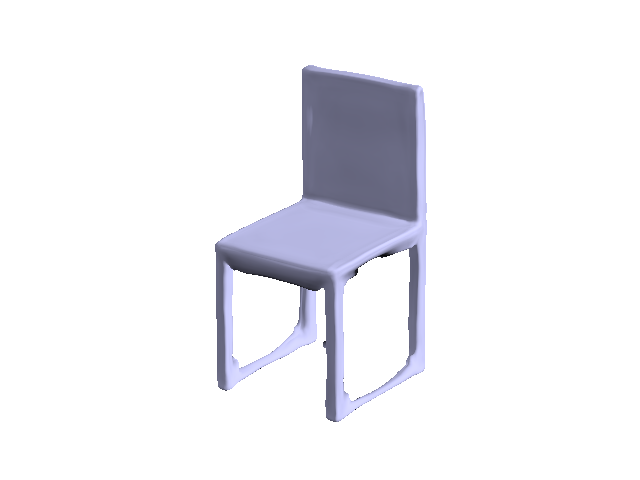} 
        \\
        r-GAN & CGN & Tree & PF & \multicolumn{4}{c}{Ours}
    \end{tabular}

    \end{center}
    \caption{Generation results. We shown results from r-GAN, GCN, TreeGAN (Tree), and PointFlow (PF) are illustrated on the left for comparison. Generated point clouds are illustrated alongside the corresponding implicit surfaces.
    }
    \label{fig:gen}
\end{figure}

\begin{table*}[ht]
\centering
\caption{Shape generation results. $\uparrow$ means the higher the better, $\downarrow$ means the lower the better. 
MMD-CD is multiplied by $10^3$ and MMD-EMD is multiplied by $10^2$.}
\label{tab:generation}
\begin{tabularx}{\textwidth}{LL*{2}{Y}l*{2}{Y}l*{2}{Y}}
\toprule
                          &           & \multicolumn{2}{c}{MMD ($\downarrow$)} &  & \multicolumn{2}{c}{COV (\%, $\uparrow$)} &  & \multicolumn{2}{c}{1-NNA (\%, $\downarrow$)} \\ \cmidrule(lr){3-4} \cmidrule(lr){6-7} \cmidrule(l){9-10} 
Category                  & Model     & CD         & EMD        &  & CD         & EMD       &  & CD     & EMD         \\ \midrule
\multirow{4}{*}{Airplane} & r-GAN~\cite{achlioptas_L3DP}           
                          & 1.657           & 13.287         &  & 38.52         & 19.75     &  & 95.80  & 100.00         \\
                          & GCN~\cite{valsesia2018learning} 
                          & 2.623           & 15.535         &  & 9.38          & 5.93      &  & 95.16  & 99.12 \\
                          & Tree~\cite{shu20193d} 
                          & 1.466           & 16.662         &  & 44.69         & 6.91      &  & 95.06  & 100.00 \\
                          & PF~\cite{yang2019pointflow} 
                          & 1.408           & 7.576 &  & 39.51 & \textbf{41.98}          &  & \textbf{83.21} & \textbf{82.22} \\
                          & Ours            & \textbf{1.285}  & \textbf{7.364}         &  & \textbf{47.65}          & \textbf{41.98} &  & 85.06          & 83.46\\
                          \cmidrule(l{-1pt}r{0pt}){2-10} 
                          & Train           & 1.288           & 7.036         &  & 45.43          & 45.43          &  & 72.10          & 69.38          \\\midrule
\multirow{4}{*}{Chair}    & r-GAN~\cite{achlioptas_L3DP}           & 18.187          & 32.688          &  & 19.49          & 8.31           &  & 84.82          & 99.92          \\
                          & GCN~\cite{valsesia2018learning} 
                          & 23.098          & 25.781         &  & 6.95          & 6.34      &  & 86.52  & 96.48 \\
                          & Tree~\cite{shu20193d} 
                          & 16.147          & 36.545         &  & 40.33         & 8.76      &  & 74.55  & 99.92 \\
                          & PF~\cite{yang2019pointflow} & 15.027          & 19.190          &  & 40.94          & 44.41          &  & 67.60          & 72.28          \\
                          & Ours            & \textbf{14.818} & \textbf{18.791} &  & \textbf{46.37} & \textbf{46.22} &  & \textbf{66.16} & \textbf{59.82} \\\cmidrule(l{-1pt}r{0pt}){2-10} 
                          & Train           & 15.893          & 18.472          &  & 50.45          & 52.11          &  & 53.93          & 54.15          \\ 
\bottomrule
\end{tabularx}
\end{table*}

\begin{table*}[t]
\centering
\caption{Ablation study comparing auto-encoding performance on the Airplane dataset.
CD is multiplied by $10^4$ and EMD is multiplied by $10^2$.
}
\label{tab:fix-sigma-ablation}
\begin{tabularx}{0.8\textwidth}{l*{7}{Y}}
\toprule
& \multicolumn{3}{c}{Single noise level}                  & \multicolumn{3}{c}{Prior distribution} \\ \cmidrule(l{6pt}r{4pt}){2-4} \cmidrule(l{2pt}r{0pt}){5-7}  
Metric                & 0.1    & 0.05  & 0.01     & Uniform & Fixed            & Gaussian           \\ \midrule
CD                              & 2.545  & 1.573 & 1009.357 & 0.993                & 0.993             & 0.996             \\
EMD                             & 4.400 & 8.455 & 36.715 & 2.463 & 2.476 & 2.475              
\\ \bottomrule
\end{tabularx}
\end{table*}

\subsection{Ablation study}\label{sec:ablation}
We conduct an ablation study quantifying the importance of learning with multiple noise levels. As detailed in Sections \ref{sec:loss}-\ref{sec:sample-point-clouds}, we train $s_\theta$ for multiple $\sigma$'s. During inference, we sample point clouds using an annealed Langevin dynamics procedure, using the same $\sigma$'s seen during training.
In Table \ref{tab:fix-sigma-ablation} we show results for models trained with a single noise level and tested without annealing.
As illustrated in the table, the model does not perform as well when learning using a single noise level only. 
This is especially noticeable for the model trained on the smallest noise level in our model ($\sigma=0.01$), as large regions in space are left unsupervised, resulting in significant errors.

We also demonstrate that our model is insensitive to the choice of the prior distribution. We repeat the inference procedure for our auto-encoding experiment, initializing the prior points with a Gaussian distribution or in a fixed location (using the same trained model). 
Results are reported on the right side of Table \ref{tab:fix-sigma-ablation}.
Different prior configurations don't affect the performance, which is expected due to the stochastic nature of our solution.
We further demonstrate our model's robustness to the prior distribution in Figure \ref{fig:teaser}, where the prior depicts 3D letters.

\section{Conclusions}

In this work, we propose a generative model for point clouds which learns the gradient field of the logarithmic density function encoding a shape.
Our method not only allows sampling of high-quality point clouds, but also enables extraction of the underlying surface of the shape.
We demonstrate the effectiveness of our model on point cloud auto-encoding, generation, and super-resolution.
Future work includes extending our work to model texture, appearance, and scenes.

\textbf{Acknowledgment.}
This work was supported in part by grants from Magic Leap and Facebook AI, and the Zuckerman STEM leadership program.

\clearpage
%
%
\bibliographystyle{splncs04}
\bibliography{refs}

\newpage

\begin{appendix}
\section{Overview}

In this appendix, we provide additional method details, implementation details, experimental results, and qualitative results visualizations. 
\begin{enumerate}
    \item Section~\ref{sec:method_detail} Method Details - Additional details about the method including algorithm blocks of training, inference and surface extraction, and the proof of equivalent objective function. 
    \item Section~\ref{sec:impl-detail} Implementation Details - Description of the network architecture, experimental setting, and implementation details.
    \item Section~\ref{sec:quant_result} Additional Quantitative Results - More quantitative results on shape generation and implicit surface tasks, comparing our model with additional baselines.
    \item Section \ref{sec:ablation_studies} Additional Ablation Studies - Results of additional ablation studies.
    \item Section \ref{sec:qual_result} Additional Qualitative Results - Additional qualitative results on scanned data, visualization of latent space, and extended visualizations for 2D and 3D point clouds.
\end{enumerate}

\section{Method Details}\label{sec:method_detail}
In this section, we provide extensive details of our method.
We provide detailed explanations for training and inference in Section~\ref{sec:train-inference}.
More details about surface extraction are provided in Section~\ref{sec:surface}.
Finally, a mathematical proof about our objective function is shown in Section~\ref{sec:objective}.

\subsection{Training and inference}\label{sec:train-inference}
We provide algorithm blocks to better illustrate the training
and inference  procedures. See Algorithm \ref{alg:training} and Algorithm \ref{alg:inference} for training and inference, respectively. Please refer to Section~\ref{sec:impl-detail} for hyper-parameters and neural network architectures.
\begin{algorithm}
\caption{Training.}
\begin{algorithmic}[1]
\REQUIRE Noise levels $\{\sigma_i\}_{i=1}^k$; Weight for noise levels' loss $ \lambda(\sigma_i)$; Point cloud encoder $f_\phi$; A neural network $s_\theta$; Total number of training iterations $T$; Point cloud $X_t$

\FOR{$t \gets 1$ to $T$}
\STATE $z \gets f_\phi(X_t)$ 
\FOR{$\sigma \in \{\sigma_i\}_{i=1}^k$}
\FOR{$x_i \in X_t$}
\STATE $\tilde{x}_i \gets x_i + N(0,\sigma^2I)$
\ENDFOR
\STATE 
$\ell(\sigma, X_t) \gets 
    \frac{1}{|X_t|}\sum_{x_i \in X_t} 
    \|\score(\tilde{x_i}, \sigma, z) - \frac{x_i-\tilde{x}_i}{\sigma^2}\|_2^2$
\ENDFOR
\STATE $\mathcal{L}(\{\sigma_i\}_{i=1}^k, X_t) \gets \sum_{i=1}^k \lambda(\sigma_i) \ell(\sigma_i, X_t)$
\STATE $\phi, \theta \gets Adam(\mathcal{L},\phi, \theta)$
\ENDFOR
\RETURN $f_\phi$, $s_\theta$
\end{algorithmic}
\label{alg:training}
\end{algorithm}
\begin{algorithm}
\caption{Annealed Langevin dynamics.}
\begin{algorithmic}[1]
\REQUIRE Noise levels $\{\sigma_i\}_{i=1}^k$; Step size $\alpha$; Number of steps $T$
\STATE Initialize $x_0$\;
\FOR{$i \gets 1$ to $k$}
\FOR{$t \gets 0$ to $T-1$}
\STATE $\epsilon_t \sim \mathcal{N}(0, I)$
\STATE $x_{t+1}' \gets x_t + \frac{\sqrt{\alpha}\sigma_i \epsilon_t}{\sigma_k}$
\STATE $x_{t+1} \gets x_{t+1}' + \frac{\alpha\sigma_i^2}{2\sigma_k^2} \score(x_{t+1}', \sigma_i)$
\ENDFOR
\STATE $x_0 \gets x_T$
\ENDFOR
\RETURN $x_T$
\end{algorithmic}
\label{alg:inference}
\end{algorithm}

\subsection{Surface extraction}\label{sec:surface}
We use a modified version of the volumetric ray casting algorithm \cite{roth1982ray} to render the iso-surface produced by the learned gradient field. Algorithm \ref{alg:surface} below shows the rendering process of a single pixel. For each pixel in the rendered image, we cast a ray $\langle o_0, u\rangle$ towards the gradient field according to the camera model. We advance the ray for a fixed number of steps $k_{\mathrm{max}}$. For reasonable choices of the step rate $\gamma$, the ray will either converge to the iso-surface, or miss the iso-surface and march towards infinity. If the ray does reach the iso-surface, we calculate the RGB values for that pixel based on its 3D location and surface normal. If the ray misses the surface, we assign the pixel with background color $y_{\mathrm{bg}}$. In this paper, we use $\gamma=1.0$,  $k_\mathrm{max}=64$ and $\delta=0.005$. 
\begin{algorithm}
\caption{Ray Casting for Rendering the Iso-surface.}
\begin{algorithmic}[1]
\REQUIRE Neural network $s_\theta$; Minimum noise level $\sigma_k$; Initial ray origin $o_0$; Ray direction $u$; Maximum ray travel $d_{\mathrm{max}}$; Step rate $\gamma$; Number of steps $k_{\mathrm{max}}$; Iso-surface level $\delta$; Background color $y_{\mathrm{bg}}$
\STATE $d = 0$\;
\FOR{$k \gets 1$ to $k_{\mathrm{max}}$}
    \STATE $x \gets o_0 + du$
    \STATE $d \gets d + \gamma (\|s_\theta(x, \sigma_k)\| - \delta)$
\ENDFOR
\IF{$d < d_{\mathrm{max}}$}
\STATE $n \gets -\frac{s_\theta(x, \sigma_k)}{\|s_\theta(x, \sigma_k)\|}$
\STATE $y \gets \mathrm{Shading}(x,n)$
\ELSE
\STATE $y \gets y_{\mathrm{bg}}$
\ENDIF
\RETURN $y$
\end{algorithmic}
\label{alg:surface}
\end{algorithm}

\subsection{Objective Function}\label{sec:objective}

Here, we provide a proof to show that optimizing $\ell_{\text{direct}}(\sigma, S)$ (Equation 4 in the main paper) is equivalent to optimizing $\ell_{\text{denoising}}(\sigma, S)$ (Equation 5 in the main paper). 
The proof is largely similar to the one in the Appendix in Vincent 2010~\cite{vincent2011connection}. 
We will re-visit the prove here using the notation from our paper for convenience of the readers.

\begin{theorem}
Optimizing $\ell_{\text{denoise}}(\sigma, S)$ leads to the same $\theta$ as optimizing $\ell_{\text{denoise}}(\sigma, S)$.
\end{theorem}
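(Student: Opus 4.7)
The plan is to follow the classical argument of Vincent (2011) adapted to our notation, showing that $\ell_{\text{direct}}(\sigma,S)$ and $\ell_{\text{denoising}}(\sigma,S)$ differ only by a quantity that does not depend on $\theta$. Since the two objectives have equal gradients with respect to $\theta$, they are optimized at the same parameter values.

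First I would expand the squared norm in each loss. For $\ell_{\text{direct}}$, taking the expectation under $\qdata$ yields three terms: $\tfrac{1}{2}\|\score(x,\sigma)\|^2$, the cross term $-\score(x,\sigma)^\top \nabla_x \log \qdata(x)$, and $\tfrac{1}{2}\|\nabla_x \log \qdata(x)\|^2$. The last term is independent of $\theta$ and can be dropped. Similarly, expanding $\ell_{\text{denoising}}$ under $s\sim\pdata$, $\tilde x\sim\qnoise(\cdot\mid s)$ gives $\tfrac{1}{2}\|\score(\tilde x,\sigma)\|^2$, a cross term, and a term depending only on $s$ and $\tilde x$ that is independent of $\theta$. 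By writing $\qdata(\tilde x)=\int \pdata(s)\,\qnoise(\tilde x\mid s)\,ds$, the first (quadratic in $\score$) terms of the two losses coincide. So the entire argument reduces to showing equality of the cross terms up to a $\theta$-independent constant.

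The main step — and the only place where real work happens — is the cross term identity. I would use $\nabla_{\tilde x}\log \qdata(\tilde x) = \nabla_{\tilde x} \qdata(\tilde x) / \qdata(\tilde x)$ and then apply the mixture representation of $\qdata$ to move the gradient inside the integral:
\begin{equation*}
\qdata(\tilde x)\,\nabla_{\tilde x}\log \qdata(\tilde x) \;=\; \int \pdata(s)\,\qnoise(\tilde x\mid s)\,\nabla_{\tilde x}\log \qnoise(\tilde x\mid s)\,ds.
\end{equation*}
Using this identity, the direct cross term becomes
\begin{equation*}
\int \score(\tilde x,\sigma)^\top \!\!\int \pdata(s)\,\qnoise(\tilde x\mid s)\,\nabla_{\tilde x}\log \qnoise(\tilde x\mid s)\,ds\,d\tilde x,
\end{equation*}
which, by Fubini, is exactly the denoising cross term $\mathbb{E}_{s,\tilde x}[\score(\tilde x,\sigma)^\top \nabla_{\tilde x}\log \qnoise(\tilde x\mid s)]$.

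Combining the two observations yields $\ell_{\text{direct}}(\sigma,S) = \ell_{\text{denoising}}(\sigma,S) + C(\sigma,S)$ where $C$ does not depend on $\theta$, which proves the theorem. The expected obstacle is purely technical: justifying the interchange of differentiation and integration in the mixture representation of $\nabla_{\tilde x}\qdata$, and the interchange of the $\tilde x$- and $s$-integrals in the cross term. Both are routine under the standard regularity assumptions that $\qnoise(\tilde x\mid s)=\mathcal{N}(\tilde x;s,\sigma^2 I)$ is smooth and has rapidly decaying tails, so dominated convergence and Fubini apply directly — no subtlety beyond that is needed.
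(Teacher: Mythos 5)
Your proposal is correct and follows essentially the same route as the paper's proof: both expand each loss into a quadratic term, a cross term, and a $\theta$-independent constant, identify the quadratic terms via the mixture representation $\qdata(\tilde{x})=\int \pdata(s)\qnoise(\tilde{x}|s)\,ds$ with Fubini, and equate the cross terms by exchanging the gradient with the integral over $s$ (the paper justifies this interchange by the Gaussian bound $\qnoise(\tilde{x}|s)\leq \mathcal{N}(s;s,\sigma^2 I)$, matching your dominated-convergence remark). No substantive difference in approach.
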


\begin{proof}
We want to show that $\ell_{\text{direct}}(\sigma, S) = \ell_{\text{denoise}}(\sigma, S) + C$ for some constant $C$ that doesn't depend on $\theta$.
Note that $\ell_{\text{denoise}}$ can be decomposed as follows:
\begin{align}
    \ell_{\text{denoise}}(\sigma, S) &= 
    \mathbb{E}_{x\sim \pdata, \tilde{x} \sim \qnoise(\tilde{x}|x)}\left[\frac{1}{2}\|\score(\tilde{x}, \sigma)\|^2\right] \nonumber \\
    &\quad - \mathbb{E}_{x\sim \pdata, \tilde{x} \sim \qnoise(\tilde{x}|x)}\left[\score(\tilde{x}, \sigma)^T\nabla_{\tilde{x}}\log{\qnoise(\tilde{x}|x)}\right] + C_1 \label{eq:denoise},
\end{align}
where $C_1=\mathbb{E}_{x\sim \pdata, \tilde{x} \sim \qnoise(\tilde{x}|s)}\left[\frac{1}{2}\left\|\nabla_{\tilde{x}}\log{\qnoise(\tilde{x}|x)}\right\|^2\right] $. 
Similarly, we can decompose $\ell_{\text{direct}}$ as follows:
\begin{align}
    \ell_{\text{direct}}(\sigma, S) &=
    \mathbb{E}_{x \sim \qdata}\left[\frac{1}{2}\|\score(x, \sigma)\|^2\right] \nonumber \\
    &\quad - \mathbb{E}_{x \sim \qdata}\left[\score(x, \sigma)^T\nabla_x\log{\qdata(x)}\right] + C_2\label{eq:direct},
\end{align}
where $C_2=\mathbb{E}_{x \sim \qdata}\left[\frac{1}{2}\|\nabla_x \log{\qdata(x)}\|^2\right]$.
Now we will compare the first two terms of Equation~\ref{eq:denoise} and Equation~\ref{eq:direct} to show they are the same. 
Looking at the first terms:
\begin{align*}
    &\mathbb{E}_{s\sim \pdata, \tilde{x} \sim \qnoise(\tilde{x}|s)}\left[\frac{1}{2}\|\score(\tilde{x}, \sigma)\|^2\right] \\
    &\quad= \frac{1}{2}\int_x \pdata(x) \int_{\tilde{x}} \qnoise(\tilde{x}|x) \|\score(\tilde{x}, \sigma)\|^2 d\tilde{x}dx \\
    &\quad= \frac{1}{2}\int_{\tilde{x}}\int_x \pdata(x) \qnoise(\tilde{x}|x) \|\score(\tilde{x}, \sigma)\|^2 dxd\tilde{x} \\
    &\quad= \frac{1}{2}\int_{\tilde{x}}\|\score(\tilde{x}, \sigma)\|^2\int_x \pdata(x) \qnoise(\tilde{x}|x)  dxd\tilde{x} \\
    &\quad= \frac{1}{2}\int_{\tilde{x}}\|\score(\tilde{x}, \sigma)\|^2 \qdata(\tilde{x}) d\tilde{x} \quad \left(\text{since } \qdata(x) = \int_y \pdata(y)\qnoise(x|y)dy\right) \\
    &\quad=\mathbb{E}_{x\sim\qdata}\left[\frac{1}{2}\|\score(x,\sigma)\|^2\right].
\end{align*}
Looking at the second terms:
\begin{align*}
     &\mathbb{E}_{x\sim \pdata, \tilde{x} \sim \qnoise(\tilde{x}|x)}\left[\score(\tilde{x}, \sigma)^T\nabla_{\tilde{x}}\log{\qnoise(\tilde{x}|x)}\right] \\
     &\quad=\int_x\pdata(x)\int_{\tilde{x}}\qnoise(\tilde{x}|x) \score(\tilde{x}, \sigma)^T\nabla_{\tilde{x}} \log{\qnoise(\tilde{x}|x)} d\tilde{x}dx \\
     &\quad=\int_{\tilde{x}}\int_x\pdata(x)\qnoise(\tilde{x}|x) \score(\tilde{x}, \sigma)^T\nabla_{\tilde{x}}\log{\qnoise(\tilde{x}|x)} dxd\tilde{x} \\
      &\quad=\int_{\tilde{x}}\score(\tilde{x}, \sigma)^T
      \int_x\pdata(x)\qnoise(\tilde{x}|x) \nabla_{\tilde{x}}\log{\qnoise(\tilde{x}|x)} dxd\tilde{x} \\
     &\quad=\int_{\tilde{x}}\score(\tilde{x}, \sigma)^T
     \int_x\pdata(x)\qnoise(\tilde{x}|x) \frac{\nabla_{\tilde{x}}\qnoise(\tilde{x}|x)}{\qnoise(\tilde{x}|x)}
     dxd\tilde{x} \\
     &\quad=\int_{\tilde{x}}\score(\tilde{x}, \sigma)^T
     \int_x\pdata(x)\nabla_{\tilde{x}}\qnoise(\tilde{x}|x)dxd\tilde{x} \\
     &\quad=\int_{\tilde{x}}\score(\tilde{x}, \sigma)^T
     \int_x\pdata(x)\nabla_{\tilde{x}}\qnoise(\tilde{x}|x)dxd\tilde{x}.
\end{align*}
Since $\qnoise(\tilde{x}|x) = \mathcal{N}(\tilde{x}; x, \sigma^2I)$,  it is bounded by $\qnoise(\tilde{x}|x) \leq \mathcal{N}(x;x,\sigma^2I)$.
As a result, we can take the derivative outside of the integral $\int_x\pdata(x)\nabla_{\tilde{x}}\qnoise(\tilde{x}|x)dx$:
\begin{align*}
     &\mathbb{E}_{x\sim \pdata, \tilde{x} \sim \qnoise(\tilde{x}|x)}\left[\score(\tilde{x}, \sigma)^T\nabla_{\tilde{x}}\log{\qnoise(\tilde{x}|x)}\right]\\
     &\quad=\int_{\tilde{x}}\score(\tilde{x}, \sigma)^T
     \nabla_{\tilde{x}}\left(\int_x\pdata(x)\qnoise(\tilde{x}|x)dx\right)
     d\tilde{x} \\
     &\quad=\int_{\tilde{x}}\score(\tilde{x}, \sigma)^T\qdata(\tilde{x})
     \frac
     {\nabla_{\tilde{x}}\left(\int_x\pdata(x)\qnoise(\tilde{x}|x)dx\right)}
     {\qdata(\tilde{x})} d\tilde{x} \\
     &\quad=\int_{\tilde{x}}\score(\tilde{x}, \sigma)^T\qdata(\tilde{x})
     \frac
     {\nabla_{\tilde{x}}\qdata(\tilde{x})}
     {\qdata(\tilde{x})} d\tilde{x} \\
     &\quad=\int_{\tilde{x}}\score(\tilde{x}, \sigma)^T\qdata(\tilde{x})
     \nabla_{\tilde{x}}\log{\qdata(\tilde{x})}d\tilde{x} \\
     &\quad=\mathbb{E}_{\tilde{x}\sim \qdata}\left[\score(\tilde{x}, \sigma)^T \nabla_{\tilde{x}}\log{\qdata(\tilde{x})}\right]
\end{align*}
At this point, we can conclude that by setting $C=C_1-C_2$, we will have $\ell_{\text{direct}}(\sigma, S) = \ell_{\text{denoise}}(\sigma, S) + C$.
So optimizing either of $\ell_{\text{direct}}$ or $\ell_{\text{denoise}}$ will give the same optimal $\theta$, which concludes the proof.
\end{proof}

\section{Implementation Details}\label{sec:impl-detail}
\subsection{Network architecture}\label{sec:net-architecture}

\paragraph{Auto-encoding}
For auto-encoding, our model takes 2D or 3D shape point cloud $X$ as input to the encoder, which follows the architecture proposed by \cite{pointnet}, and outputs the 128-dimensional latent code $z$ for each shape.

For the decoder, we train several noise level $\sigma$ at the same time. To condition on different noise level, we concatenate the noise level $\sigma$ at the end of latent code $z$. 
The input point cloud $X$ has 800 or 2048 points $x$ in total,
and each point is concatenated with latent code $z$ and $\sigma$ yielding a 131 or 132 dimensional input for the decoder (depending if the point cloud is in 2D or 3D).

Following the architecture used by OccNet \cite{mescheder2019occupancy}, first the input is scaled with a fully-connected layer to the hidden dimension 256. Then there are 8 pre-activation ResNet-blocks with 256 dimensions for every hidden layer, and each Res-block consists of two sets of Conditional Batch-Normalization (CBN), a ReLU activation layer and a fully-connected layer. The output of these two sets is added to the input of the Res-block. Then the output of all the Res-blocks is passed through another set of CBN, ReLU and FC layer, and this is the final output of the model -- a 3-dimensional vector describing the gradient for the input point.

The CBN layer takes the concatenated input as the latent code $\tilde{z}=[x, z, \sigma]$. The input $\tilde{z}$ is passed through two FC layers to output the 256-dimensional vectors $\beta(\tilde{z})$ and $\gamma(\tilde{z})$. The output of the CBN is computed according to:
\begin{align}
    f_{out}=\gamma(\tilde{z})\frac{f_{in}-\mu}{\sqrt{\sigma'^2+\epsilon}}+\beta(\tilde{z}),
\end{align}
where $\mu$ and $\sigma'$ are the mean and standard deviation of the batch feature data $f_{in}$. During training, $\mu$ and $\sigma'$ are the running mean with momentum 0.1, and they are replaced with the corresponding running mean at test time.
Figure \ref{fig:architecture} describes the architecture of our decoder $s_\theta$.

\paragraph{Generation}
For generation, based on the pretrained auto-encoder model, we use l-GAN to train the latent code generator.
Specifically, we train our GAN with WGAN-GP~\cite{iWGAN} objective.
We use Adam optimizer ($\beta_1=0.5, \beta_2=0.9$) with learning rate $10^{-4}$ for both the discriminator and the generator.
The latent-code dimension is set to be $256$.
The generator takes a $256$-dimensional noise vector sampled from $\mathcal{N}(\mathbf{0}, 0.2^2I_{256})$, and passes it through an MLP with hidden dimensions of $\{256, 256\}$ before outputting the final $256$-dimensional latent code.
We apply ReLU activation between layers and there is no batch normalization.
The discriminator is a three-layers MLP with hidden dimension $\{512, 512\}$.
We use LeakyReLU with slope $0.2$ between the layers.
We fixed both the pretrained encoder and decoder (i.e. set into evaluation mode).
The latent-GAN is trained for $5000$ epochs for each of the category.

\begin{figure}[ht]
\begin{center}
	\includegraphics[width=0.6\textwidth]{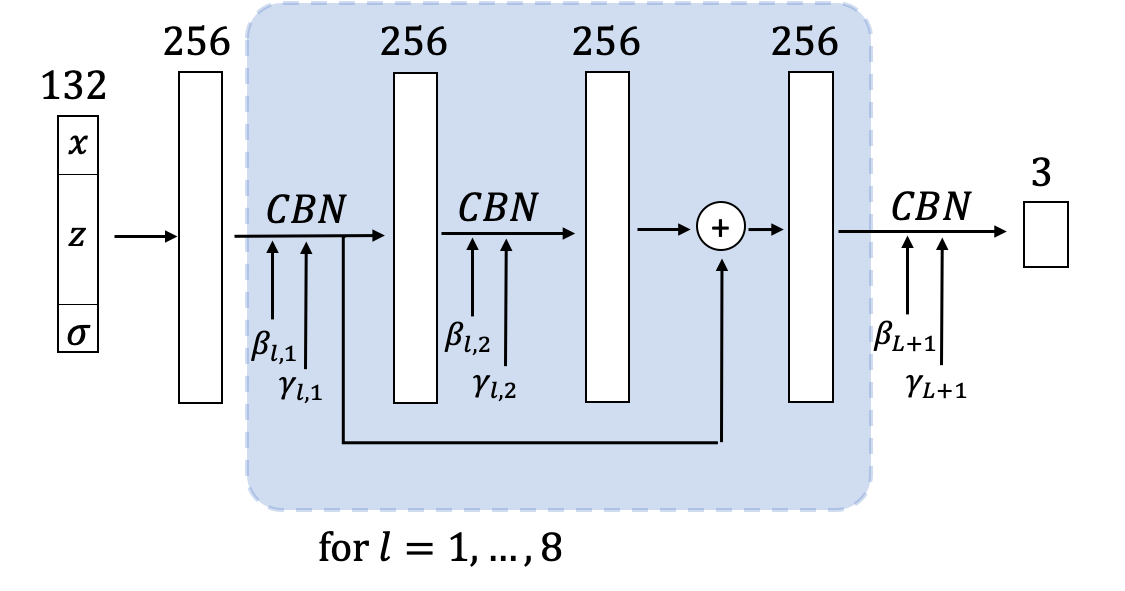}
\end{center}
\caption{The gradient decoder network $s_\theta$. For each layer, the spatial size is specified on top. The input point $x$ is concatenated to the latent code $z$ and noise level $\sigma$. The output of $s_\theta$ is the gradient corresponding to the given point.
 }
\label{fig:architecture}
\end{figure}
\subsection{Experimental setting}

For all the experiments, we use an Adam optimizer. We use ten different $\sigma$'s ranging from 1 to 0.01. For the ShapeNet dataset, the learning rates are $1\times10^{-4}$ for decoder and $1\times10^{-3}$ for encoder, with linear decay starting at 1000 epoch, reaching $1\times10^{-5}$ and $1\times10^{-4}$ for the encoder and the decoder, respectively. For the MNIST-CP dataset, the learning rate starts with $1\times10^{-3}$ for both the decoder and encoder, with linear decay starting at 1000 epoch, ending at $1\times10^{-4}$. 
Each batch consists of 64 shapes (or 200 shapes for the MNIST-CP dataset). We train the model for 2000 epochs. 
For inference, we set $T=10$ and $\alpha=2\times 10^{-4}$.

\subsection{Baselines}
In this section, we provide more details on how we obtain the reported scores for alternative methods.
\paragraph{PointFlow~\cite{yang2019pointflow}}
To get the results for ShapeNet, we run the pre-trained checkpoint release in the official code repository\footnote{\url{https://github.com/stevenygd/PointFlow}}. 
The auto-encoding results for MNIST is obtained by running the released code on the pre-processed MNIST-CP dataset.

\paragraph{AtlasNet~\cite{atlasnet}}
The code we used for the AtlasNet decoder comes from the official code repository\footnote{\url{https://github.com/ThibaultGROUEIX/AtlasNet}}. 
To enable a fair comparison, we use the same encoder as used in the PointFlow repository, and set the latent code dimension to be the same as our own model.
We use the suggested learning rate and optimizer setting from the AtlasNet code-base and paper during training.
We train AtlasNet for the same amount of iterations as our method to obtain the reported performance in Table 1 in the main paper.

\paragraph{l-GAN and r-GAN~\cite{achlioptas_L3DP}}
We modify the official released code repository\footnote{\url{https://github.com/optas/latent_3d_points}} to take our pre-processed point cloud from ShapeNet version 2~\cite{shapenet,chang2015shapenet}.
The auto-encoding results (i.e. Table 1 in main paper) for l-GAN and r-GAN are obtained by running the official code for the same number of iterations as our model.
The generation results for r-GAN in Table 3 in the main paper is obtained by running the latent-GAN in the official code for the default amount of iterations in the configuration.

\paragraph{GraphCNN-GAN~\cite{valsesia2018learning}} We use the official code released in this repository to obtain the results: \url{https://github.com/diegovalsesia/GraphCNN-GAN}.

\paragraph{ TreeGAN~\cite{shu20193d}} We use the official code released in this repository to obtain the results: \url{https://github.com/seowok/TreeGAN.git}.

\section{Additional Quantitative Results}\label{sec:quant_result}
\subsection{Shape generation}
We compare our method's performance on shape generation with GraphCNN-GAN~\cite{valsesia2018learning} and TreeGAN~\cite{shu20193d} in Table \ref{tab:generation_supp} (in addition to the comparisons to r-GAN~\cite{achlioptas2017learning} and PointFlow~\cite{yang2019pointflow} which we report in the main paper). 
As discussed in the Related Work section (Section 2 in the main paper), both of these two baselines treat generating point clouds with $N$ points as predicting a fixed dimensional vector (in practice predicting a $N\times3$ vector but they could potentially use more upsampling layers to predict more points), using the same discriminator as r-GAN~\cite{achlioptas_L3DP}.
These works report performance for models trained on smaller collections (i.e. the ShapeNet Benchmark dataset\footnote{\url{https://shapenet.cs.stanford.edu/ericyi/shapenetcore_partanno_segmentation_benchmark_v0.zip}}) using different splits and normalization. 
Therefore, in addition to comparing their publicly available models (in the first rows), we retrain their models on the full ShapeNet collections using the same splits and preprocessing performed on our trained models (in the rows marked with an asterisk ($*$)). 
For each model, the training is performed over two days with a GeForce GTX TITAN X GPU.

\begin{table*}[h]
\centering
\caption{Additional shape generation results. Rows marked with an asterisk ($*$) denote retrained models. $\uparrow$ means the higher the better, $\downarrow$ means the lower the better. 
MMD-CD is multiplied by $10^3$ and MMD-EMD is multiplied by $10^2$.}
\label{tab:generation_supp}
\begin{tabularx}{\textwidth}{LLYYYYYY}
\toprule
&& \multicolumn{2}{c}{MMD ($\downarrow$)} &  \multicolumn{2}{c}{COV (\%, $\uparrow$)} &  \multicolumn{2}{c}{1-NNA (\%, $\downarrow$)} \\ 
\cmidrule(lr){3-4} \cmidrule(lr){5-6} \cmidrule(l){7-8} 
Category                  & Model     & CD         & EMD        & CD         & EMD        & CD          & EMD         \\ 
\midrule
\multirow{6}{*}{Airplane} 
& GCN~\cite{valsesia2018learning}                  & 2.623     & 15.535    & 9.38       & 5.93      & 95.16      & 99.12        \\
& GCN~\cite{valsesia2018learning}$^*$              & 44.93   & 35.52   & 1.98    & 1.23   & 99.99    & 99.99         \\
& Tree~\cite{shu20193d}                             & 1.466     & 16.662    & 44.69      & 6.91     & 95.06      & 100.00         \\
& Tree~\cite{shu20193d}$^*$                         & 1.798     & 24.723    & 31.60      & 5.43      & 95.43      & 99.88         \\
& Ours            & \textbf{1.285}  & \textbf{7.364}          & \textbf{47.65}          & \textbf{41.98}  & \textbf{85.06}          & \textbf{83.46}\\
\cmidrule(l{-1pt}r{0pt}){2-8} 
& Train           & 1.288           & 7.036         & 45.43          & 45.43          & 72.10          & 69.38          \\
\midrule
\multirow{4}{*}{Chair}    
& GCN~\cite{valsesia2018learning}                      & 23.098          & 25.781        & 6.95          & 6.34          & 86.52       & 96.48    \\
& GCN~\cite{valsesia2018learning}$^*$         & 140.84         & 0.5163        & 1.67          & 1.06          & 100       & 100         \\
& Tree~\cite{shu20193d}                                 & 16.147          & 36.545       & 40.33         & 8.76           & 74.55          & 99.92         \\
& Tree~\cite{shu20193d}$^*$                             & 17.124          & 26.405       & 42.90         & 20.09          & 77.49          & 98.11         \\
& Ours                                              & \textbf{14.818} & \textbf{18.791}  & \textbf{46.37} & \textbf{46.22} & \textbf{66.16} & \textbf{59.82} \\
\cmidrule(l{-1pt}r{0pt}){2-8} 
& Train           & 15.893          & 18.472          & 50.45          & 52.11          & 53.93          & 54.15           
\\ 
\bottomrule
\end{tabularx}
\end{table*}

We also present generation results for the car category in Table~\ref{tab:generation-other-cates}.
Our model achieves performance that's un-par with the state-of-the-arts.

\begin{table*}[h!]
\centering
\caption{Shape generation results. $\uparrow$ means the higher the better, $\downarrow$ means the lower the better. 
MMD-CD is multiplied by $10^3$ and MMD-EMD is multiplied by $10^2$.}
\label{tab:generation-other-cates}
\begin{tabularx}{\textwidth}{LL*{2}{Y}l*{2}{Y}l*{2}{Y}}
\toprule
                          &           & \multicolumn{2}{c}{MMD ($\downarrow$)} &  & \multicolumn{2}{c}{COV (\%, $\uparrow$)} &  & \multicolumn{2}{c}{1-NNA (\%, $\downarrow$)} \\ \cmidrule(lr){3-4} \cmidrule(lr){6-7} \cmidrule(l){9-10} 
Category                  & Model     & CD         & EMD        &  & CD         & EMD       &  & CD     & EMD         \\ \midrule
\multirow{4}{*}{Car}      & rGAN~\cite{achlioptas_L3DP}           & 6.233           & 18.561          &  & 8.24           & 5.11           &  & 99.29          & 99.86          \\
                          & PF~\cite{yang2019pointflow} & 4.207           & 10.631          &  & 39.20          & 44.89          &  & 68.75          & \textbf{62.64} \\
                          & Ours            & \textbf{4.085}  & \textbf{10.610} &  & \textbf{44.60} & \textbf{46.88} &  & \textbf{65.48} & 62.93          \\\cmidrule(l{-1pt}r{0pt}){2-10} 
                          & Train           & 4.207           & 10.631          &  & 48.30          & 54.26          &  & 52.98          & 49.57         
\\ 
\bottomrule
\end{tabularx}
\end{table*}

In Figure \ref{fig:convergence}, we show convergence curves for our method and for PointFlow \cite{yang2019pointflow} on the auto-encoding task (over the Airplane category). As the figure illustrates, our method converges much faster and to a better result. 
\begin{figure*}[h]
\begin{center}
	\includegraphics[width=\textwidth]{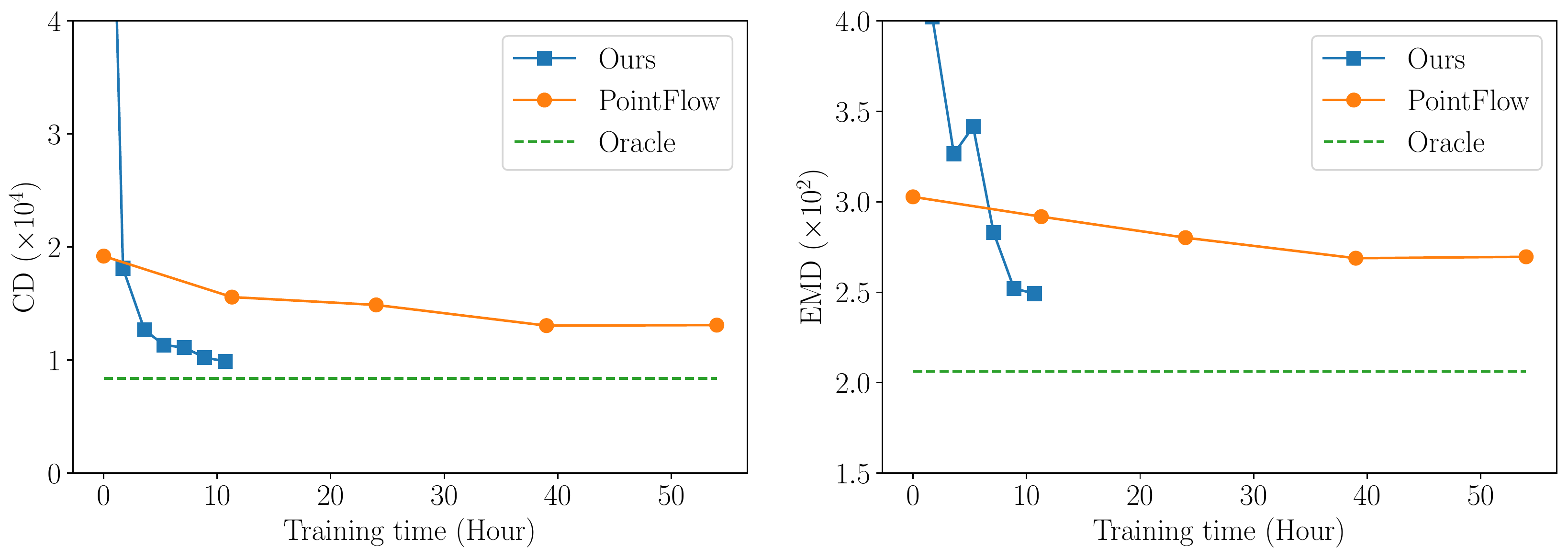}
\end{center}
\caption{Convergence curves for PointFlow~\cite{yang2019pointflow} and our method on the auto-encoding task. As illustrated above, 
our method converges much faster to a better result.
}
\label{fig:convergence}
\end{figure*}

\subsection{Implicit surface}
\begin{table*}[ht]
\centering
\caption{Implicit surface results on Airplane category. 
CD is multiplied by $10^4$ and EMD is multiplied by $10^2$.}
\label{tab:implicit}
\begin{tabularx}{\textwidth}{LYYYY}
\toprule
Metrics          & AtlasNet-Sph. & AtlasNet-25 & DeepSDF        & Ours \\ \midrule
CD & 1.88           & 2.16         & 1.43           & \textbf{1.022}             \\
CD (median) & 0.79           & 0.65         & \textbf{0.36}  & 0.442                      \\
EMD & 3.8            & 4.1          & \textbf{3.3}   & 5.545                      \\
Mesh acc & 0.013          & 0.013        & \textbf{0.004} & 0.008                                         \\
\bottomrule
\end{tabularx}
\end{table*}

In this section, we demonstrate that one can use MISE~\cite{mescheder2019occupancy}, an octree-based marching cue algorithm, to extract a ground truth mesh that from the learned gradient field.
We compute ground truth meshes for the test set of the airplane category following DeepSDF's set-up.
As mentioned in Section 1 of main paper, prior implicit representations \cite{chen2019learning}\cite{mescheder2019occupancy}\cite{park2019deepsdf} require knowing the ground truth meshes in order to provide a supervision signal during training, while our model can be trained solely from sparse point clouds. 
We conducted a preliminary quantitative comparison between our implicit surface and that of DeepSDF\cite{park2019deepsdf}. 
We follow DeepSDF’s experiment set-up to report results on the airplane category in Table\ref{tab:implicit}.
Our implicit surfaces outperforms AtlasNet in both CD and Mesh accuracy metrics and are competitive with DeepSDF (which uses more supervision) in CD.
Failure cases for our extracted meshes usually comes from the bifurcation area (i.e. the local minimums and saddle points) where gradients are close to zero.
Another problem with our extracted surface is that marching cue tend to create a double surface around the shape,
As our focus is generating point clouds, we will leave the improvement of surface extraction to future work.
\section{Additional Ablation Studies}\label{sec:ablation_studies}
Next we report results of additional ablation studies to evaluating several design considerations and our choice in modeling the distribution of shapes.

\paragraph{Network architecture} \label{sec:ablation-network}
\begin{table*}[t]
\centering
\caption{Architecture ablation study, comparing auto-encoding performance on the Airplane category. CD is multiplied by $10^4$ and EMD is multiplied by $10^2$. The ablated models are detailed in the text.
}
\label{tab:architecture-ablation}
\begin{tabularx}{0.99\textwidth}{l*{6}{Y}}
\toprule
Metrics & (a)     & (ab)    & (abc)   & (bd)    & (abcd)  \\ 
\midrule
CD      & 1.234$\pm$0.007 & 0.992$\pm$0.002 & 0.998$\pm$0.003 & 1.011$\pm$0.008 & 0.987$\pm$0.001 \\
EMD     & 2.718$\pm$0.039 & 2.513$\pm$0.019 & 2.493$\pm$0.010 & 2.462$\pm$0.042 & 2.524$\pm$0.001 \\
\bottomrule
\end{tabularx}
\end{table*}
We evaluate different architectures considering the following:
\begin{enumerate}[(a)]
    \item Replacing BN with CBN.
    \item Adding shortcuts.
    \item Replacing the latent code $z$ with $\tilde{z}=[x,z,\sigma]$ for the CBN layer.
    \item Concatenating the latent code $z$ and $\sigma$ to $x$ as input for the decoder.
\end{enumerate}
In Table \ref{tab:architecture-ablation} we report multiple configurations, with the rightmost one (abcd) corresponding to our full model. For each model, we perform 3 inference runs, and report the average and the standard deviation over these runs. As the table illustrates, our full model yields better performance as well as significantly smaller variance across different runs.

\paragraph{Modeling the distribution of shapes} \label{sec:ablation-gen}
In our work, we propose a new approach of modeling the distribution of points using the gradient of the log density field. To model the distribution of shapes, we use a latent GAN \cite{achlioptas2017learning}. Next, we explore a different method to model the distribution of shapes.
Specifically, we train a VAE using the same encoder and decoder setting as the one used in Section 4.2 in the main paper. 
We double the output dimension for the encoder so that it can output both $\mathbf{\mu}$ and $\mathbf{\sigma}$ for the re-parameterization.
We add the KL-divergence loss with weight $10^{-3}$ and train for the same amount of time (in terms of epochs and iterations) as the model reported in the main paper.
The results for the Chair and Airplane categories are reported in Table~\ref{tab:ablation-generation}.
We can see that our reported model, which uses a two-stage training with a latent-GAN, outperforms the model trained with VAE for both the Chair and the Airplane category on all metrics, except for the 1-NNA-EMD metrics on Airplane.
\begin{table*}[h]
\centering
\caption{Modeling the distribution of shapes using different techniques. We compare our model's performance on the generation task against a VAE model. $\uparrow$ means the higher the better, $\downarrow$ means the lower the better. 
MMD-CD is multiplied by $10^3$ and MMD-EMD is multiplied by $10^2$.}
\label{tab:ablation-generation}
\begin{tabularx}{\textwidth}{LLYYYYYY}
\toprule
&& \multicolumn{2}{c}{MMD ($\downarrow$)} & \multicolumn{2}{c}{COV (\%, $\uparrow$)} & \multicolumn{2}{c}{1-NNA (\%, $\downarrow$)} \\ 
\cmidrule(lr){3-4} \cmidrule(lr){5-6} \cmidrule(l){7-8} 
Category & Model    & CD         & EMD        & CD         & EMD        & CD          & EMD         \\ \midrule
\multirow{3}{*}{Airplane} 
& VAE     & 1.909             & 9.004             & 37.78           & 38.27             & 89.14            & \textbf{86.05} \\
& Ours    & \textbf{1.332}    & \textbf{7.768}    & \textbf{39.01}  & \textbf{43.46}    & \textbf{88.52}   & 86.91         \\
\cmidrule(l{-1pt}r{0pt}){2-8} 
& Train           & 1.288           & 7.036         & 45.43          & 45.43          & 72.10          & 69.38          \\
\midrule
\multirow{3}{*}{Chair} 
& VAE    & 18.032           & 20.903            & 41.99             & 43.81             & 74.17             & 74.92 \\
& Ours   & \textbf{14.818}  & \textbf{18.791}   & \textbf{46.37}    & \textbf{46.22}    & \textbf{66.16}    & \textbf{59.82} \\ 
\cmidrule(l{-1pt}r{0pt}){2-8} 
& Train           & 15.893          & 18.472          & 50.45          & 52.11          & 53.93          & 54.15         \\
\bottomrule
\end{tabularx}
\end{table*}
\section{Additional Qualitative Results}\label{sec:qual_result}
\begin{figure}[ht]
    \centering
    	\jsubfig{\includegraphics[height=2.4cm]{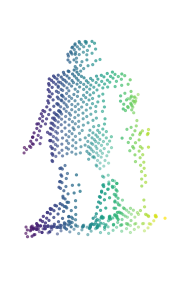}
	\includegraphics[height=2.4cm]{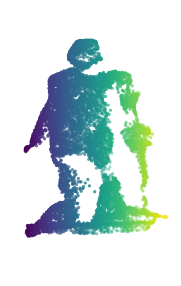}
	\includegraphics[height=2.4cm]{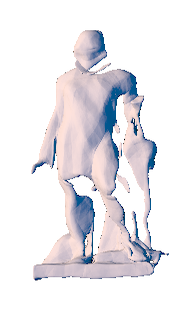}
	}
	{}%
    \hspace{17pt}
	\jsubfig{\includegraphics[height=2.4cm]{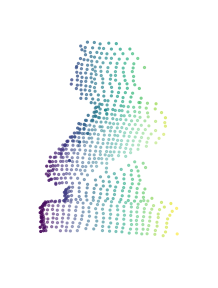}
	\hspace{3pt}
	\includegraphics[height=2.4cm]{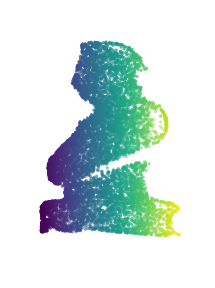}
	\includegraphics[height=2.4cm]{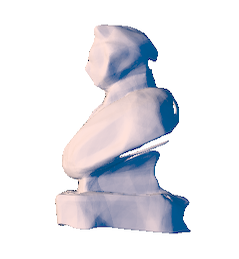}
	}
	{}%
	\hfill
	\\
	\hfill
		\jsubfig{\includegraphics[height=2.39cm]{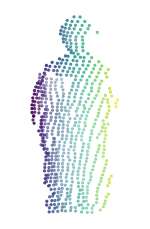} 
	\includegraphics[height=2.4cm]{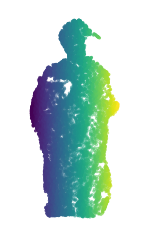}
	\includegraphics[height=2.4cm]{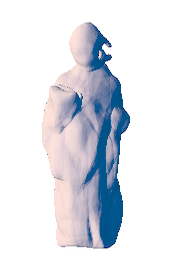}}
	{}%
	\hfill%
	\hspace{5pt}
	\jsubfig{\includegraphics[height=2.4cm]{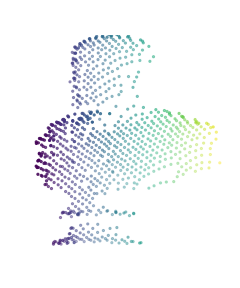}
	\includegraphics[height=2.4cm]{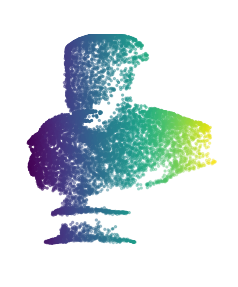}
	\includegraphics[height=2.4cm]{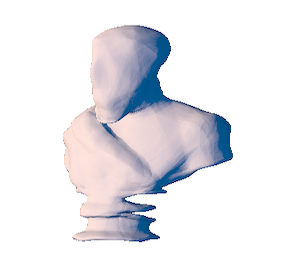}}
	{}%
    \caption{Autoencoding scanned shapes. Above we demonstrate our technique on sparse point clouds captured with a 3D scanner (left). We sample 10K points to obtain the point clouds illustrated in the middle and also extract the implicit surfaces (right). }
    \label{fig:scans}
\end{figure}
\paragraph{Scanned data}
To demonstrate that our technique can also model partial and incomplete shapes, we use scanned point clouds captured using a hand-held 3D scanner, as detailed in Yifan et al.\cite{yifan2019patch}. For each scanned object, we train a separate model, evaluating to what extent we can obtain a high-fidelity point cloud reconstruction for these scanned objects. See Figure \ref{fig:scans} for a qualitative comparison of the input scanned objects which contain roughly 600 points (left columns), reconstructed point clouds (middle columns) and extracted implicit surfaces (right columns). While we cannot model the full shape in this case (as we are only provided with a partial, single view), our technique enables reconstructing a denser representation even in this sparse real setting.

\begin{figure}[ht]
    \centering
    \newcommand{\sizea}{0.3\linewidth}
    \newcommand{\tal}{0cm}
    \newcommand{\tab}{0cm}
    \newcommand{\tar}{0cm}
    \newcommand{\tat}{0cm}
    \newcommand{\tcl}{0cm}
    \newcommand{\tcb}{0cm}
    \newcommand{\tcr}{0cm}
    \newcommand{\tct}{0cm}
    \newcommand{\thl}{0cm}
    \newcommand{\thb}{0cm}
    \newcommand{\thr}{0cm}
    \newcommand{\tht}{0cm}
    \begin{tabular}{ccc}
    \includegraphics[width=\sizea,trim={\tal} {\tab} {\tar} {\tat},clip]{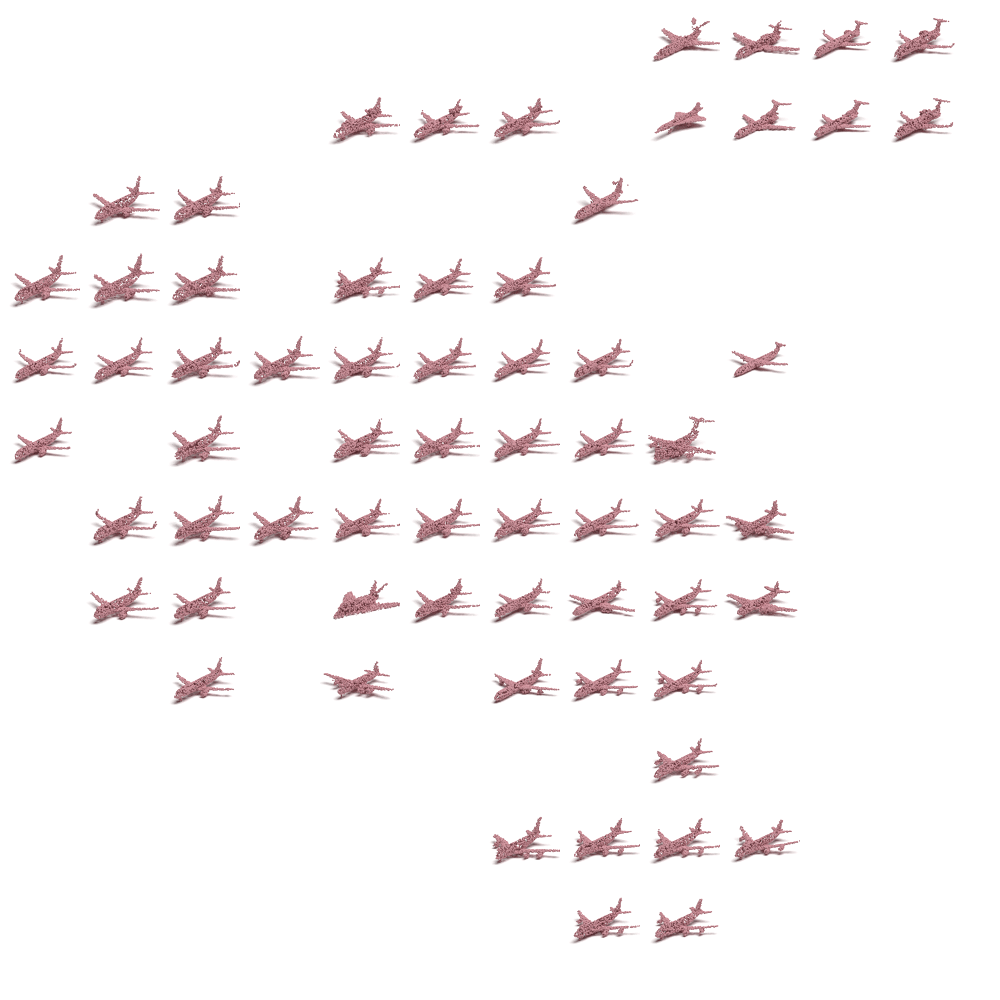}&
    \includegraphics[width=\sizea,trim={\tcl} {\tcb} {\tcr} {\tct},clip]{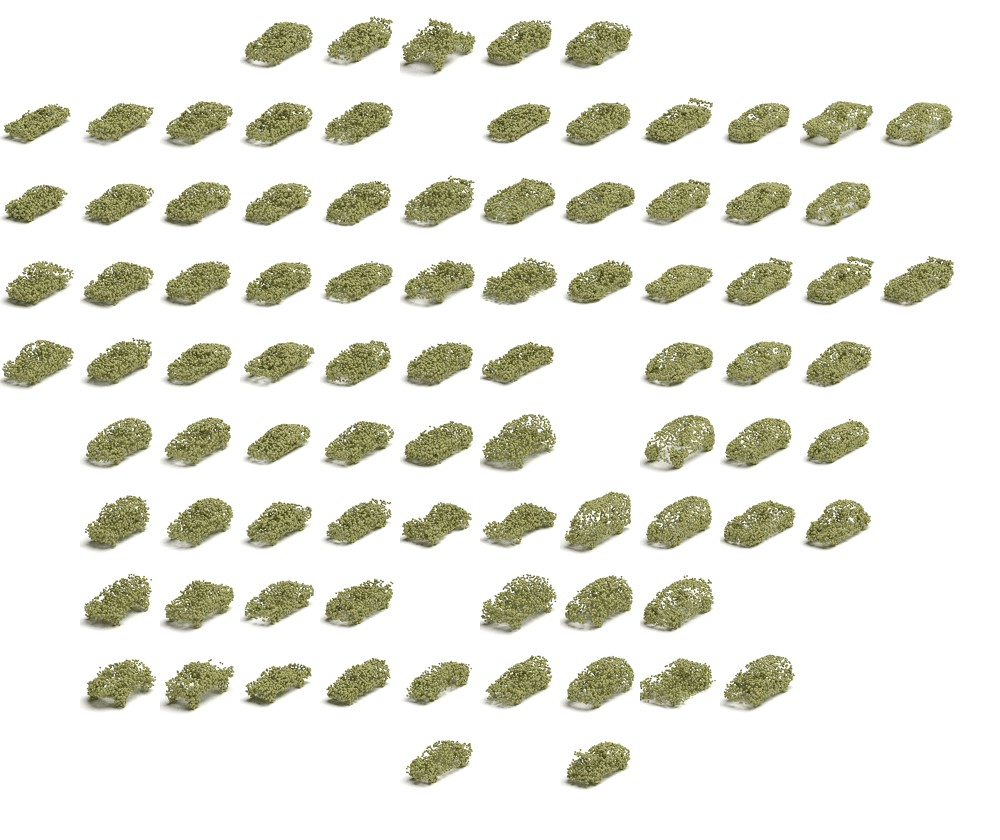}&
    \includegraphics[width=\sizea,trim={\thl} {\thb} {\thr} {\tht},clip]{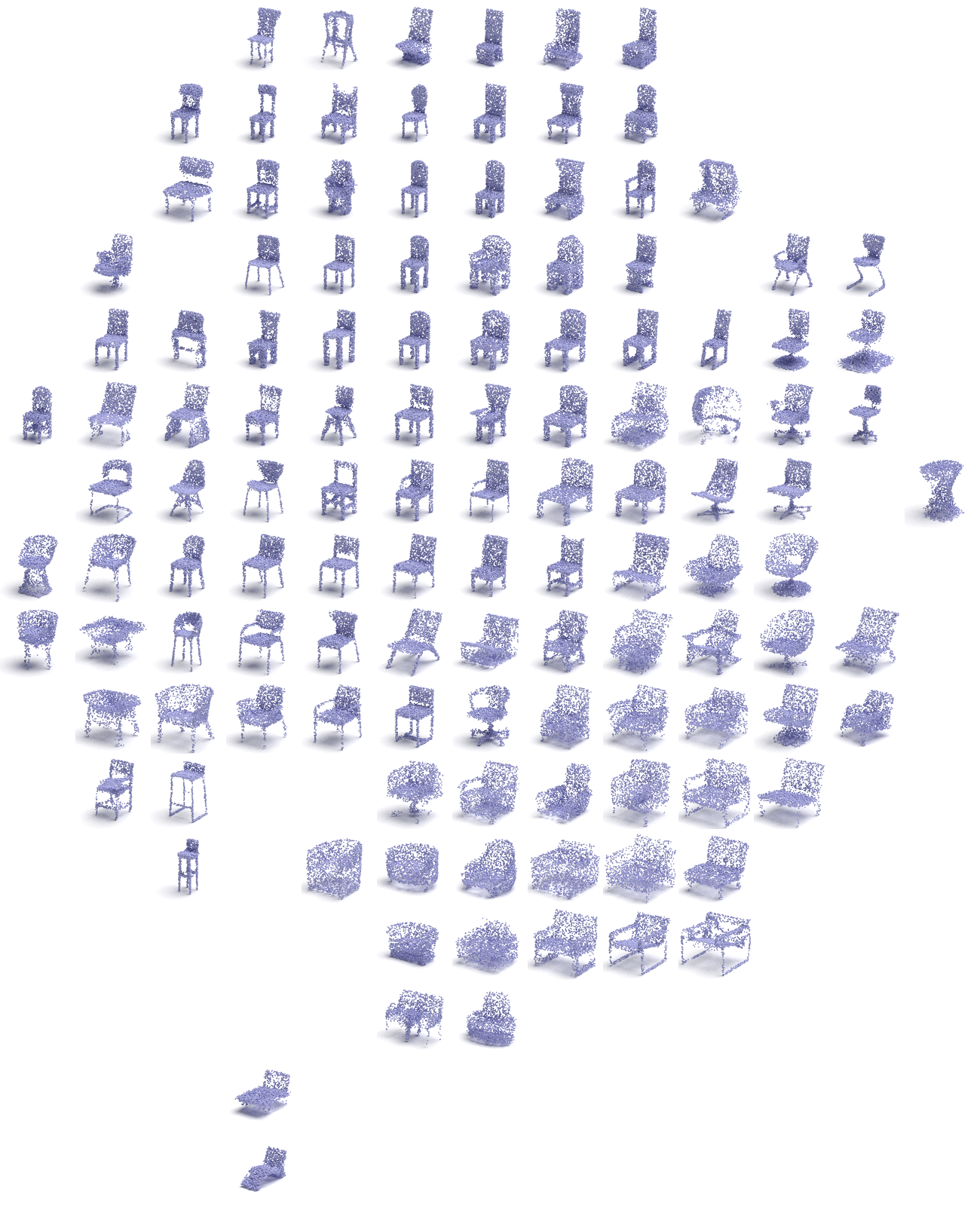}
    \end{tabular}
    \caption{Visualization of the latent space. 
    }
    \label{fig:latent_code}
\end{figure}
\paragraph{Visualization of latent space}
We visualize the latent code space in Figure \ref{fig:latent_code}. 
We first obtain all latent code  $z\in \mathbb{R}^{128}$ by running the encoder on the validation set of Airplane, Car, and Chair.
We run T-SNE~\cite{maaten2008visualizing} on the latent code for the same category to reduce the latent-codes' dimensionality down to $2$. 
These $2$-dimensional latent codes are then used to place rendered reconstructed point clouds on the figure.
The figure shows that the latent code places similar shapes nearby in the latent space, which suggests that we learn a meaningful latent space.

\paragraph{Extended visualizations for 2D and 3D point clouds}
In Figure \ref{fig:inference_mnist_samples_supp}, we demonstrate our annealed Langevin dynamic procedure for 2D point clouds from MNIST-CP. We also show that our model is insensitive to the choice of the prior distribution in the 2D case in Figure \ref{fig:prior}. We show more results on the ShapeNet dataset in Figure \ref{fig:reconstruction} (auto-encoding shapes), Figure \ref{fig:extra_recon}(auto-encoding point clouds), Figure \ref{fig:generation_supp} (shape generation), and Figure \ref{fig:gen_interp} (shape interpolation).

\begin{figure}[ht]
\centering
     \includegraphics[width=1.0\linewidth]{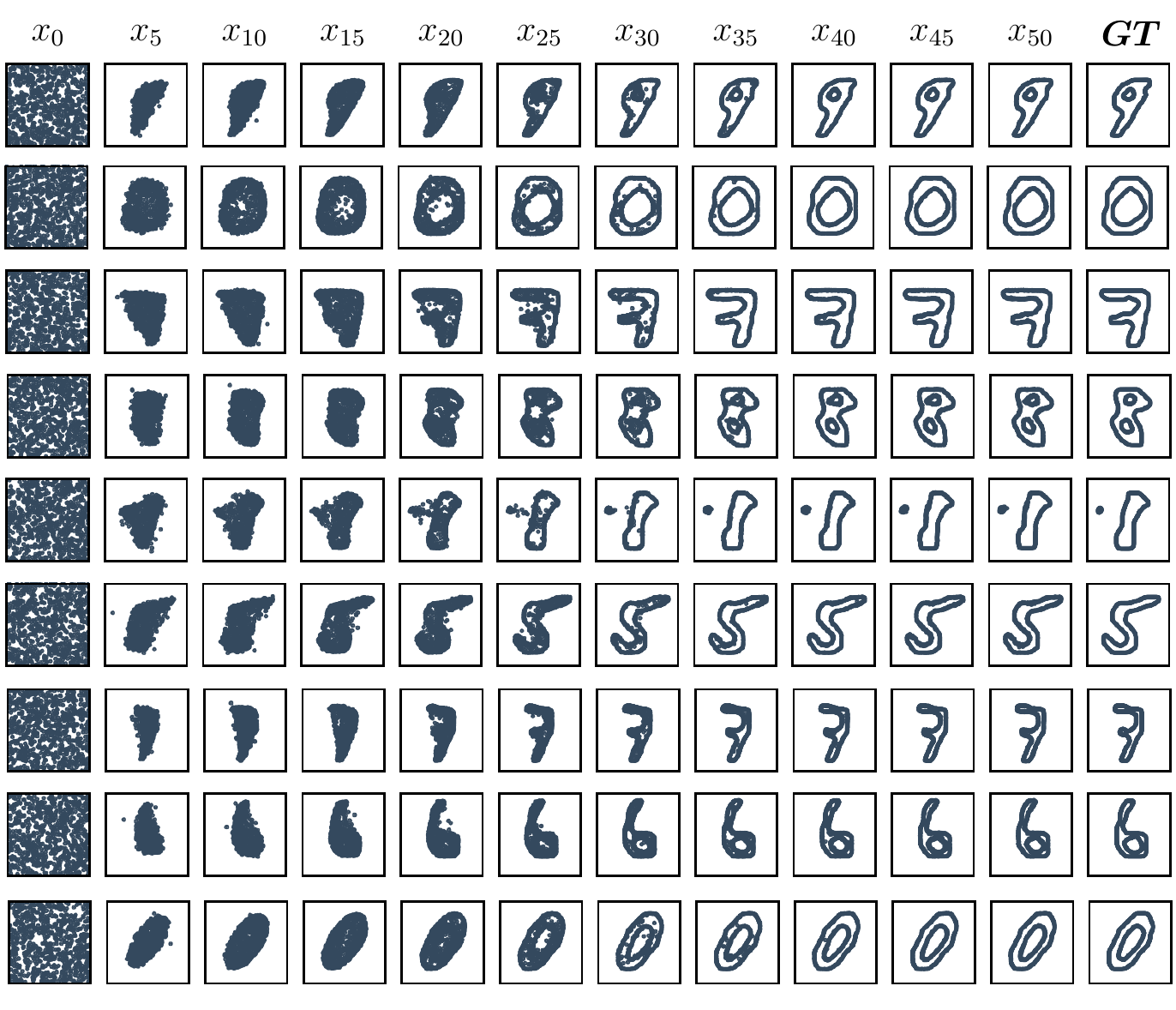}
    \caption{Point cloud sampling on the MNIST-CP dataset. Above we illustrate our annealed Langevin dynamics procedure for 2D shapes. Starting with points sampled from a uniform distribution, the points gradually move along the logarithmic density field. As illustrated on the right side, eventually these points are mostly indistinguishable from the ground truth point clouds.      }
    \label{fig:inference_mnist_samples_supp}
\end{figure}
\begin{figure}[H]

    \rotatebox[origin=l]{90}{\small{\whitetxt{s}Results (different priors)  \whitetxt{sssss}GT}}
    \hfill
    \includegraphics[width=0.98\textwidth,trim={1.2cm 0 0 0},clip]{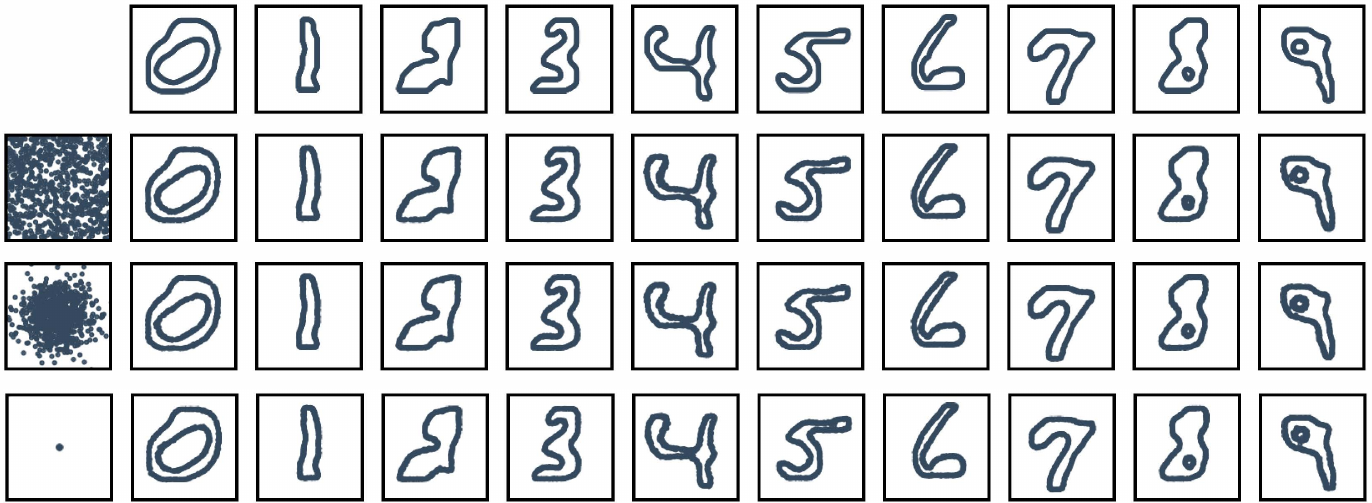}
    \caption{Reconstructing MNIST-CP shapes (illustrated on top), starting from different prior distributions. Above we demonstrate reconstruction results obtained starting from a uniform (second row), Gaussian (third row) or a single point (fourth row) distribution. As the figure illustrates, our method is insensitive to the prior distribution.}
    \label{fig:prior}
\end{figure}

\begin{figure}
    \begin{center}
    \newcommand{\sizea}{0.16\linewidth}
    \includegraphics[width=\sizea,trim={3cm 1cm 4cm 0.5cm},clip]{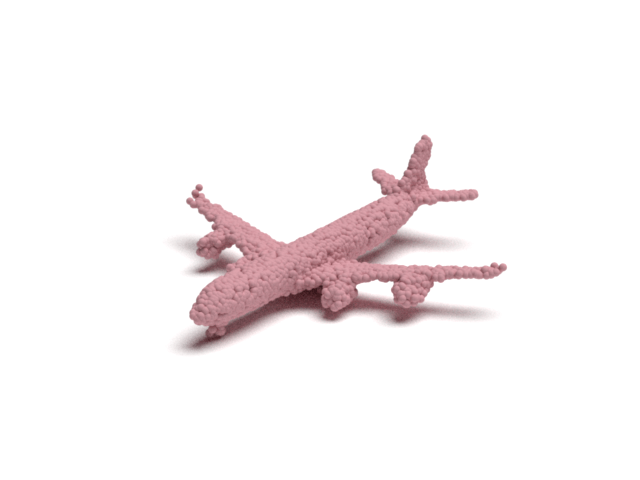}
     \includegraphics[width=\sizea,trim={3cm 1cm 4cm 0.5cm},clip]{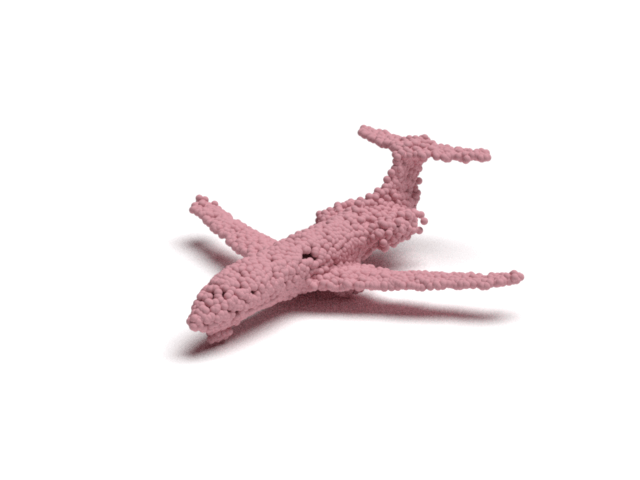}
     \includegraphics[width=\sizea,trim={3cm 1cm 4cm 0.5cm},clip]{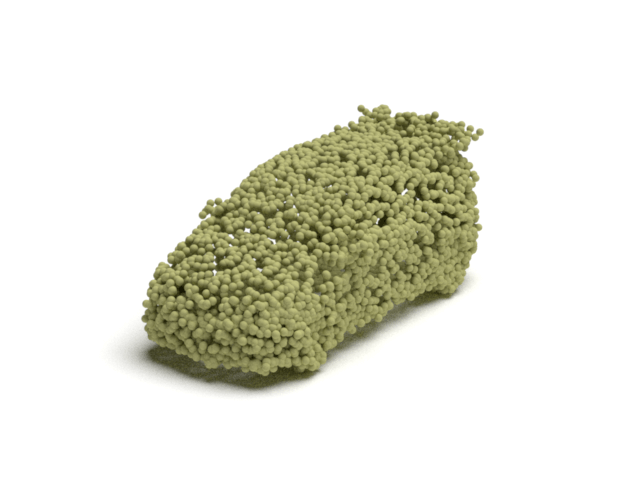}
     \includegraphics[width=\sizea,trim={3cm 1cm 4cm 0.5cm},clip]{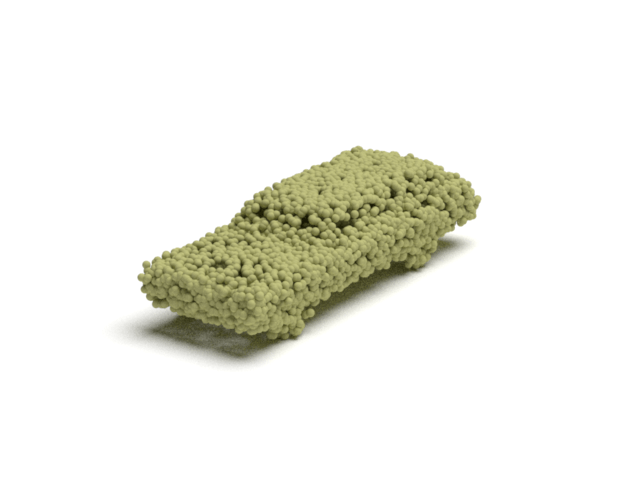}
     \includegraphics[width=\sizea,trim={3cm 1cm 4cm 0.5cm},clip]{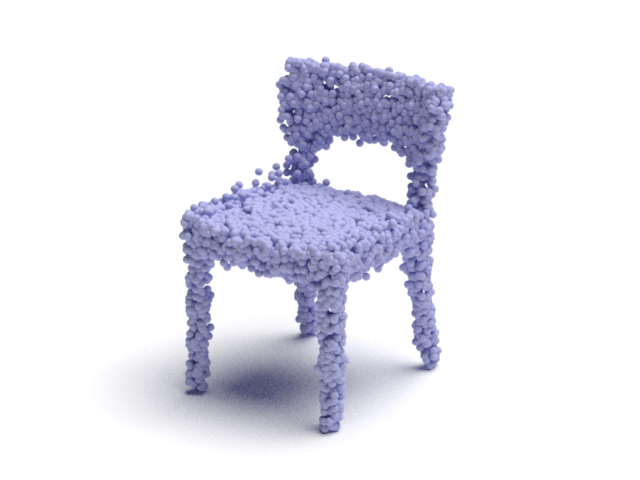}
    \includegraphics[width=\sizea,trim={3cm 1cm 4cm 0.5cm},clip]{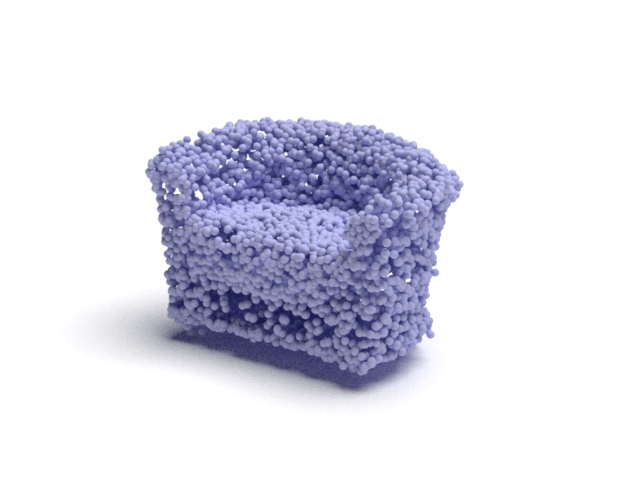}
     \\
    \includegraphics[width=\sizea,trim={3cm 1cm 4cm 0.5cm},clip]{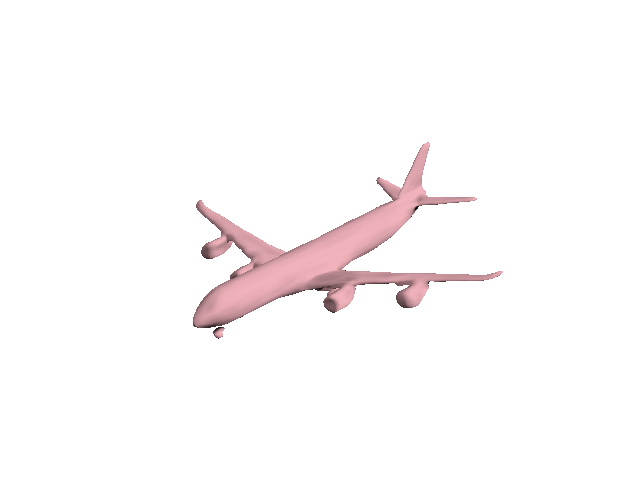}
     \includegraphics[width=\sizea,trim={3cm 1cm 4cm 0.5cm},clip]{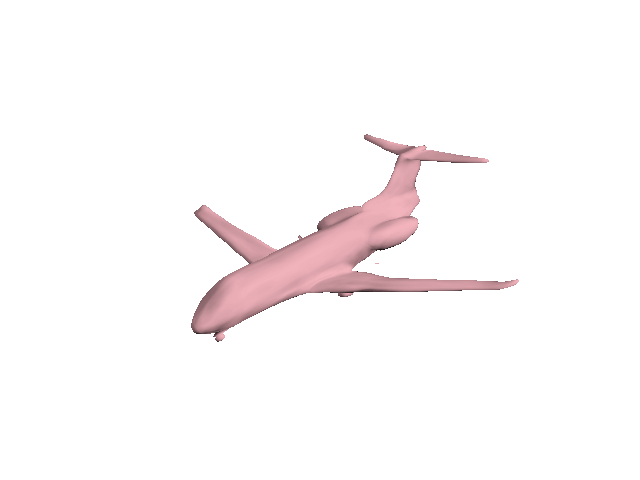}
     \includegraphics[width=\sizea,trim={3cm 1cm 4cm 0.5cm},clip]{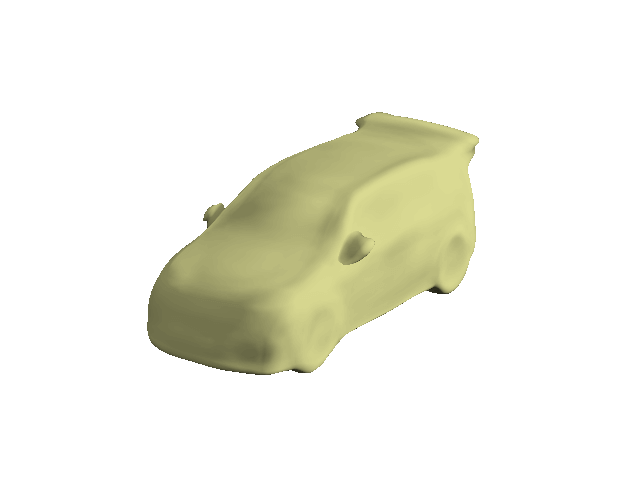}
     \includegraphics[width=\sizea,trim={3cm 1cm 4cm 0.5cm},clip]{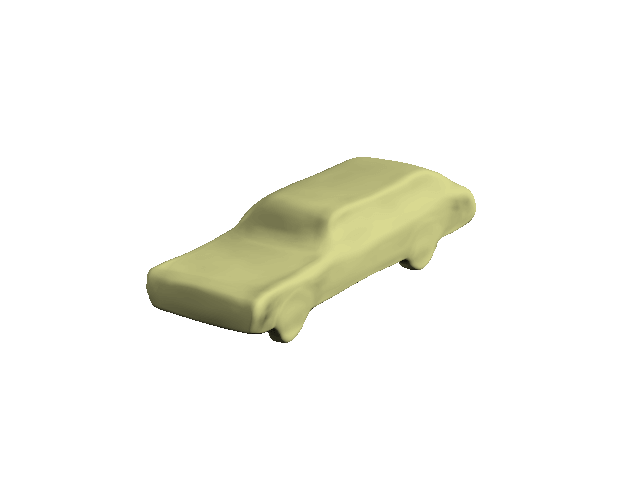}
    \includegraphics[width=\sizea,trim={3cm 1cm 4cm 0.5cm},clip]{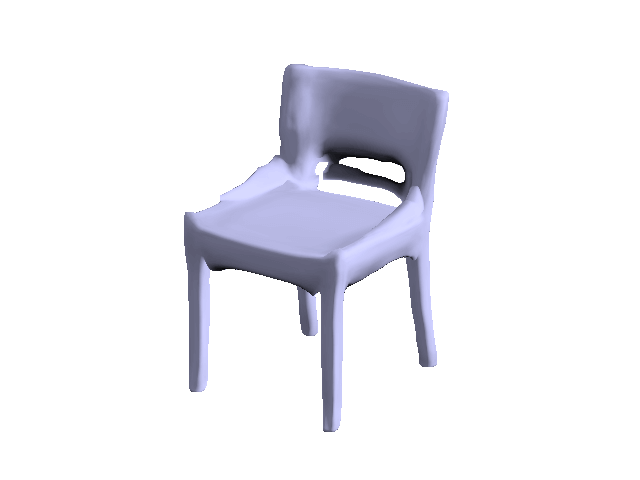}
    \includegraphics[width=\sizea,trim={3cm 1cm 4cm 0.5cm},clip]{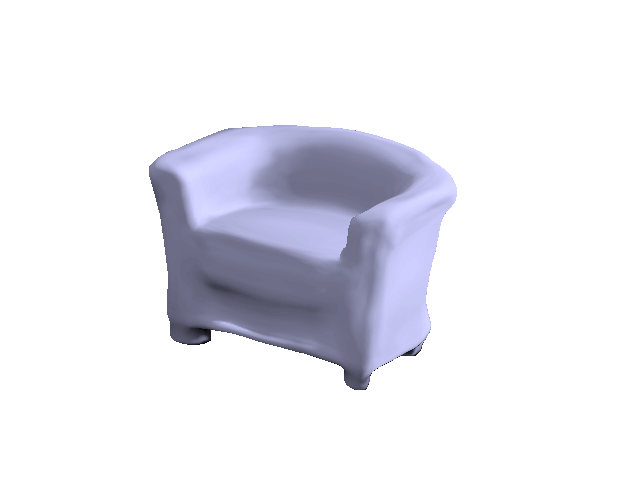}
    \end{center}
    \caption{Examples of reconstruction results. The first row depicts reconstructed shapes from the Airplane, Car and Chair categories. The second row is the corresponding implicit surfaces.
    }
    \label{fig:reconstruction}
\end{figure}
\begin{figure}
    \begin{center}
    \newcommand{\sizea}{0.16\linewidth}
    \newcommand{\tal}{5.0cm}
    \newcommand{\tab}{5cm}
    \newcommand{\tar}{5.0cm}
    \newcommand{\tat}{5cm}
    \newcommand{\tcl}{4.0cm}
    \newcommand{\tcb}{4cm}
    \newcommand{\tcr}{4.0cm}
    \newcommand{\tct}{4cm}
    \newcommand{\thl}{1.5cm}
    \newcommand{\thb}{0.5cm}
    \newcommand{\thr}{2.0cm}
    \newcommand{\tht}{0.5cm}
    \includegraphics[width=\sizea, trim={\tal} {\tab} {\tar} {\tat},clip]{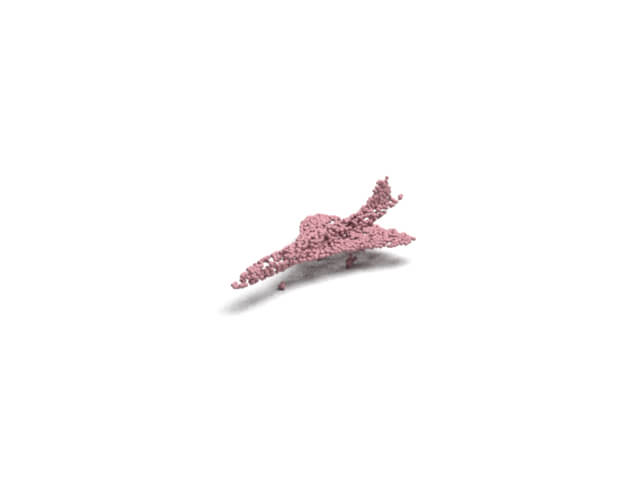}
    \includegraphics[width=\sizea, trim={\tal} {\tab} {\tar} {\tat},clip]{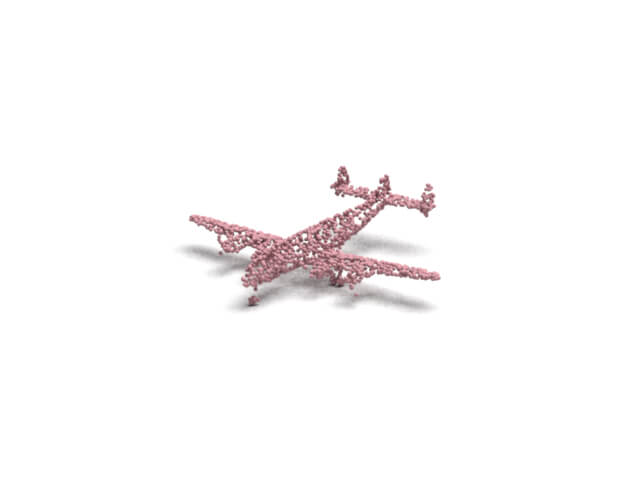}
    \includegraphics[width=\sizea, trim={\tal} {\tab} {\tar} {\tat},clip]{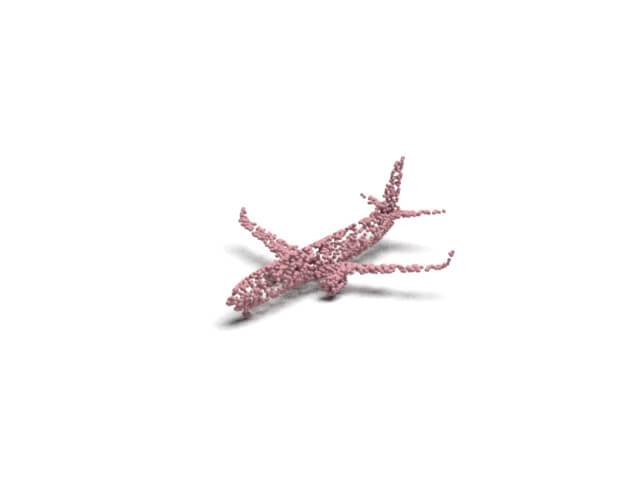}
    \includegraphics[width=\sizea, trim={\tal} {\tab} {\tar} {\tat},clip]{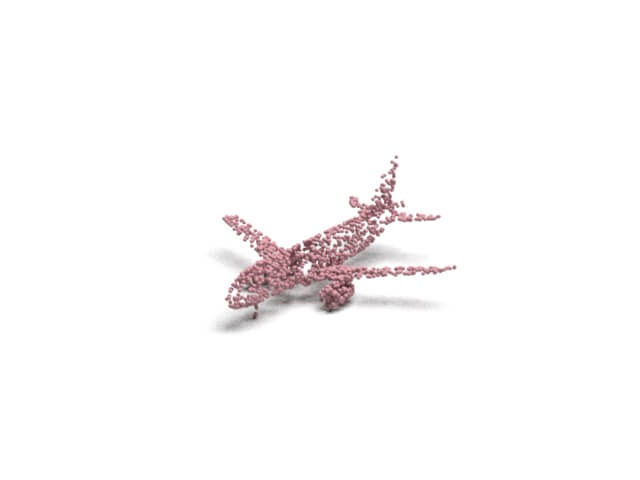}
    \includegraphics[width=\sizea, trim={\tal} {\tab} {\tar} {\tat},clip]{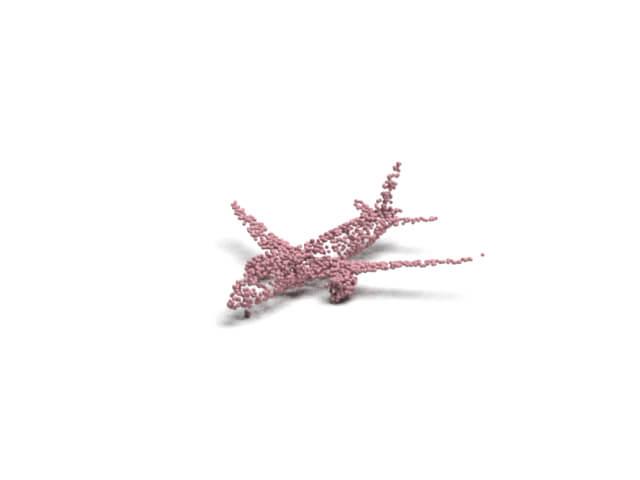}
    \includegraphics[width=\sizea, trim={\tal} {\tab} {\tar} {\tat},clip]{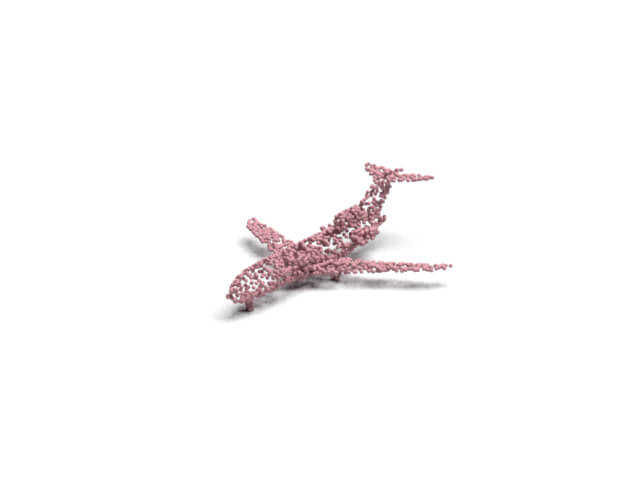}
    \\
    \includegraphics[width=\sizea, trim={\tal} {\tab} {\tar} {\tat},clip]{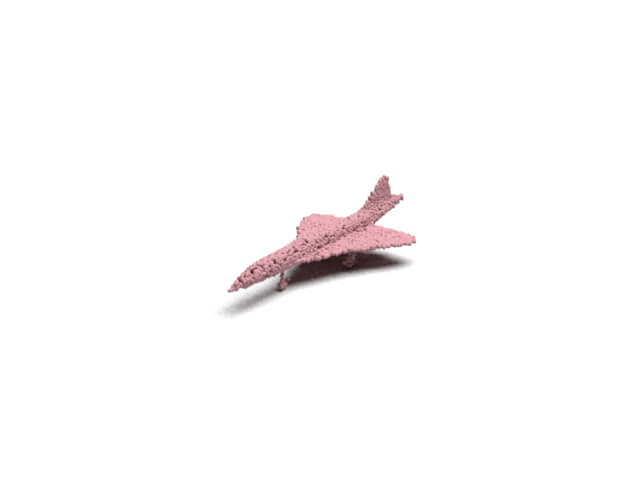}
    \includegraphics[width=\sizea, trim={\tal} {\tab} {\tar} {\tat},clip]{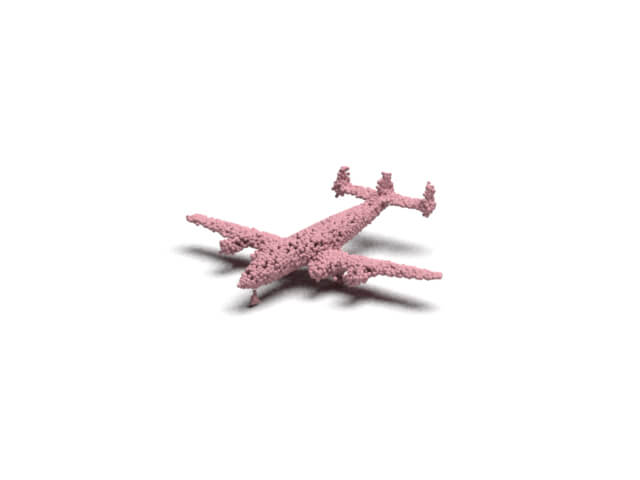}
    \includegraphics[width=\sizea, trim={\tal} {\tab} {\tar} {\tat},clip]{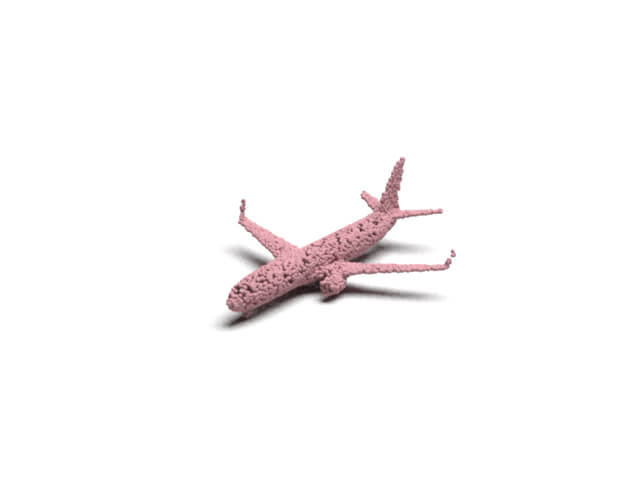}
    \includegraphics[width=\sizea, trim={\tal} {\tab} {\tar} {\tat},clip]{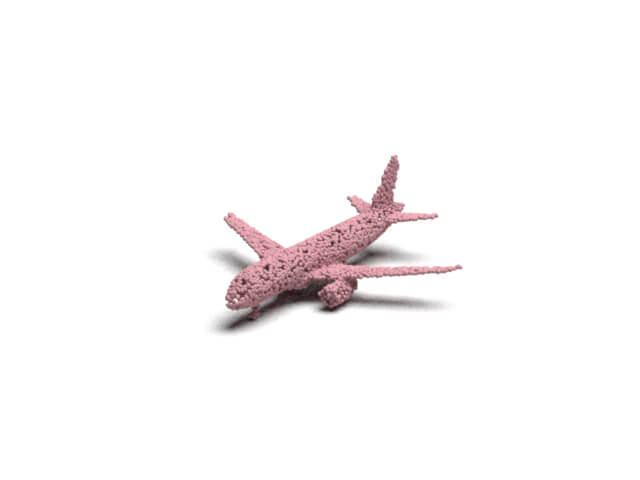}
    \includegraphics[width=\sizea, trim={\tal} {\tab} {\tar} {\tat},clip]{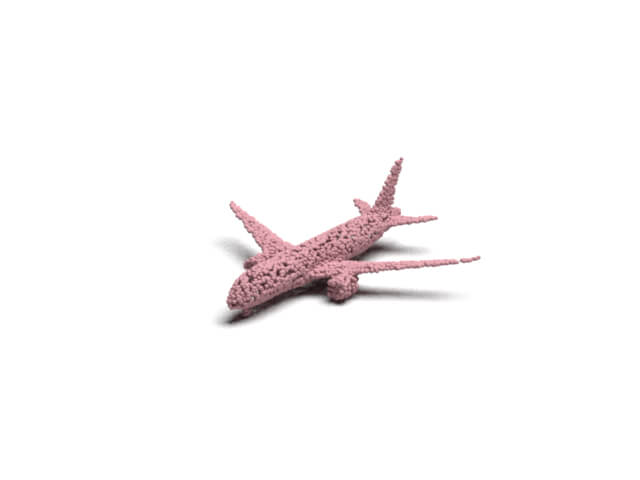}
    \includegraphics[width=\sizea, trim={\tal} {\tab} {\tar} {\tat},clip]{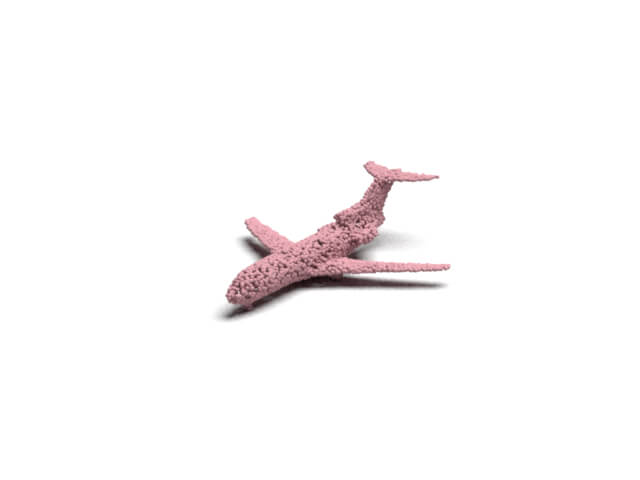}
    \\
    \includegraphics[width=\sizea, trim={\tcl} {\tcb} {\tcr} {\tct},clip]{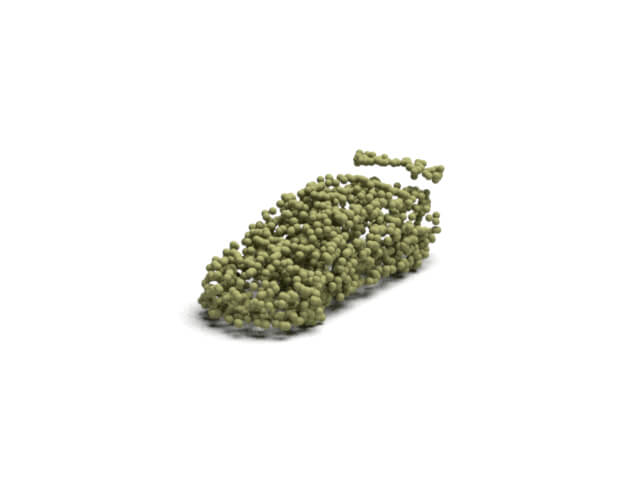} 
    \includegraphics[width=\sizea, trim={\tcl} {\tcb} {\tcr} {\tct},clip]{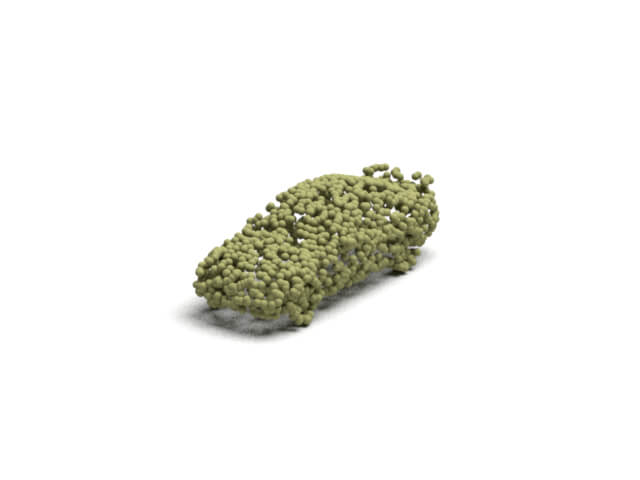} 
    \includegraphics[width=\sizea, trim={\tcl} {\tcb} {\tcr} {\tct},clip]{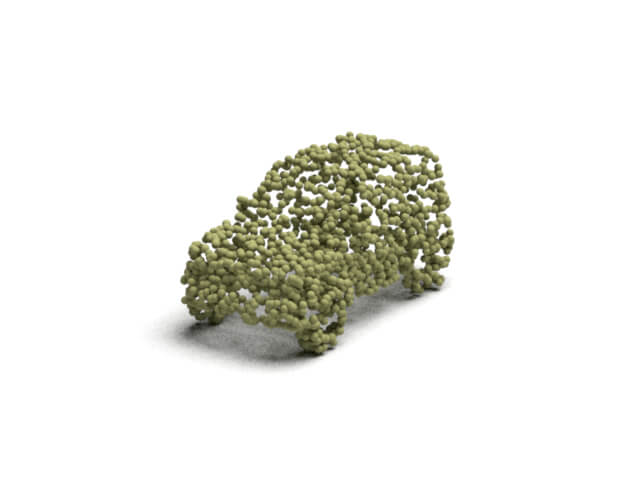} 
    \includegraphics[width=\sizea, trim={\tcl} {\tcb} {\tcr} {\tct},clip]{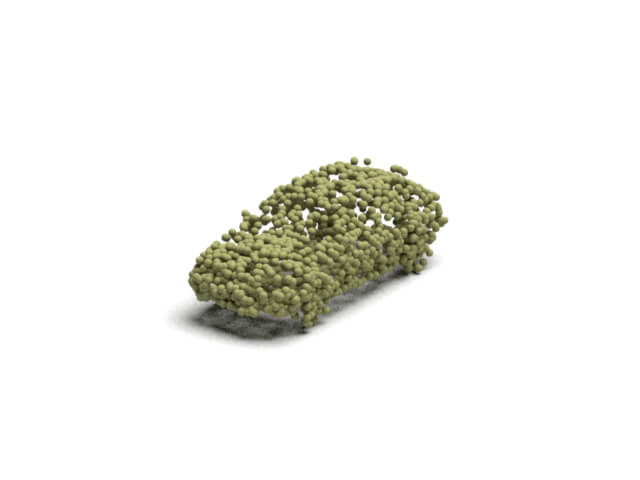}
    \includegraphics[width=\sizea, trim={\tcl} {\tcb} {\tcr} {\tct},clip]{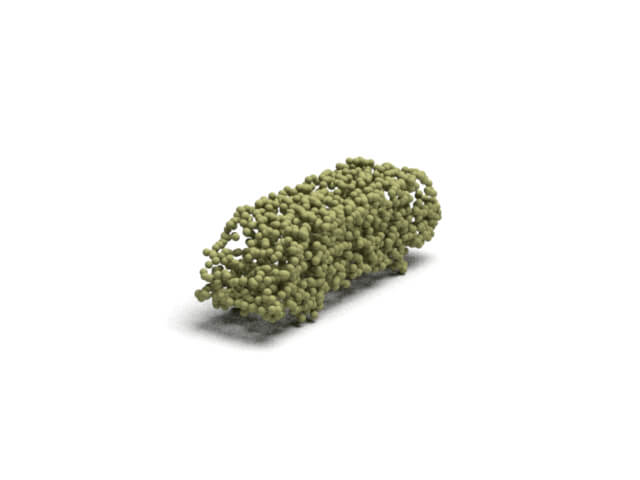} 
    \includegraphics[width=\sizea, trim={\tcl} {\tcb} {\tcr} {\tct},clip]{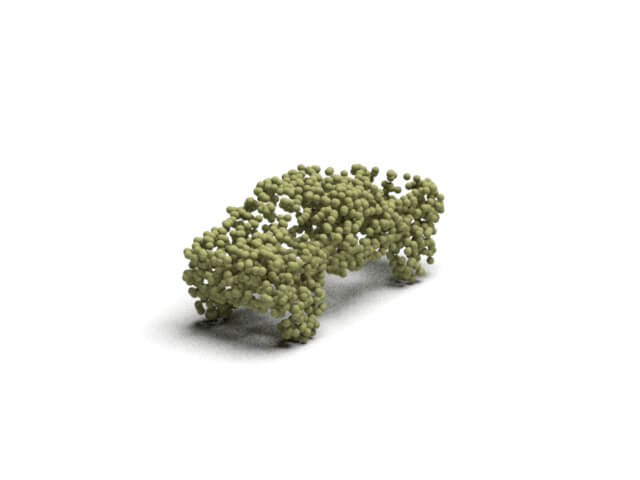} 
     \\
     \includegraphics[width=\sizea, trim={\tcl} {\tcb} {\tcr} {\tct},clip]{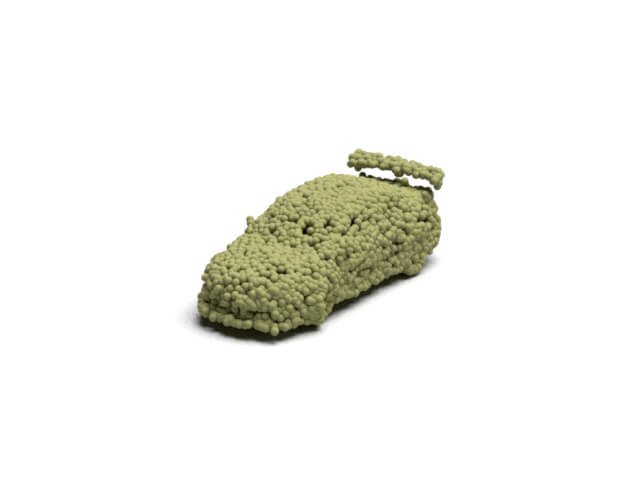} 
    \includegraphics[width=\sizea, trim={\tcl} {\tcb} {\tcr} {\tct},clip]{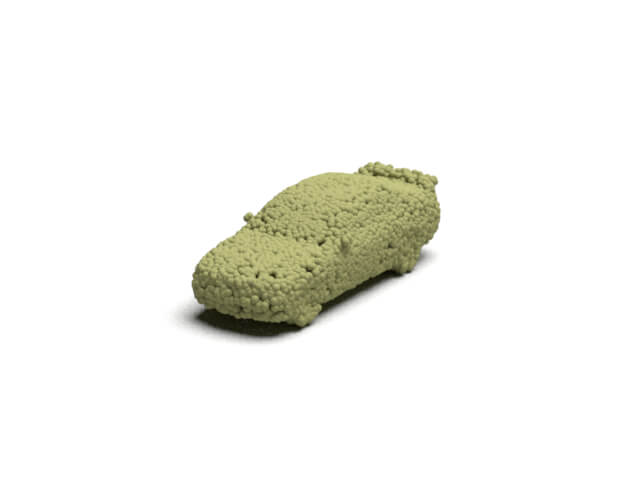} 
    \includegraphics[width=\sizea, trim={\tcl} {\tcb} {\tcr} {\tct},clip]{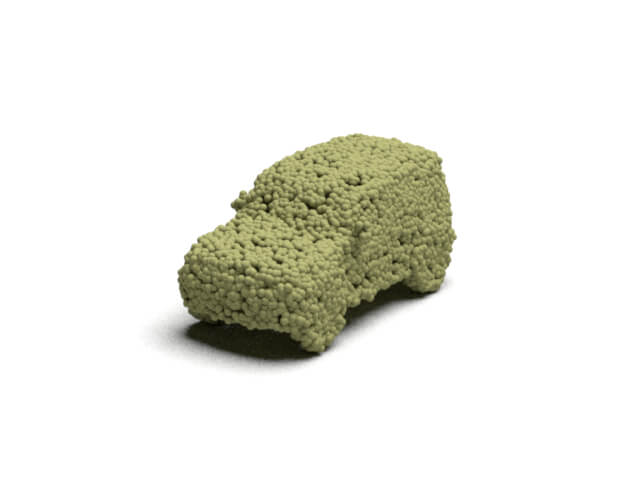} 
    \includegraphics[width=\sizea, trim={\tcl} {\tcb} {\tcr} {\tct},clip]{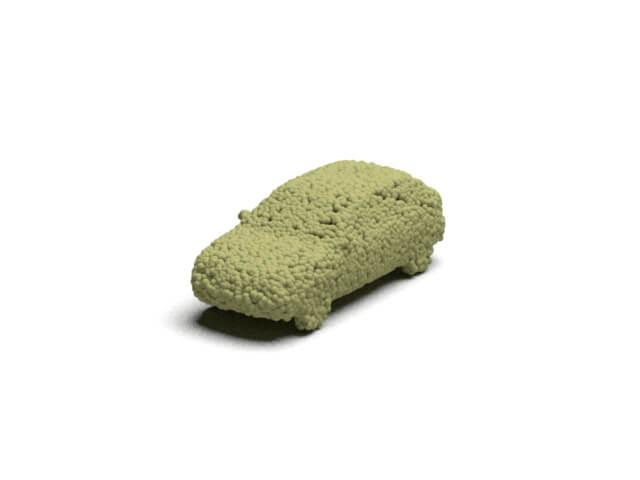}
    \includegraphics[width=\sizea, trim={\tcl} {\tcb} {\tcr} {\tct},clip]{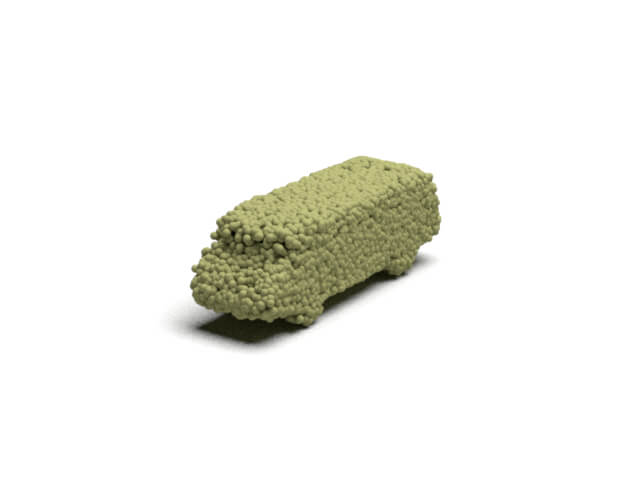} 
    \includegraphics[width=\sizea, trim={\tcl} {\tcb} {\tcr} {\tct},clip]{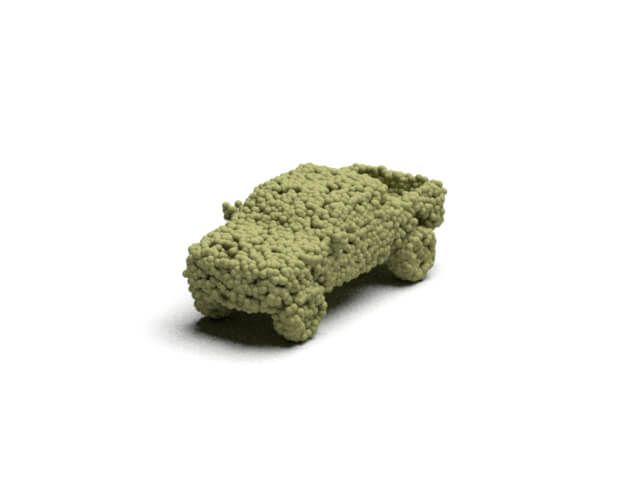}
     \\
    \includegraphics[width=\sizea, trim={\thl} {\thb} {\thr} {\tht},clip]{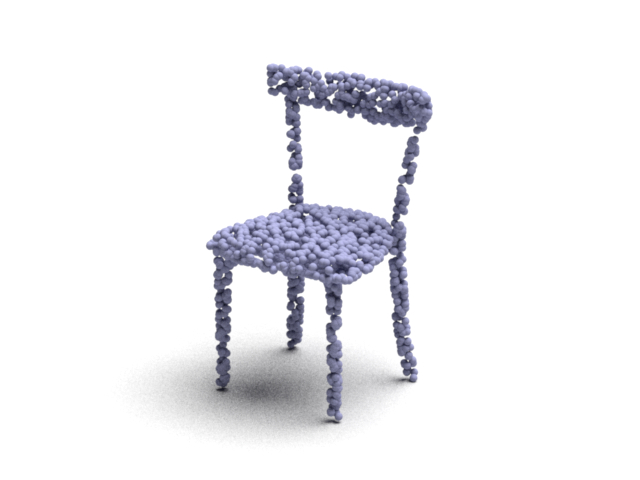}
    \includegraphics[width=\sizea, trim={\thl} {\thb} {\thr} {\tht},clip]{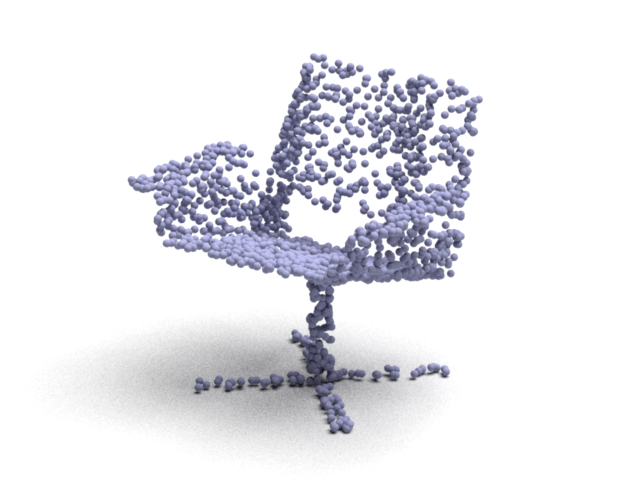}
    \includegraphics[width=\sizea, trim={\thl} {\thb} {\thr} {\tht},clip]{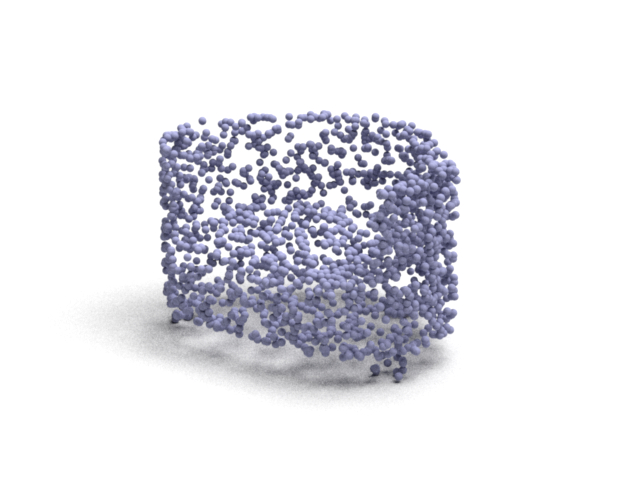}
    \includegraphics[width=\sizea, trim={\thl} {\thb} {\thr} {\tht},clip]{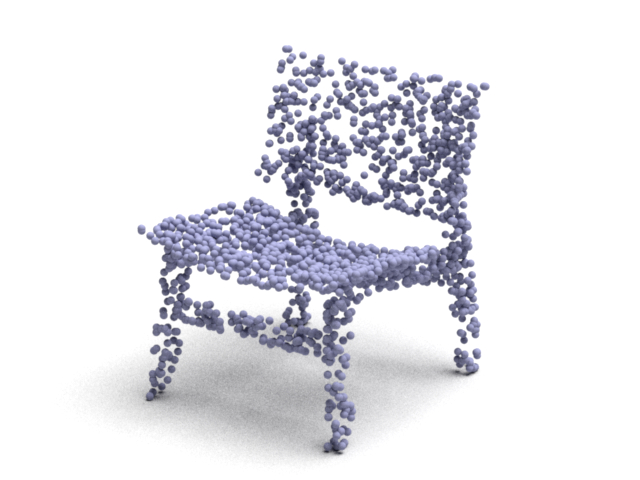}
    \includegraphics[width=\sizea, trim={\thl} {\thb} {\thr} {\tht},clip]{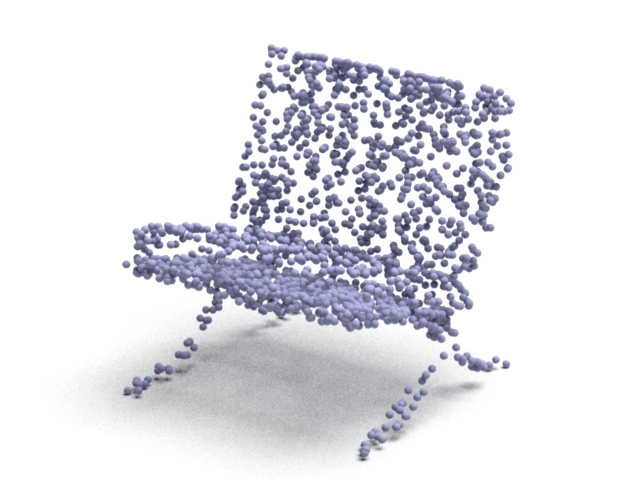}
    \includegraphics[width=\sizea, trim={\thl} {\thb} {\thr} {\tht},clip]{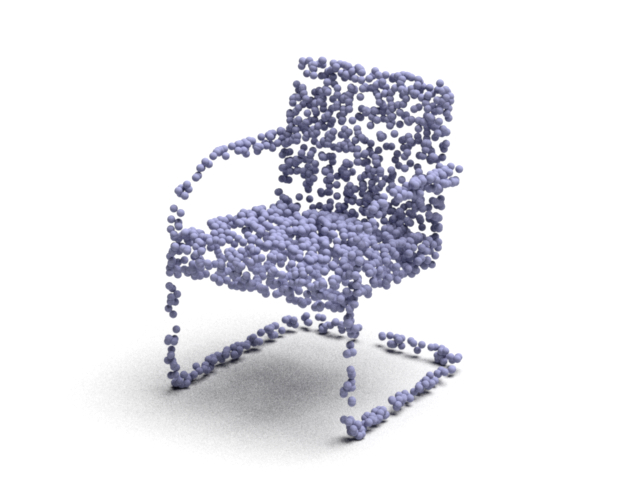}
    \\
    \includegraphics[width=\sizea, trim={\thl} {\thb} {\thr} {\tht},clip]{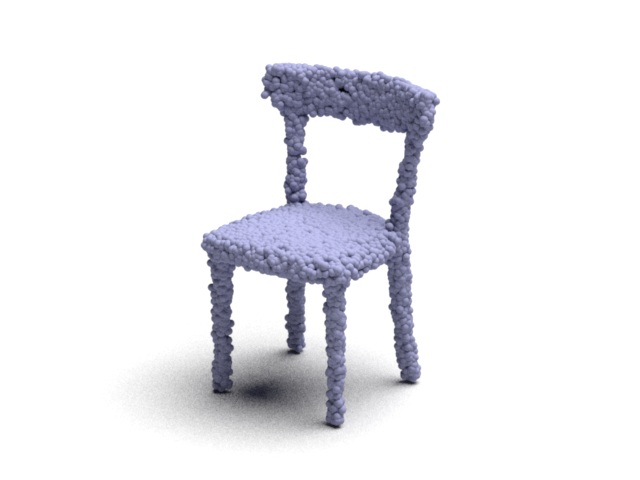}
    \includegraphics[width=\sizea, trim={\thl} {\thb} {\thr} {\tht},clip]{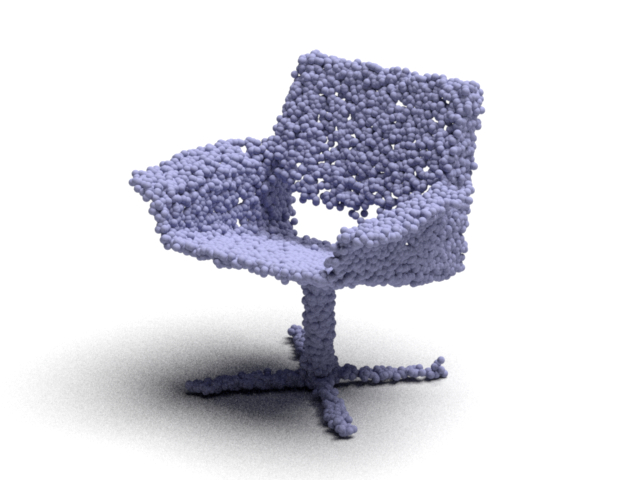}
    \includegraphics[width=\sizea, trim={\thl} {\thb} {\thr} {\tht},clip]{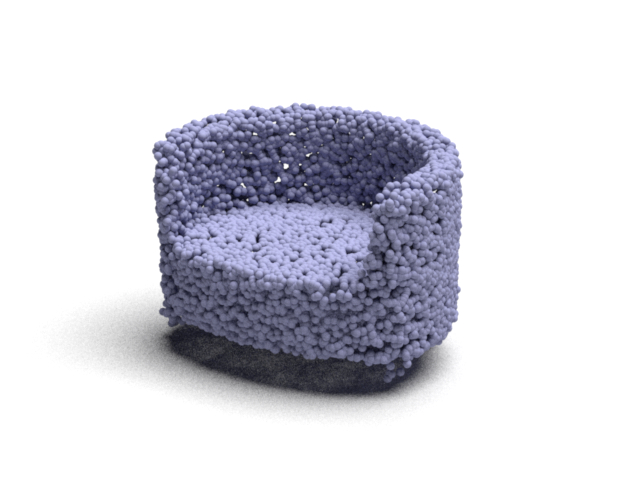}
    \includegraphics[width=\sizea, trim={\thl} {\thb} {\thr} {\tht},clip]{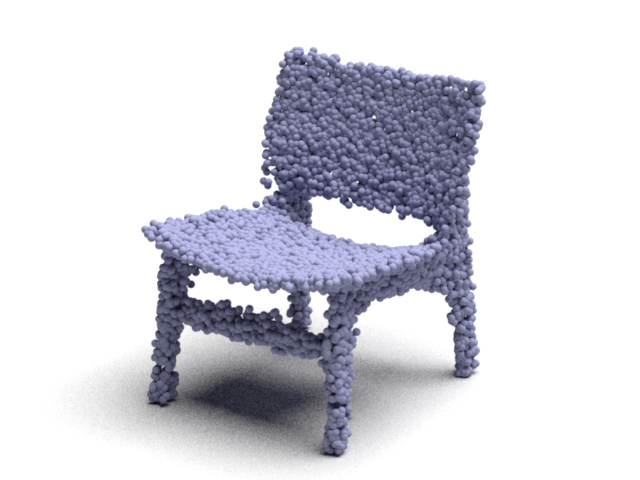}
    \includegraphics[width=\sizea, trim={\thl} {\thb} {\thr} {\tht},clip]{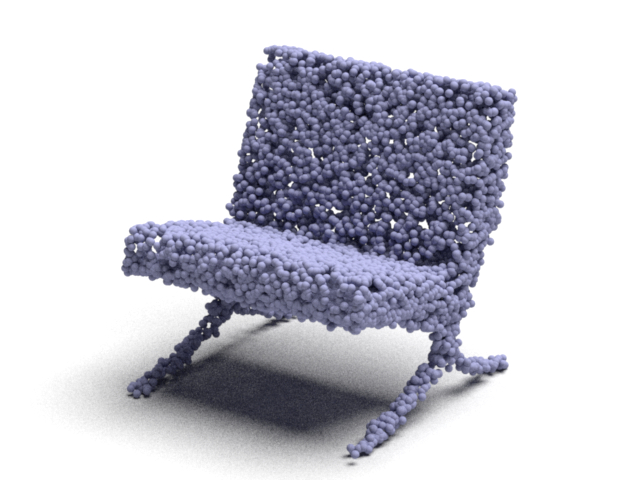}
    \includegraphics[width=\sizea, trim={\thl} {\thb} {\thr} {\tht},clip]{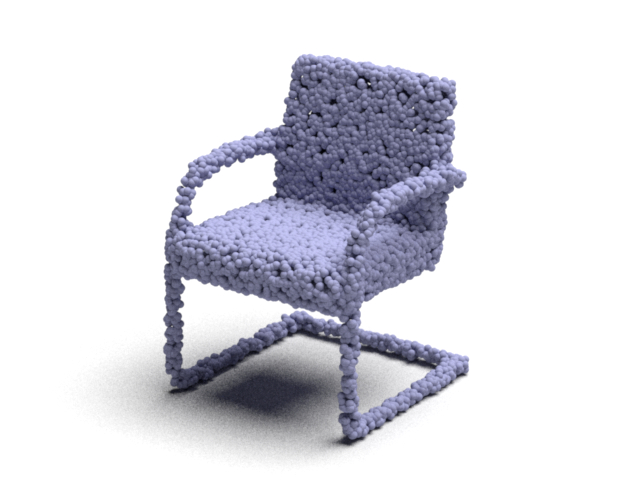}
    \\
    \end{center}
    \caption{Examples of reconstruction results on the Airplane, Car and Chair categories. For each category, the first row is the input point cloud, and the second row is the up-sampling output point cloud.
    }
    \label{fig:extra_recon}
\end{figure}

\begin{figure}
    \begin{center}
    \newcommand{\sizea}{0.16\linewidth}
    \newcommand{\tal}{1.0cm}
    \newcommand{\tab}{1.5cm}
    \newcommand{\tar}{1.0cm}
    \newcommand{\tat}{2cm}
    \newcommand{\tcl}{1.0cm}
    \newcommand{\tcb}{1.5cm}
    \newcommand{\tcr}{1.0cm}
    \newcommand{\tct}{2cm}
    \newcommand{\thl}{2.0cm}
    \newcommand{\thb}{0.5cm}
    \newcommand{\thr}{2.0cm}
    \newcommand{\tht}{0.5cm}
    \setlength{\tabcolsep}{0pt}
    \renewcommand{\arraystretch}{0}
    \begin{tabular}{@{}cccccc@{}}
        \includegraphics[width=\sizea, trim={\tal} {\tab} {\tar} {\tat},clip]{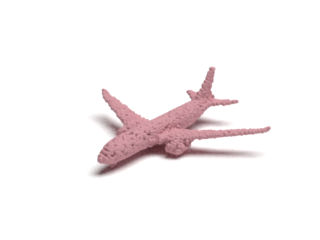}&
        \includegraphics[width=\sizea, trim={\tal} {\tab} {\tar} {\tat},clip]{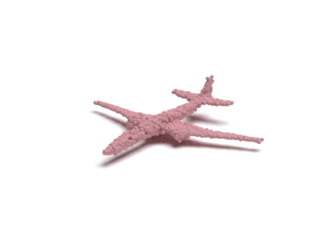}&
        \includegraphics[width=\sizea, trim={\tal} {\tab} {\tar} {\tat},clip]{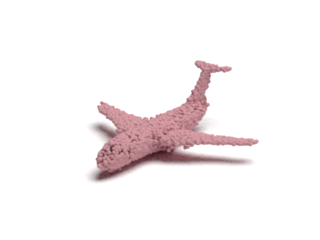}&
        \includegraphics[width=\sizea, trim={\tal} {\tab} {\tar} {\tat},clip]{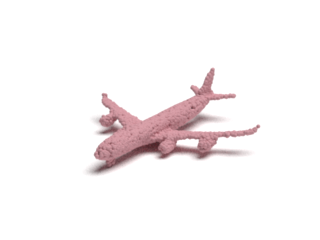}&
        \includegraphics[width=\sizea, trim={\tal} {\tab} {\tar} {\tat},clip]{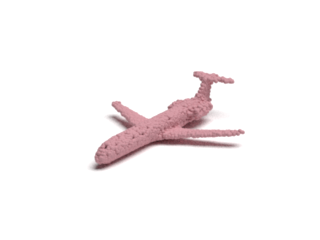}&
        \includegraphics[width=\sizea, trim={\tal} {\tab} {\tar} {\tat},clip]{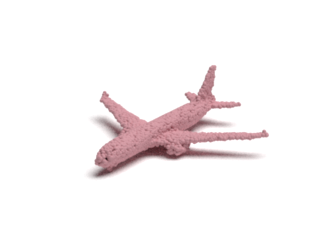}\\
        \includegraphics[width=\sizea, trim={\tal} {\tab} {\tar} {\tat},clip]{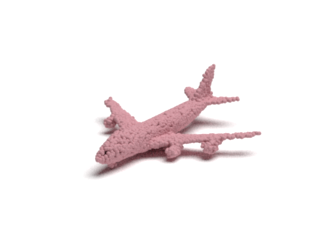}&
        \includegraphics[width=\sizea, trim={\tal} {\tab} {\tar} {\tat},clip]{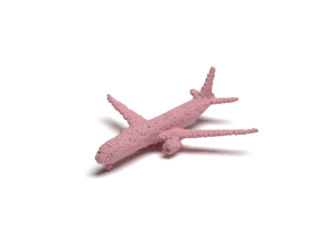}&
        \includegraphics[width=\sizea, trim={\tal} {\tab} {\tar} {\tat},clip]{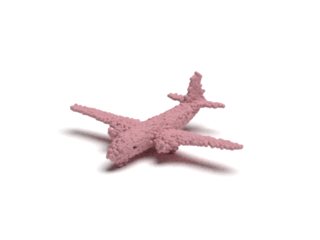}&
        \includegraphics[width=\sizea, trim={\tal} {\tab} {\tar} {\tat},clip]{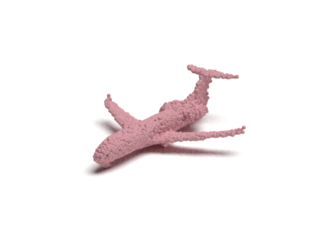}&
        \includegraphics[width=\sizea, trim={\tal} {\tab} {\tar} {\tat},clip]{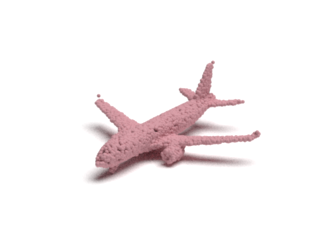}&
        \includegraphics[width=\sizea, trim={\tal} {\tab} {\tar} {\tat},clip]{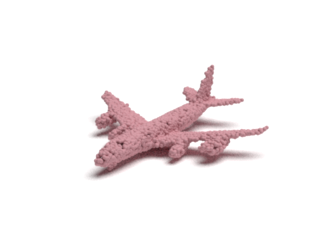}
        \\
        \includegraphics[width=\sizea, trim={\tcl} {\tcb} {\tcr} {\tct},clip]{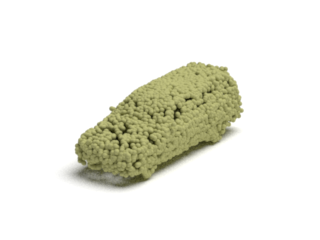} &
        \includegraphics[width=\sizea, trim={\tcl} {\tcb} {\tcr} {\tct},clip]{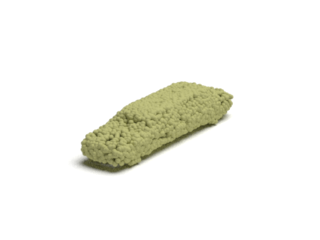} &
         \includegraphics[width=\sizea, trim={\tcl} {\tcb} {\tcr} {\tct},clip]{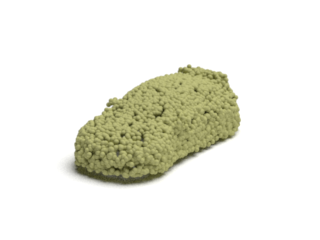}&
         \includegraphics[width=\sizea, trim={\tcl} {\tcb} {\tcr} {\tct},clip]{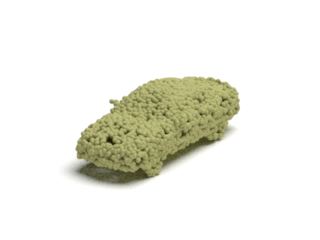}&
         \includegraphics[width=\sizea, trim={\tcl} {\tcb} {\tcr} {\tct},clip]{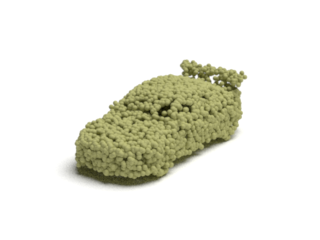}&
         \includegraphics[width=\sizea, trim={\tcl} {\tcb} {\tcr} {\tct},clip]{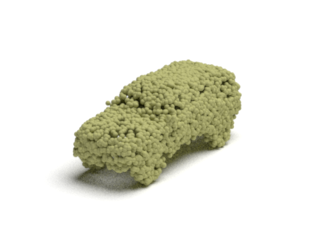}\\
         \includegraphics[width=\sizea, trim={\tcl} {\tcb} {\tcr} {\tct},clip]{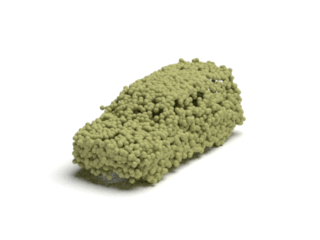}&
         \includegraphics[width=\sizea, trim={\tcl} {\tcb} {\tcr} {\tct},clip]{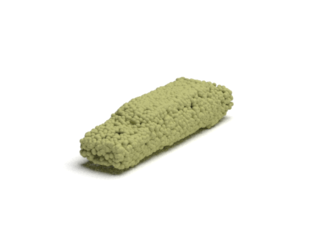}&
         \includegraphics[width=\sizea, trim={\tcl} {\tcb} {\tcr} {\tct},clip]{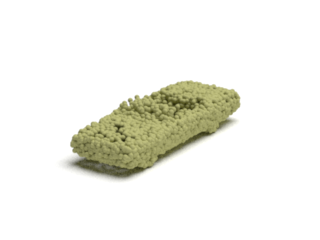}&
         \includegraphics[width=\sizea, trim={\tcl} {\tcb} {\tcr} {\tct},clip]{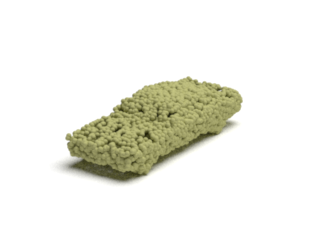}&
         \includegraphics[width=\sizea, trim={\tcl} {\tcb} {\tcr} {\tct},clip]{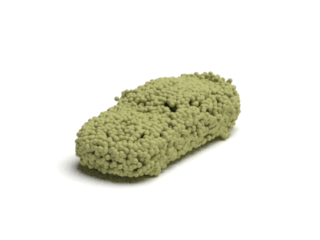}&
         \includegraphics[width=\sizea, trim={\tcl} {\tcb} {\tcr} {\tct},clip]{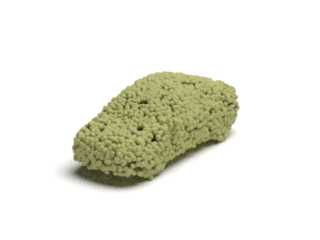}
         \\
        \includegraphics[width=\sizea, trim={\thl} {\thb} {\thr} {\tht},clip]{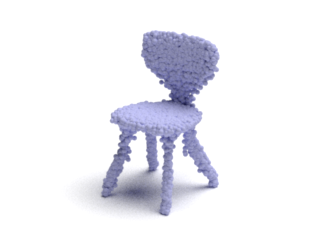}&
        \includegraphics[width=\sizea, trim={\thl} {\thb} {\thr} {\tht},clip]{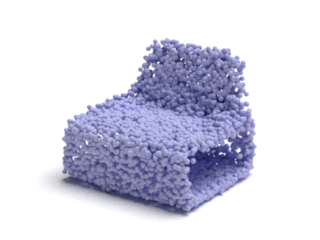}&
         \includegraphics[width=\sizea, trim={\thl} {\thb} {\thr} {\tht},clip]{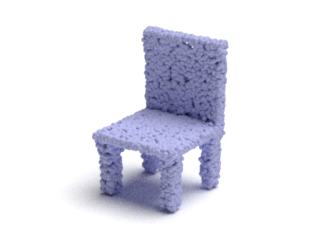}&
         \includegraphics[width=\sizea, trim={\thl} {\thb} {\thr} {\tht},clip]{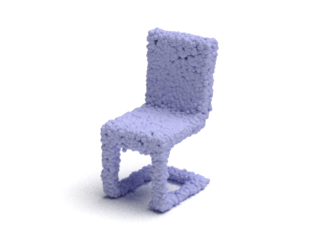}&
         \includegraphics[width=\sizea, trim={\thl} {\thb} {\thr} {\tht},clip]{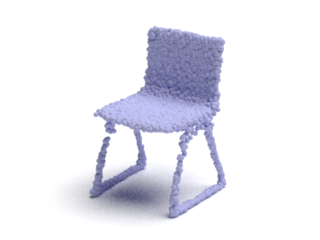}&
         \includegraphics[width=\sizea, trim={\thl} {\thb} {\thr} {\tht},clip]{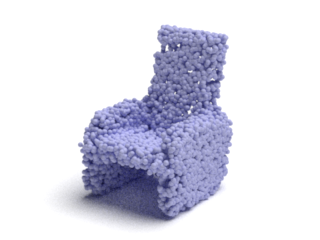}\\
         \includegraphics[width=\sizea, trim={\thl} {\thb} {\thr} {\tht},clip]{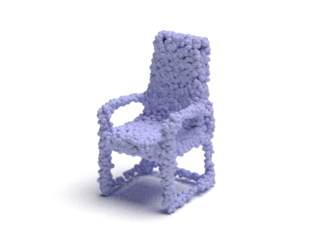}&
         \includegraphics[width=\sizea, trim={\thl} {\thb} {\thr} {\tht},clip]{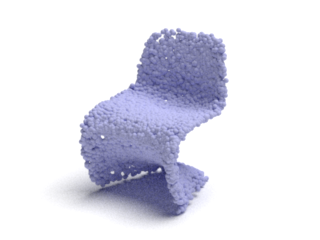}&
         \includegraphics[width=\sizea, trim={\thl} {\thb} {\thr} {\tht},clip]{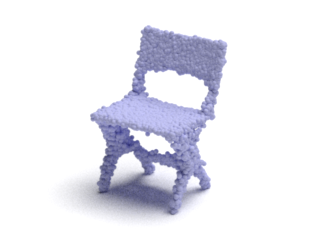}&
         \includegraphics[width=\sizea, trim={\thl} {\thb} {\thr} {\tht},clip]{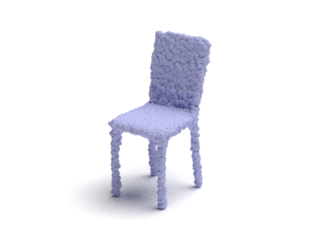}&
         \includegraphics[width=\sizea, trim={\thl} {\thb} {\thr} {\tht},clip]{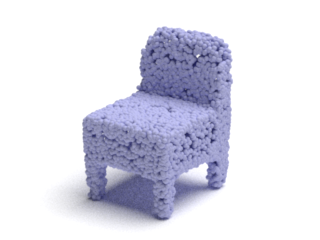}&
         \includegraphics[width=\sizea, trim={\thl} {\thb} {\thr} {\tht},clip]{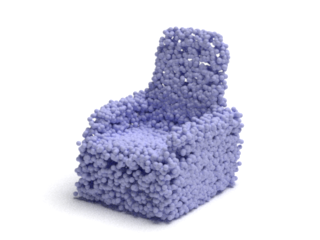}
        \\
    \end{tabular}

    \end{center}
    \caption{Examples of generation results on the Airplane, Car and Chair categories.
    }
    \label{fig:generation_supp}
\end{figure}

\begin{figure}
    \begin{center}
    \newcommand{\sizea}{0.16\linewidth}
    \newcommand{\tal}{1.0cm}
    \newcommand{\tab}{1.5cm}
    \newcommand{\tar}{1.0cm}
    \newcommand{\tat}{2cm}
    \newcommand{\tcl}{1.0cm}
    \newcommand{\tcb}{1.5cm}
    \newcommand{\tcr}{1.0cm}
    \newcommand{\tct}{2cm}
    \newcommand{\thl}{2.0cm}
    \newcommand{\thb}{0.5cm}
    \newcommand{\thr}{2.0cm}
    \newcommand{\tht}{0.5cm}
    \setlength{\tabcolsep}{0pt}
    \renewcommand{\arraystretch}{0}
    \begin{tabular}{@{}cccccc@{}}
        \includegraphics[width=\sizea, trim={\tal} {\tab} {\tar} {\tat},clip]{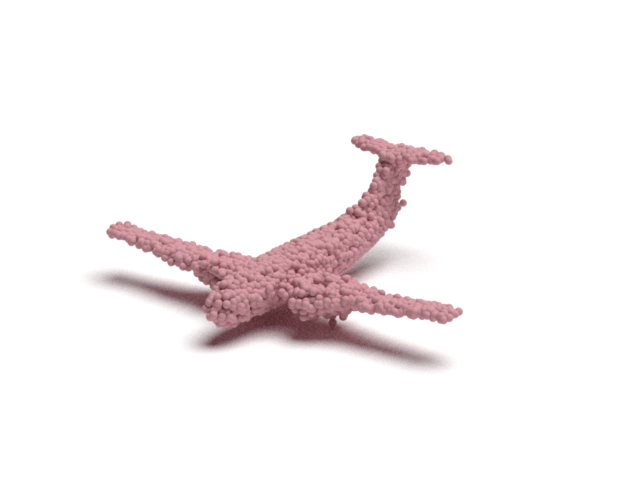}&
        \includegraphics[width=\sizea, trim={\tal} {\tab} {\tar} {\tat},clip]{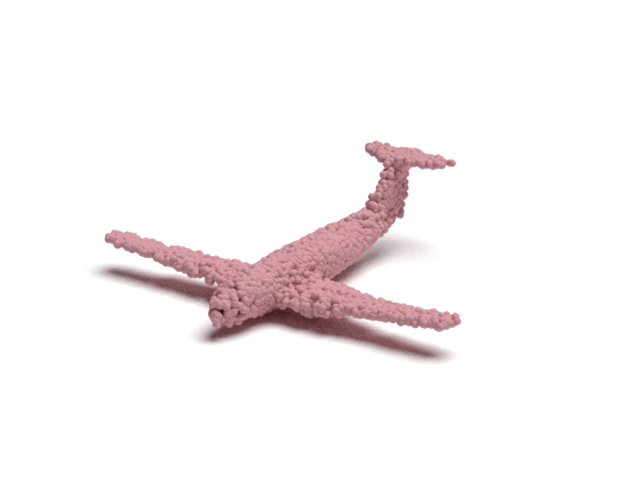}&
        \includegraphics[width=\sizea, trim={\tal} {\tab} {\tar} {\tat},clip]{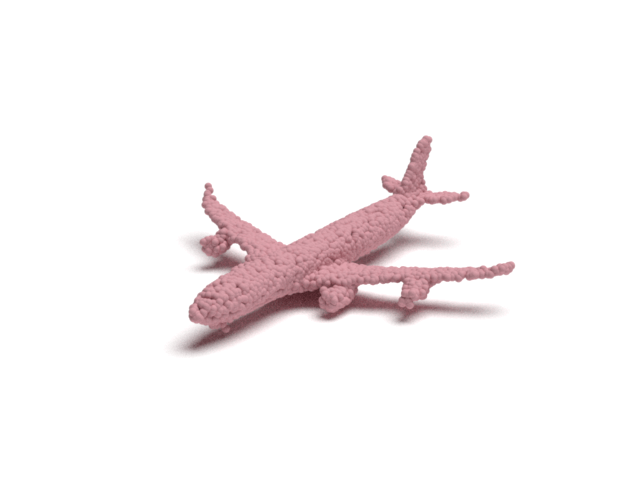}&
        \includegraphics[width=\sizea, trim={\tal} {\tab} {\tar} {\tat},clip]{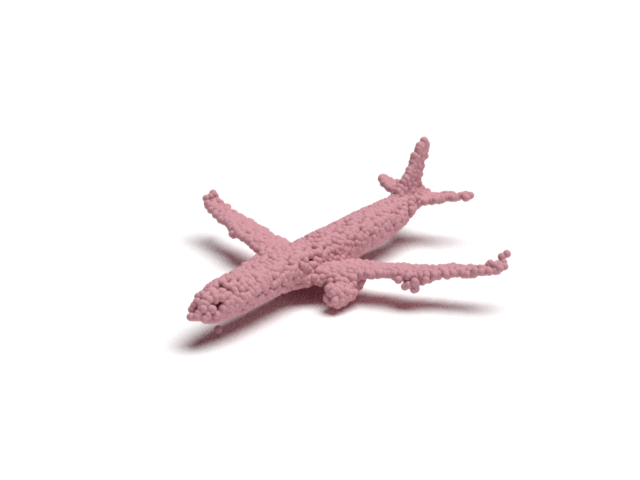}&
        \includegraphics[width=\sizea, trim={\tal} {\tab} {\tar} {\tat},clip]{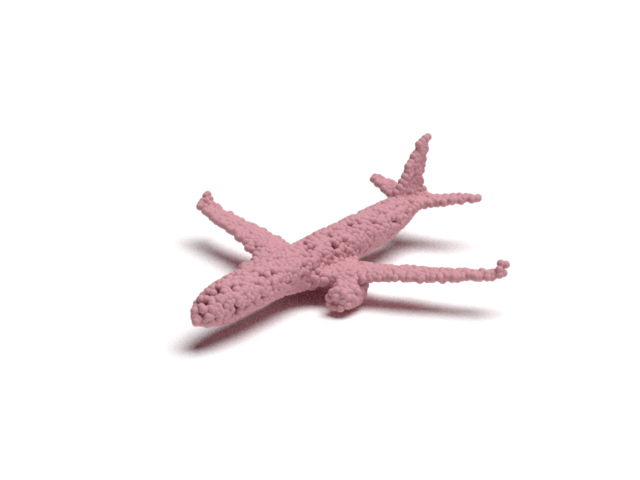}&
        \includegraphics[width=\sizea, trim={\tal} {\tab} {\tar} {\tat},clip]{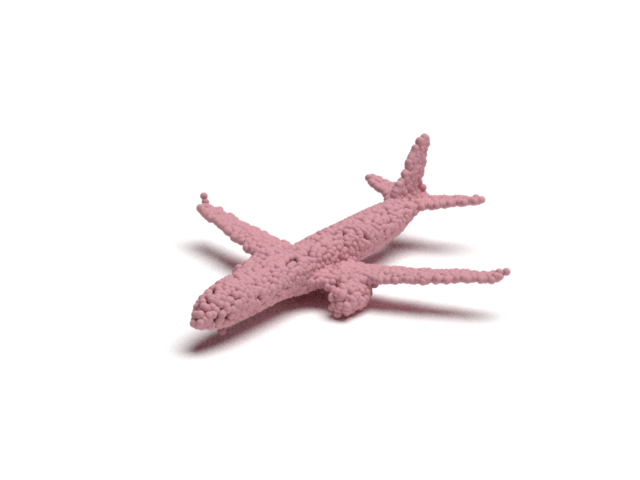}
        \\
        \includegraphics[width=\sizea, trim={\tcl} {\tcb} {\tcr} {\tct},clip]{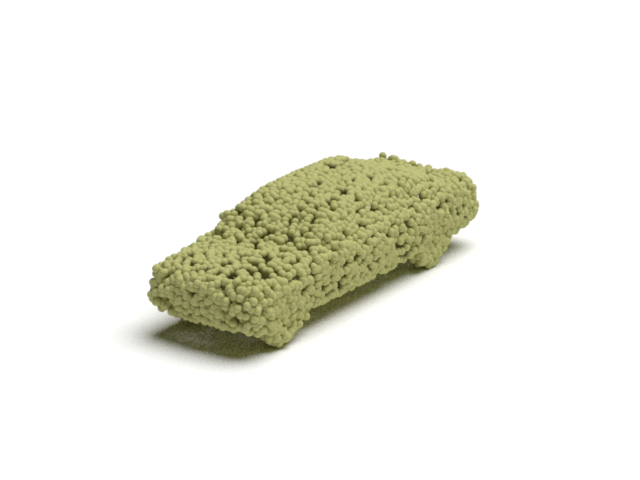} &
        \includegraphics[width=\sizea, trim={\tcl} {\tcb} {\tcr} {\tct},clip]{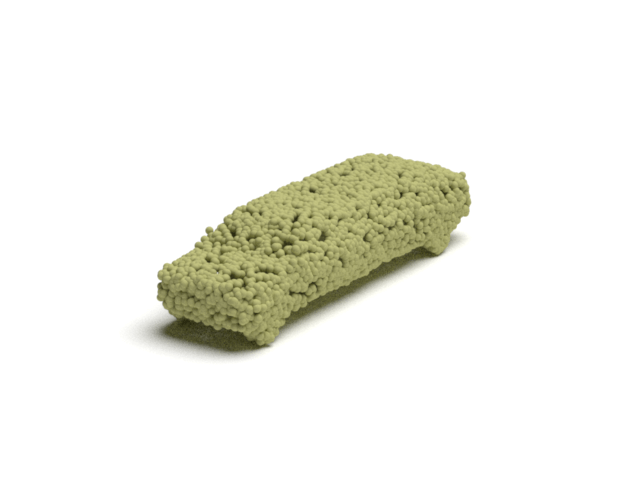} &
        \includegraphics[width=\sizea, trim={\tcl} {\tcb} {\tcr} {\tct},clip]{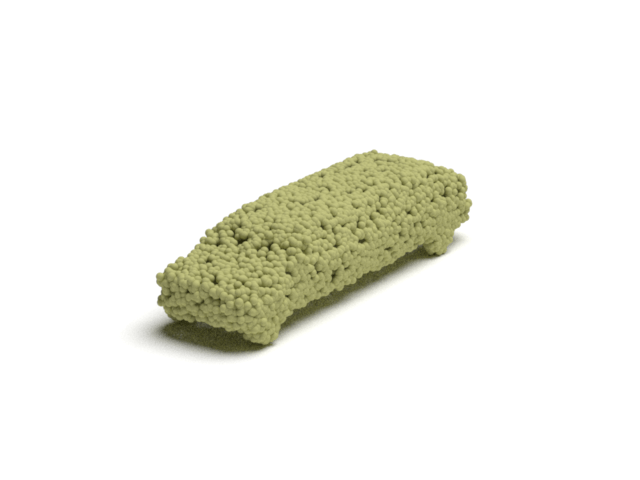} &
        \includegraphics[width=\sizea, trim={\tcl} {\tcb} {\tcr} {\tct},clip]{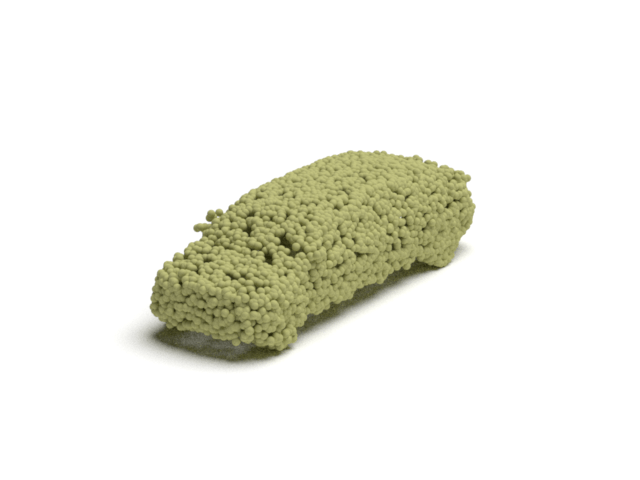} &
        \includegraphics[width=\sizea, trim={\tcl} {\tcb} {\tcr} {\tct},clip]{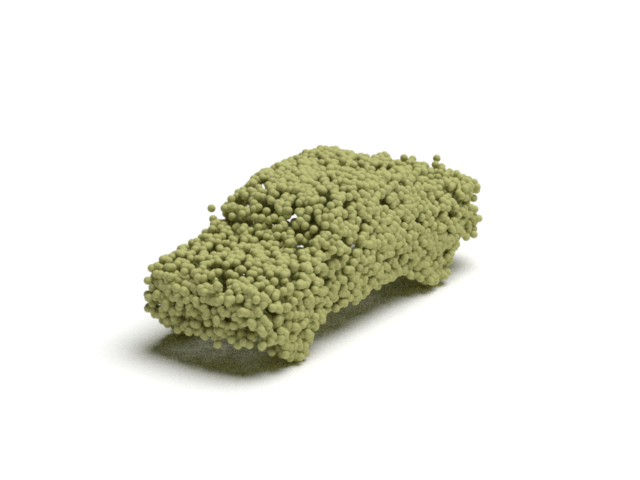} &
        \includegraphics[width=\sizea, trim={\tcl} {\tcb} {\tcr} {\tct},clip]{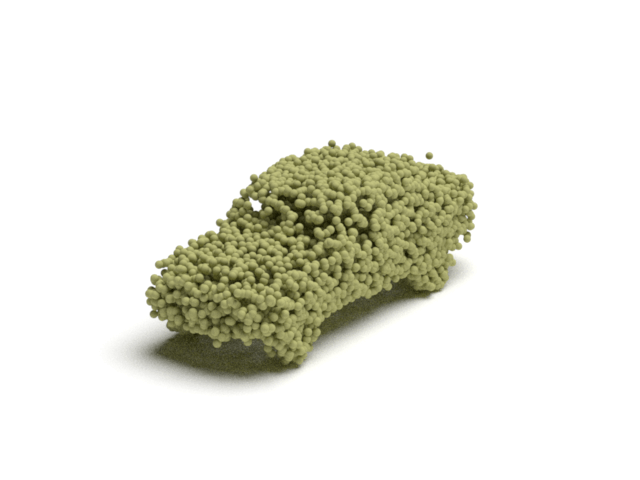} 
         \\
        \includegraphics[width=\sizea, trim={\thl} {\thb} {\thr} {\tht},clip]{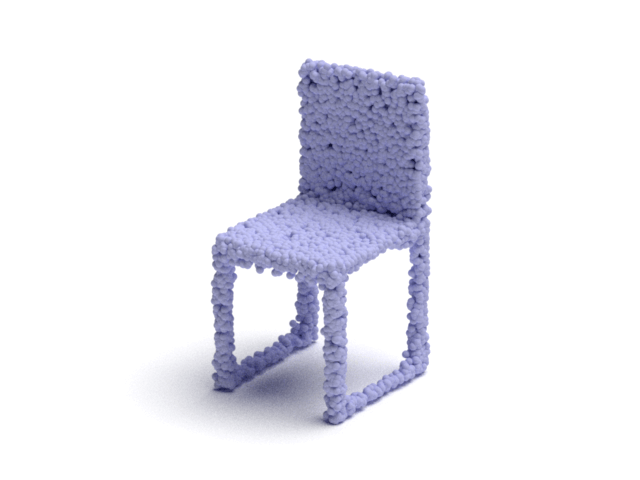}&
        \includegraphics[width=\sizea, trim={\thl} {\thb} {\thr} {\tht},clip]{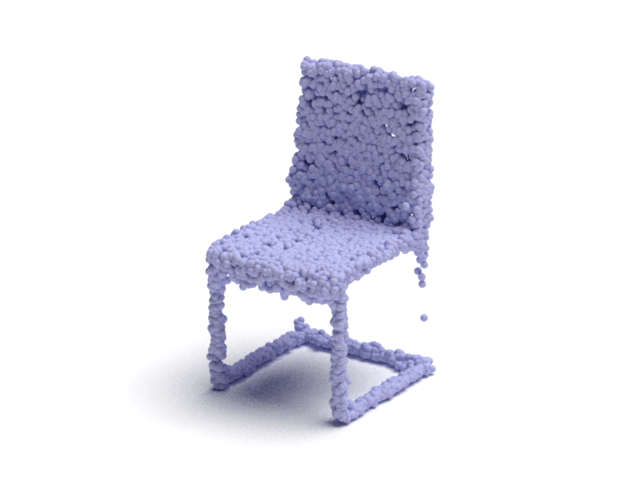}&
        \includegraphics[width=\sizea, trim={\thl} {\thb} {\thr} {\tht},clip]{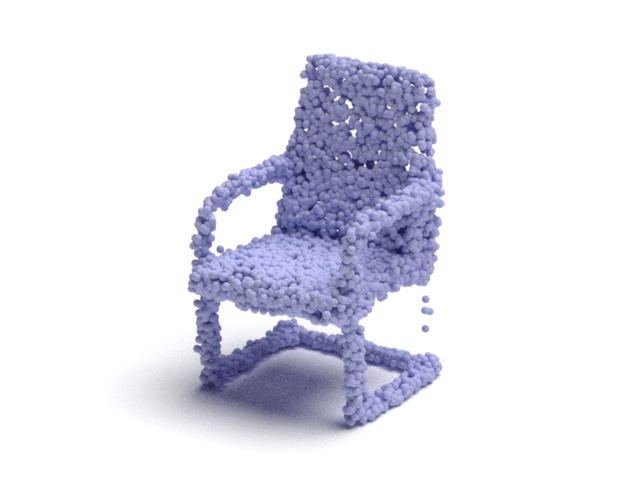}&
        \includegraphics[width=\sizea, trim={\thl} {\thb} {\thr} {\tht},clip]{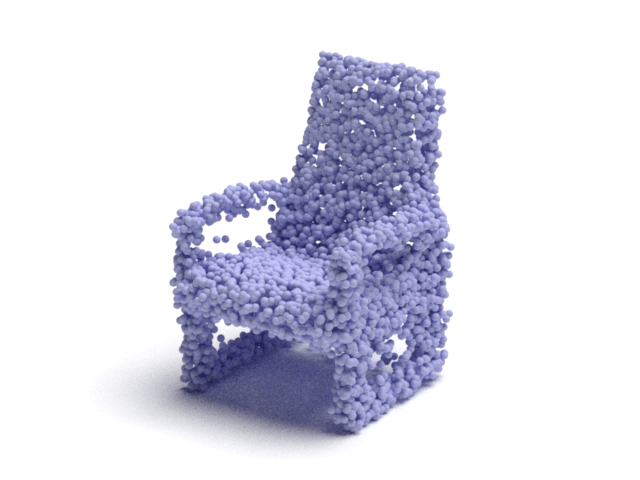}&
        \includegraphics[width=\sizea, trim={\thl} {\thb} {\thr} {\tht},clip]{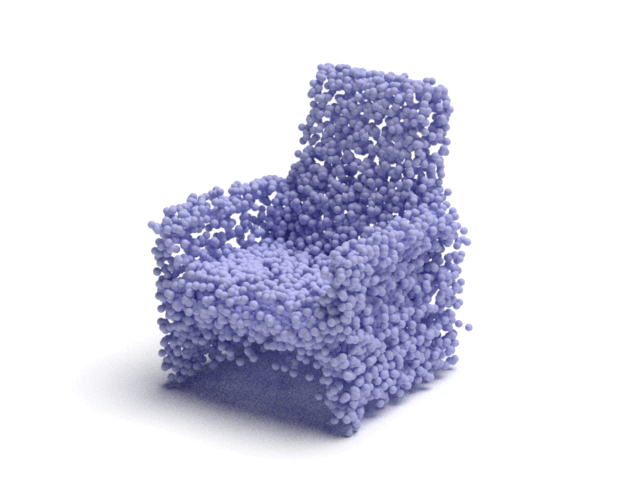}&
        \includegraphics[width=\sizea, trim={\thl} {\thb} {\thr} {\tht},clip]{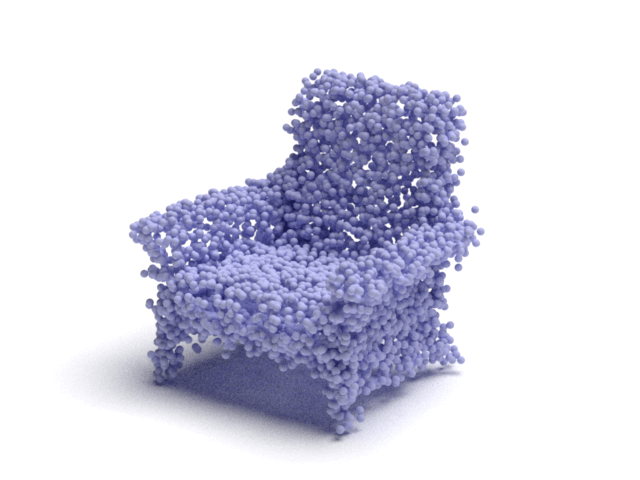}
        \\
        Sampled & Interp & Interp & Interp & Interp & Sampled
        \\
    \end{tabular}

    \end{center}
    \caption{Generation and interpolation results. Generated point clouds (Sampled) and the interpolated results between two generated shapes (Interp) are illustrated.
    }
    \label{fig:gen_interp}
\end{figure}
\end{appendix}

\end{document}